\def\##1\#{\begin{align}#1\end{align}}
\def\$#1\${\begin{align*}#1\end{align*}}
\def\tr{\mathop{\text{tr}}\kern.2ex}
\long\def\comment#1{}
\def\tr{\mathop{\text{Tr}}}
\def\cS{{\mathcal{S}}}
\def\cK{{\mathcal{K}}}
\def\cX{{\mathcal{X}}}
\def\cD{{\mathcal{D}}}
\def\cP{{\mathcal{P}}}
\def\cL{{\mathcal{L}}}
\def\cT{{\mathcal{T}}}
\def\cB{{\mathcal{B}}}
\def\tr{{\text{Tr}}}
\newcommand{\bel}{\begin{eqnarray}\label}
\newcommand{\eel}{\end{eqnarray}}
\newcommand{\bes}{\begin{eqnarray*}}
\newcommand{\ees}{\end{eqnarray*}}
\def\algoname{ \text{OP-TENET}}
\newcommand{\la}{\langle}
\newcommand{\ra}{\rangle}
\let\emptyset\varnothing
\def \to {{\tilde{o}}}
\def\mO{{\mathbb{O}}}
\def\mZ{{\mathbb{Z}}}
\def\mB{{\mathbb{B}}}
\def\overtau{{\overline{\tau}}}
\def\overbtau{{\overline{\btau}}}
\begin{document}

\title{Reinforcement Learning from Partial Observation:\\ Linear Function Approximation with\\ Provable Sample Efficiency}
\author
{\normalsize
Qi Cai\thanks{Northwestern University; 
\texttt{qicai2022@u.northwestern.edu}}
\qquad Zhuoran Yang\thanks{Yale University; 
\texttt{zhuoran.yang@yale.edu}}
\qquad Zhaoran Wang\thanks{Northwestern University; 
\texttt{zhaoranwang@gmail.com}}
}
\date{}
\maketitle


\begin{abstract}
We study reinforcement learning for partially observed Markov decision processes (POMDPs) with infinite observation and state spaces, which remains less investigated theoretically. To this end, we make the first attempt at bridging partial observability and function approximation for a class of POMDPs with a linear structure. In detail, we propose a reinforcement learning algorithm (\underline{Op}timistic Explora\underline{t}ion via Adv\underline{e}rsarial I\underline{n}tegral Equa\underline{t}ion or $\algoname$) that attains an $\epsilon$-optimal policy within $O(1/\epsilon^2)$ episodes. In particular, the sample complexity scales polynomially in the intrinsic dimension of the linear structure and is independent of the size of the observation and state spaces. 

The sample efficiency of $\algoname$ is enabled by a sequence of ingredients: (i) a Bellman operator with finite memory, which represents the value function in a recursive manner, (ii) the identification and estimation of such an operator via an adversarial integral equation, which features a smoothed discriminator tailored to the linear structure, and (iii) the exploration of the observation and state spaces via optimism, which is based on quantifying the uncertainty in the adversarial integral equation.
\end{abstract}

\section{Introduction}

Partial observability poses significant challenges for reinforcement learning, especially when the observation and state spaces are infinite. Given full observability, reinforcement learning is well studied empirically \citep{mnih2015human, silver2016mastering, silver2017mastering} and theoretically \citep{auer2008near, osband2016generalization, azar2017minimax, jin2018q, yang2020reinforcement, jin2020provably, ayoub2020model, kakade2020information, du2021bilinear}. In particular, for infinite state spaces, neural function approximators achieve remarkable successes empirically \citep{mnih2015human, berner2019dota, arulkumaran2019alphastar}, while linear function approximators become better understood theoretically \citep{yang2020reinforcement, jin2020provably, ayoub2020model, kakade2020information, du2021bilinear}. In contrast, reinforcement learning in partially observed Markov decision processes (POMDPs) is less investigated theoretically despite its prevalence in practice \citep{cassandra1996acting, hauskrecht2000planning, brown2018superhuman, rafferty2011faster}. 

More specifically, partial observability poses both statistical and computational challenges. From a statistical perspective, it is challenging to predict future rewards, observations, or states due to a lack of the Markov property. In particular, predicting the future often involves inferring the distribution of the state (also known as the belief state) or its functionals as a summary of the history, which is already challenging even assuming the (observation) emission and (state) transition kernels are known \citep{vlassis2012computational, golowich2022planning}. Meanwhile, learning the emission and transition kernels faces various issues commonly encountered in causal inference \citep{zhang2016markov}. For example, they are generally nonidentifiable \citep{kallus2021causal}. Even assuming they are identifiable, their estimation possibly requires a sample size that scales exponentially in the horizon and dimension \citep{jin2020sample}. Such statistical challenges are already prohibitive even for the evaluation of a policy \citep{nair2021spectral, kallus2021causal, bennett2021proximal}, which forms the basis of policy optimization. From a computational perspective, it is known that policy optimization is generally intractable \citep{vlassis2012computational, golowich2022planning}. Moreover, infinite observation and state spaces amplify both statistical and computational challenges. On the other hand, most existing results are restricted to the tabular setting \citep{azizzadenesheli2016reinforcement, guo2016pac, jin2020sample, xiong2021sublinear}, where the observation and state spaces are finite.

In this paper, we study linear function approximation in POMDPs to address the statistical challenges amplified by infinite observation and state spaces. In particular, our contribution is fourfold. First, we define a class of POMDPs with a linear structure and identify an ill conditioning measure for sample-efficient reinforcement learning. Such an ill conditioning measure corresponds to the undercompleteness in the tabular setting \citep{jin2020sample}. Second, we propose a reinforcement learning algorithm ($\algoname$), which applies to any POMDP admitting the aforementioned linear structure. Moreover, we use a minimax optimization formulation in $\algoname$ such that the algorithm can be implemented in a computation-efficient manor even if the dataset is large. Third, we prove in theory that $\algoname$ attains an $\epsilon$-optimal policy within $O(1/\epsilon^2)$ episodes. In particular, the sample complexity scales polynomially in the intrinsic dimension of the linear structure and is independent of the size of the observation and state spaces. Fourth, our algorithm and analysis are based on new tools. In particular, the sample efficiency of $\algoname$ is enabled by a sequence of ingredients: (i) a Bellman operator with finite memory, which represents the value function in a recursive manner, (ii) the identification and estimation of such an operator via an adversarial integral equation, which features a smoothed discriminator tailored to the linear structure, and (iii) the exploration of the observation and state spaces via optimism, which is based on quantifying the uncertainty in the adversarial integral equation.

\subsection{Related Work}

Our work is related to a line of recent work on the sample efficiency of reinforcement learning for POMDPs. In detail, \citet{azizzadenesheli2016reinforcement, guo2016pac, xiong2021sublinear} establish sample complexity guarantees for searching the optimal policy in POMDPs whose models are identifiable and can be estimated by spectral methods. However, \citet{azizzadenesheli2016reinforcement} and \citet{guo2016pac} add extra assumptions such that efficient exploration of the POMDP can always be achieved by running arbitrary policies. In contrast, the upper bound confidence (UCB) method is used in \citet{xiong2021sublinear} for adaptive exploration. However, they require strictly positive state transition and observation emission kernels to ensure fast convergence to the stationary distribution. The more related work is \citet{jin2020sample}, which considers undercomplete POMDPs, in other words, the observations are more than the latent states. Their proposed algorithm can attain the optimal policy without estimating the exact model, but an observable component \citep{jaeger2000observable, hsu2012spectral}, which is the same for our algorithm design, while only applies to tabular POMDPs. 

In a broader context of reinforcement learning with partial observability, our work is related to several recent works on POMDPs with special structures. For example, \citet{kwon2021rl} considers latent POMDPs, where each process has only one latent state, and the proposed algorithm efficiently infers the latent state using a short trajectory. \citet{kozuno2021model} considers POMDPs having tree-structured states with their positions in certain partitions being the observations. Compared with general POMDPs, these specially structures reduce the complexity of finding the optimal actions, and the corresponding algorithms use techniques closer to those for MDPs. Also, the aforementioned literature only consider tabular POMDPs.

In the contexture of reinforcement learning with function approximations, our work is related to a vast body of recent progress \citep{yang2020reinforcement, jin2020provably,  cai2020provably, du2021bilinear, kakade2020information, agarwal2020pc, zhou2021nearly, ayoub2020model} on the sample efficiency of reinforcement learning for MDPs with linear function approximations. These works characterize the uncertainty in the regression for estimating either the model or value function of an MDP and use the uncertainty as a bonus on the rewards to encourage exploration. However, none of these approaches directly apply to POMDPs due to the latency of the states.

\subsection{Notation} For any discrete or continuous set $\cX$ and $p\in\NN$, we denote by $L^p(\cX)$ the $L^p$ space of functions over $\cX$, and $\Delta(\cX)$ the set of probability density functions over $\cX$ when $\cX$ is continuous or probability mass functions when $\cX$ is discrete. 
For any $d\in\NN$, we denote by $[d]$ the set of integers from $1$ to $d$. For a vector $v$ and a matrix $M$, we denote by $[v]_i$ the $i$-th entry of $v$ and $[M]_{i,j}$ the entry of $M$ at the $i$-th row and $j$-th column. We denote by $\|\cdot\|_p$ the $\ell^p$-norm of a vector or $L^p$-norm of a function. Also, for an operator $M$, we denote by $\|M\|_{p\mapsto q}$ the operator norm of $M$ induced by the $\ell^p$-norm or $L^p$-norm of the domain and $\ell^q$-norm or $L^q$-norm of the range. We use the notation ${\rm linspan}(\cdot)$ and ${\rm conh}(\cdot)$ to represent the linear span and convex combination, respectively.


\section{Background}
\subsection{POMDPs}\label{prel1}

\begin{figure}
\begin{center}
 \tikz{
 \node[latent, minimum size=10mm] (s1) {$\bs_1$};
 \node[latent, xshift=3cm, minimum size=10mm] (s2) {$\bs_2$};
 \node at (5, 0) {$\cdots$};\node at (5, -1.5) {$\cdots$};
 \node at (5, -3) {$\cdots$};\node at (5, -4.5) {$\cdots$};
 \node at (12, 0) {$\cdots$};\node at (12, -1.5) {$\cdots$};
 \node at (12, -3) {$\cdots$};\node at (12, -4.5) {$\cdots$};
 \node[latent, xshift=7cm, minimum size=10mm] (sh) {$\bs_h$};
 \node[latent, xshift=10cm, minimum size=10mm] (sh+1) {$\bs_{h+1}$};
 \node[obs, yshift=-3cm, minimum size=10mm] (o1) {$\bo_1$};
 \node[obs, xshift=3cm, yshift=-3cm, minimum size=10mm] (o2) {$\bo_2$};
 \node[obs, xshift=7cm, yshift=-3cm, minimum size=10mm] (oh) {$\bo_h$};
 \node[obs, xshift=10cm, yshift=-3cm, minimum size=10mm] (oh+1) {$\bo_{h+1}$};
 \node[obs, xshift=1.5cm, yshift=-1.5cm, minimum size=10mm] (a1) {$\ba_1$};
 \node[obs, xshift=8.5cm, yshift=-1.5cm, minimum size=10mm] (ah) {$\ba_h$};
 \node[obs, xshift=1.5cm, yshift=-4.5cm, minimum size=10mm] (r1) {$\br_1$};
 \node[obs, xshift=8.5cm, yshift=-4.5cm, minimum size=10mm] (rh) {$\br_h$};
 \node at (-1.2, -3) {$\btau_h$};\node at (-1.2, -1.5) {$\overbtau_h$};
 \draw [black] (7.8, -2.4) rectangle (-2, -3.6);
 \draw [black] (7.9, -0.7) rectangle (-2.1, -3.7);
 \edge{s1}{o1};\edge{s2}{o2};\edge{sh}{oh};\edge{sh+1}{oh+1};
 \edge{s1,a1}{s2};\edge{sh, ah}{sh+1};
 \edge{o1,a1}{r1};\edge{oh, ah}{rh};
 \edge{o1}{a1};
 \draw[->] (7.6, -2.4)--(ah)
 }
\end{center}
\caption{Directed acyclic graph of a POMDP. Here, we denote by $\btau_h=(\bo_1,\ldots,\bo_h)$ the observation history and $\overbtau_h=(\bo_1,\ba_1,\ldots,\bo_{h-1},\ba_{h-1},\bo_h)$ the full history. See Section \ref{prel1} for more details.}
\label{fig1}
\end{figure}


We consider an episodic POMDP $(\cS, \cA, \cO, H, \cT, \cE, \mu, r)$, where $\cS$, $\cA$, and $\cO$ are the state, action, and observation spaces, respectively, $H$ is the length of each episode, $\cT$ is the state transition kernel from a state-action pair to the next state, $\cE$ is the observation emission kernel from a state to its observation, $\mu$ is the initial state distribution, and $r: \cO\times\cA\rightarrow[0,1]$ is the reward function defined on the observation and action for each step. We assume that the action space $\cA$ has a finite size $A\in\NN$, but the state space $\cS$ and observation space $\cO$ can be infinite (with finite dimensions). Also, we consider the nonhomongeneous setting so that the state transition kernel and observation emission kernel can be different across each step. Hence, we use a subscript $h\in\NN$ to index the step. At the beginning of each episode, the agent receives the initial state $\bs_1\sim\mu$. Then, the agent interacts with the environment as follows. At the $h$-th step, the agent receives the observation $\bo_h\sim\cE_h(\cdot\,|\,\bs_h)$, takes an action $\ba_h$ based on the observation history 
\#\label{taudef}
\btau_h=(\bo_1,\ldots,\bo_h),
\#
and receives the reward $\br_h=r(\bo_h,\ba_h)$. Any mapping $\pi$ from the observation history to the action is called a (deterministic) policy. We denote by $\Pi$ the set of all such mappings. Note that the policy does not use the action history as an input. Such a restriction does not exclude the optimal policy, as the action history can be decoded from the observation history. Subsequently, the agent receives the next state $\bs_{h+1}$ following $\bs_{h+1}\sim\cT_h(\cdot\,|\,\bs_h,\ba_h)$. See Figure \ref{fig1} for an illustration.

In a reinforcement learning problem, the environment is unknown, that is, the state transition kernel $\cT$ and observation emission kernel $\cE$ are unknown. We denote by $\{(\cT^{\theta}, \cE^\theta): \theta\in\Theta\}$ the candidate class of $\cT$ and $\cE$, where $\theta$ is the parameter and $\Theta$ is the set of the parameter. We assume that the realizability condition holds, that is, there exists a parameter $\theta^*\in\Theta$ such that $\cT=\cT^{\theta^*}$ and $\cE=\cE^{\theta^*}$. Without loss of generality and for ease of presentation, we assume that $\mu$, $\cT_1$, $\cE_1$, and $\cE_2$ are known, which only account for the initialization. The goal is to find a policy that maximizes the expected total reward, that is,
\#\label{09061110}
&\pi^*=\argmax_{\pi\in\Pi}J(\theta^*,\pi), \notag\\
&\text{where}~
J(\theta,\pi)=\EE_{\theta,\pi}\Bigl[ \sum_{h=1}^H \br_h \Bigr]~ 
\text{for any $(\theta,\pi)\in\Theta\times\Pi$.}
\#
Here, we write $\theta$ and $\pi$ as the subscripts of the expectation to denote that the parameter of the state transition kernel and observation emission kernel have the parameter $\theta$ and the actions follow the policy $\pi$. In the sequel, we drop the subscript $\pi$ if the expectation does not depends on it.

\noindent
\textbf{Additional Notation:} Recall that we denote by $\Pi$ the set of all policies. For notational simplicity, we denote by $\overline{\Pi}$ the set of mixing policies. A mixing policy selects a policy from $\Pi$ randomly and executes such a policy throughout the episode. For any $h\in\NN$, we denote by $\overbtau_h$ the full history,
\#\label{def-full-history}
\overbtau_h=(\bo_1,\ba_1,\ldots,\bo_{h-1},\ba_{h-1},\bo_h),
\#
which includes the action history. We denote by $\Gamma_h$ and $\overline{\Gamma}_h$ the sets of all histories $\btau_h$ and $\overline{\btau}_h$, respectively. Throughout the paper, we use bold letters for states, actions, and observations to emphasize that they are random variables in a POMDP, whose parameter and policy are specified in the context, while we use regular letters when they are deterministic values.

\subsection{Linear Function Approximations}\label{prel2}
We specify the candidate class of the state transition kernel $\cT$ and observation emission kernel $\cE$. We define the following function classes of the conditional state distribution. In detail, we define
\$
\cF_{\rm s}&
=\{ p_{\theta}(\bs_h=\cdot\,|\,\bs_{h-1}=s,\ba_{h-1}=a): (h,\theta,s,a)\in[H]\times\Theta\times\cS\times\cA\}, \\
\cF'_{\rm s}&=\{
 p_{\theta,\pi}(\bs_h=\cdot\,|\,\bo_{h+1}=o,\ba_h=a):   (h,\theta,o,a,\pi)\in[H]\times\Theta\times\cO\times\cA\times\overline{\Pi} \}.
\$
Here, $p(\cdot)$ is the probability density function when the state space $\cS$ is continuous and the probability mass function when $\cS$ is discrete. The subscripts $\theta$ and $\pi$ follow from \eqref{09061110}. Note that conditioning on $\ba_h=a$ means that the agent takes the action $a$ at the $h$-th step regardless of the observation history, while the agent takes the other actions following the policy $\pi$ as specified in the subscript. In the causal inference literature \citep{pearl2009causal}, our notation corresponds to ${\rm do}(\ba_h=a)$, which denotes the interventional distribution and differs from the observational distribution. Throughout this paper, we follow such a convention. Also, note that $\cF_{\rm s}$ corresponds to the state distribution conditioning on the past, while $\cF'_{\rm s}$ corresponds to that conditioning on the future. As a special case, we have $\mu\in\cF_{\rm s}$ for $h=1$ since $\bs_0$ and $\ba_0$ do not exist. We define the following function class of the conditional observation distribution,
\$
\cF_{\rm o}&=\{
p_{\theta,\pi}(\bo_{h:h+2}=\cdot 
\,\bigl|\,\ba_h=a, \ba_{h+1}=a'):  
(h,\theta,a,a',\pi)\in[H]\times\Theta\times\cA^2\times\overline{\Pi} \}.
\$
The following assumption restricts the above function classes to two low-dimensional subspaces.

\begin{assumption} [Linear Function Approximations]\label{asmp1}
There exist $d_{\rm s}, d_{\rm o}\in\NN$ and known distribution functions $\{\psi_i\}_{i=1}^{d_{\rm s}}\subset\Delta(\cS)$ and $\{\phi_i\}_{i=1}^{d_{\rm o}}\subset\Delta(\cO^3)$ such that we have 
\begin{itemize}
\item $\cF_{\rm s},\cF'_{\rm s}\subset{\rm conh}(\{\psi_i\}_{i=1}^{d_{\rm s}})$, \item $\cF_{\rm o}\subset{\rm conh}(\{\phi_i\}_{i=1}^{d_{\rm o}})$.
\end{itemize}
\end{assumption}

For ease of presentation, we denote $\{\psi_i\}_{i=1}^{d_{\rm s}}$ and $\{\phi_i\}_{i=1}^{d_{\rm o}}$ by $\psi$ and $\phi$, respectively, for the rest of the paper. Assumption \ref{asmp1} requires that $\cF_{\rm s}$, $\cF'_{\rm s}$, and $\cF_{\rm o}$ are linearly represented by known bases $\psi$ and $\phi$. See, for example, \citet{du2021bilinear} for the corresponding assumption in MDPs. Note that, when $\psi$ and $\phi$ are the one-hot functions over $\cS$ and $\cO^3$, respectively, we recover the tabular setting \citep{jin2020sample}.

The following assumption ensures that the observation is informative for the state. For any $(h,\theta)\in[H]\times\Theta$, we define the observation operator $\mO^\theta_{h}:L^1(\cS)\rightarrow L^1(\cO)$ by
\#\label{emissionopdef}
(\mO^\theta_{h}f)(o)
=\int_\cS \cE^\theta_{h}(o\,|\,s) \cdot f(s)\ud s, 
\quad \text{for any $f\in L^1(\cS)$ and $o\in\cO$},
\#
which maps a state distribution to the observation distribution. 

\begin{assumption}[Invertible Observation Operators]\label{asmp2}
For any $(h,\theta)\in[H]\times\Theta$, there exist a known function $\cZ^\theta_h:\cS\times\cO\rightarrow\RR$ and the linear operator $\mZ^\theta_{h}:L^1(\cO)\rightarrow L^1(\cS)$ defined by
\$
(\mZ^\theta_{h}f)(s)=\int_\cO \cZ^\theta_{h}(s,o) \cdot f(o)\ud o,\quad
\text{for any $f\in L^1(\cO)$ and $s\in\cS$}
\$ 
such that we have
\begin{itemize}
\item $\mZ^\theta_{h}\mO^\theta_{h}f=f$ for any $f\in{\rm linspan}(\psi)$,
\item $\|\mZ^\theta_{h}\|_{1\mapsto1}\le\gamma$ for a constant $\gamma>0$.
\end{itemize}
\end{assumption}

Assumption \ref{asmp2} requires that the observation operator $\mO^\theta_{h}$ defined on ${\rm linspan}(\psi)$ is injective, which implies that it has a left inverse $\mZ^\theta_{h}$. Note that the domain of $\mZ^\theta_{h}$ naturally extends to $L^1(\cO)$. In other words, the observation distribution carries the full information of the state distribution. The (upper bound of the) operator norm $\gamma$ is a measure of ill conditioning, which quantifies the fundamental difficulty, in terms of the information-theoretic limit, of reinforcement learning in the POMDP. See more discussion in Section \ref{linearpomdp}, where we prove that both Assumptions \ref{asmp1} and \ref{asmp2} hold if the state transition kernel and observation emission kernel admit certain a structure. Correspondingly, we provide a detailed form of the function $\cZ^\theta_h$ in Section \ref{linearpomdp}. Also, we illustrate the connection to the tabular setting \citep{jin2020sample} therein.

\section{Algorithm}\label{sec-algo}

In this section, we first introduce the finite-memory Bellman operator in Section \ref{algo1} and discuss its estimation in Section \ref{algo2}. Then, we present Algorithm \ref{algo}, which performs optimistic exploration on top of operator estimation, in Section \ref{algo3}.

\subsection{Finite-Memory Bellman Operator}\label{algo1}
To cast a POMDP as an MDP, it is necessary to aggregate the observation history and action history as the ``state" in an MDP to retrieve the Markov property. In detail, for any $(h,\theta,\pi)\in[H]\times\Theta\times\Pi$, we define the full-memory Bellman operator $\PP^{\theta,\pi}_{h}: L^\infty(\overline{\Gamma}_{h+1}) \rightarrow L^\infty(\overline{\Gamma}_{h})$ by
\#\label{914756}
(\PP^{\theta,\pi}_{h} f)(\overtau_{h})
&=\EE_{\theta,\pi}[ f(\overbtau_{h+1})  \,|\, \overbtau_{h}=\overtau_{h}]\\
&=\int_{\cO}f\bigl(\overtau_{h}, \pi(\tau_h), o_{h+1}\bigr)  \cdot p_{\theta}
\bigl(\bo_{h+1}=o_{h+1} \, |\, 
\overbtau_{h}=\overtau_{h}, \ba_h=\pi(\tau_h) \bigr) \ud o_{h+1}, \notag
\#
for any $f\in L^\infty(\overline{\Gamma}_{h+1})$ and $\overtau_{h}\in\overline{\Gamma}_{h}$. Here, the second equality follows from $\overbtau_{h+1}=(\overbtau_h, \ba_h, \bo_{h+1})$ with $\ba_h=\pi(\overbtau_h)$, which is defined in \eqref{def-full-history}. In the sequel, the function $f$ is set as the expected total reward conditioning on the $(h+1)$-step full history $\overtau_{h+1}\in\overline{\Gamma}_{h+1}$ and $\PP^{\theta,\pi}_{h}$ maps it to the $h$-step counterpart, which resembles backward induction or dynamic programming in MDPs. We denote by $R: \overline{\Gamma}_{H+1}\rightarrow [0, H]$ the function that maps the $(H+1)$-step full history to the total reward, that is,
\#\label{1004937}
R(\overtau_{H+1})=r(o_1,a_1)+\cdots+r(o_{H},a_{H}),
\quad
\text{for any $\overtau_{H+1}\in\overline{\Gamma}_{H+1}$.}
\#
For any $h\in[H]$, the expected total reward satisfies
\#\label{914808}
\EE_{\theta,\pi}\Bigl[\sum_{i=1}^H \br_i \,\Big|\,\overbtau_h=\overtau_h\Bigr]=(\PP^{\theta,\pi}_{h}\cdots \PP^{\theta,\pi}_{H} R)(\overtau_h),
\quad
\text{for any $\overtau_h\in\overline{\Gamma}_h$,}
\#
where the equality follows from recursively applying \eqref{914756} and the tower property of conditional expectation. A direct idea is to evaluate a policy $\pi$ by estimating the parameter $\theta$ in \eqref{914808} and optimize $\pi$ in an iterative manner. However, estimating the operator $\PP^{\theta,\pi}_{h}$ suffers from the curse of dimensionality since it requires estimating a distribution conditioning on the $h$-step full history $\overbtau_h$, which is high-dimensional.

\begin{figure}[!ht]
\begin{center}
 \tikz{
  \draw [draw=black, fill=black!3] (12, 3.5) rectangle (5, 0.1);
 \node[obs, minimum size=1cm] (th-1) {$\overbtau_{h-1}$};
 \node[latent, xshift=3cm, yshift=0.7cm, minimum size=1cm] (sh-1) {$\bs_{h-1}$};
 \node[latent, xshift=6cm, yshift=0.7cm, minimum size=1cm] (sh) {$\bs_{h}$};
 \node[latent, xshift=9cm, yshift=0.7cm, minimum size=1cm] (sh+1) {$\bs_{h+1}$};
 \node[latent, xshift=6cm, yshift=-0.7cm, minimum size=1cm] (tsh) {$\tilde{\bs}_{h}$};
 \node[latent, xshift=9cm, yshift=-0.7cm, minimum size=1cm] (tsh+1) {$\tilde{\bs}_{h+1}$};
 \node[obs, xshift=3cm, yshift=-0.7cm, minimum size=1cm] (ah-1) {$\ba_{h-1}$};
 \node[obs, xshift=6cm, yshift=2.7cm, minimum size=1cm] (oh) {$\bo_h$};
 \node[obs, xshift=9cm, yshift=2.7cm, minimum size=1cm] (oh+1) {$\bo_{h+1}$};
 \node[obs, xshift=6cm, yshift=-2.7cm, minimum size=1cm] (toh) {$\tilde{\bo}_h$};
 \node[obs, xshift=9cm, yshift=-2.7cm, minimum size=1cm] (toh+1) {$\tilde{\bo}_{h+1}$};
 \node[obs, xshift=7.5cm, yshift=1.7cm, minimum size=1cm] (ah) {$\ba_h$};
 \node[obs, xshift=7.5cm, yshift=-1.7cm, minimum size=1cm] (tah) {$\tilde{\ba}_h$};
 \node at (10.6, -1.7) {(replicate)}; \node at (10.6, 1.7) {(original)}; 
 \edge{th-1}{sh-1}; \edge{th-1}{ah-1};
 \edge{sh-1,ah-1}{sh}; \edge{sh-1,ah-1}{tsh}; 
 \edge{sh}{sh+1}; \edge{sh}{oh}; \edge{sh,ah}{sh+1};\edge{sh+1}{oh+1};
 \edge{tsh}{tsh+1}; \edge{tsh}{toh}; \edge{tsh,tah}{tsh+1};\edge{tsh+1}{toh+1};
 \path[every node/.style={font=\sffamily\small}] (th-1) edge[->, bend left=20] (ah);
 \path[every node/.style={font=\sffamily\small}] (th-1) edge[->, bend right=20] (tah);
 \edge{oh}{ah};\edge{toh}{tah};
 \draw [dashed] (12, -0.1) rectangle (5, -3.5);
 }
\end{center}
\caption{Illustration of the variables in the definition of $\mB^{\theta,\pi}_h$ in \eqref{bdef} and \eqref{b-func-def}. In detail, $\tilde{\bs}_h$ is an independent replicate of $\bs_{h}$, that is, they are independent and identically distributed conditioning on $\bs_{h-1}$ and $\ba_{h-1}$. Note that $\tilde{\bs}_h$ is constructed for ease of presentation, and does not exist in practice. Then, the action $\tilde{\ba}_{h}$, state $\tilde{\bs}_{h+1}$, and observations $\tilde{\bo}_{h}, \tilde{\bo}_{h+1}$ are similarly defined. In other words, their distribution conditioning on $\tilde{\bs}_h$ and $\overbtau_{h-1}$ mirrors the distribution of the action $\ba_{h}$, state $\bs_{h+1}$, and observations $\bo_{h}, \bo_{h+1}$  conditioning on $\bs_h$ and $\overbtau_{h-1}$. For notational simplicity, we define the tail-mirrored full history $\overbtau_{h}^\dagger=(\overbtau_{h-1}, \ba_{h-1}, \tilde{\bo}_h)$ and tail-mirrored observation history $\btau_{h}^\dagger=(\btau_{h-1}, \tilde{\bo}_h)$.}
\label{fig2}
\end{figure}

\noindent
\textbf{From Full Memory to Finite Memory:} We propose to bypass such an issue by exploiting the independence between the past observation and future observation conditioning on the current state. In detail, for any $(h,\theta,\pi)\in[H]\times\Theta\times\Pi$, we define the finite-memory Bellman operator $\mB^{\theta,\pi}_h: L^\infty(\overline{\Gamma}_{h+1}) \rightarrow L^\infty(\overline{\Gamma}_{h})$ by
\#\label{bdef}
&(\mB^{\theta,\pi}_{h} f)(\overtau_{h})=\int_{\cO^2}
f(\overtau_{h}^\dagger, \pi(\tau_h^\dagger), \tilde{o}_{h+1})   \cdot \cB^{\theta}_{h, \pi(\tau_h^\dagger)}(o_h,\tilde{o}_h, \tilde{o}_{h+1})\ud \tilde{o}_h\ud \tilde{o}_{h+1}, 
\#
for any $f\in L^\infty(\overline{\Gamma}_{h+1})$ and $\overtau_{h}\in\overline{\Gamma}_h$. Here, the tail-mirrored full history $\overtau_{h}^\dagger$ and tail-mirrored observation history $\tau_{h}^\dagger$ are defined by
\#\label{tail-mirrored-def}
\overtau_{h}^\dagger=(\overtau_{h-1}, a_{h-1}, \tilde{o}_h),
\quad
\tau_{h}^\dagger=(\tau_{h-1}, \tilde{o}_h),
\#
which switch the last observation $o_h$ by $\tilde{o}_h$ in the full history $\overtau_h$ and observation history $\tau_h$, that is, $\overtau_h=(\overtau_{h-1}, a_{h-1}, o_h)$ and $\tau_h=(\tau_{h-1}, o_h)$. Also, the function $\cB^{\theta}_{h, a}: \cO^3\rightarrow\RR$ is defined by
\#\label{b-func-def}
\cB^\theta_{h,a}(o_h, \tilde{o}_h, \tilde{o}_{h+1}) 
&=\int_{\cS} 
p_{\theta}(\tilde{\bo}_h=\tilde{o}_h, \tilde{\bo}_{h+1}=\tilde{o}_{h+1} \,|\, \tilde{\bs}_h=\tilde{s}_h, \tilde{\ba}_h=a)
\cdot \cZ^\theta_{h}(\tilde{s}_h, o_h) \ud \tilde{s}_h,
\#
for any $o_h, \tilde{o}_h, \tilde{o}_{h+1}\in\cO$ and $a\in\cA$, where the function $\cZ^\theta_h$ is defined in Assumption \ref{asmp2}. Figure \ref{fig2} illustrates the (random) variables in \eqref{bdef} and \eqref{b-func-def}. In detail, $\tilde{\bs}_h$ is an independent replicate of $\bs_{h}$, that is, they are independent and identically distributed conditioning on $\bs_{h-1}$ and $\ba_{h-1}$. Note that $\tilde{\bs}_h$ is constructed for ease of presentation, and does not exist in practice. Then, the action $\tilde{\ba}_{h}$, state $\tilde{\bs}_{h+1}$, and observations $\tilde{\bo}_{h}, \tilde{\bo}_{h+1}$ are similarly defined. In other words, their distribution conditioning on $\tilde{\bs}_h$ and $\overbtau_{h-1}$ mirrors the distribution of the action $\ba_{h}$, state $\bs_{h+1}$, and observations $\bo_{h}, \bo_{h+1}$  conditioning on $\bs_h$ and $\overbtau_{h-1}$. When the state transition kernel and observation emission kernel have a specific parametrization, the function $\cZ^\theta_h$ has a corresponding parametrization by Assumption \ref{asmp2}, which allows us to parametrize the function $\cB^\theta_{h,a}$ in \eqref{b-func-def}. See Section \ref{linearpomdp} for an example where the state transition kernel and observation emission kernel admit a linear structure. Compared with the full-memory Bellman operator $\PP^{\theta,\pi}_{h}$, the finite-memory Bellman operator $\mB^{\theta,\pi}_h$ does not involve the distribution of $\bo_{h+1}$ conditioning on $\overbtau_h$ and $\ba_h$. Instead, it involves the distribution of $\tilde{\bo}_h$ and $\tilde{\bo}_{h+1}$ conditioning on $\tilde{\bs}_h$, where the distribution of $\tilde{\bs}_h$ is implied by the distribution of the single observation $\bo_h$ via the function $\cZ^\theta_h$. See the following paragraph for more discussion. Moreover, estimating $\mB^{\theta,\pi}_{h}$ for each $h\in[H]$ only involves the distribution of $\bo_{h-1}$, $\bo_h$, and $\bo_{h+1}$, which is low-dimensional. See Section \ref{algo2} for more discussion. 

\noindent
\textbf{How Finite Memory Works:}
For notational simplicity, we denote by $\sigma_{h-1}$ the event
\#\label{sigmadef}
\overbtau_{h-1}=\overtau_{h-1}, \quad \ba_{h-1}=a_{h-1}
\#
for any $h\in[H+1]$. The following lemma implies that the finite-memory Bellman operator $\mB^{\theta,\pi}_{h}$ is identical to the full-memory Bellman operator $\PP^{\theta,\pi}_{h}$ in expectation conditioning on $\sigma_{h-1}$, which allows us to use $\mB^{\theta,\pi}_{h}$ as a surrogate of $\PP^\theta_{h,a}$. 
\begin{lemma}[Operators Equivalence]\label{blemma}
For any $(h,\theta,\pi,\overline{\tau}_{h-1},a_{h-1})\in[H]\times\Theta\times\Pi\times\overline{\Gamma}_{h-1}\times\cA$ and $f\in L^\infty(\overline{\Gamma}_{h+1})$, we have
\$
\EE_\theta[(\mB^{\theta,\pi}_{h} f)(\overbtau_{h})-(\PP^{\theta,\pi}_{h} f)(\overbtau_h)\,|\, \sigma_{h-1}] = 0.
\$
\end{lemma}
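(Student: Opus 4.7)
The plan is to expand the conditional expectation $\EE_\theta[(\mB^{\theta,\pi}_h f)(\overbtau_h)\,|\,\sigma_{h-1}]$ by integrating out the random observation $\bo_h$, then reshuffle the integrals so that the inversion identity $\mZ^\theta_h\mO^\theta_h = \id$ (Assumption \ref{asmp2}) can be applied to the conditional state distribution $p_\theta(\bs_h = \cdot\,|\,\sigma_{h-1})$. The latter lies in $\cF_{\rm s}$, hence in ${\rm linspan}(\psi)$ by Assumption \ref{asmp1}, so the inversion is legitimate.

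Concretely, conditioning on $\sigma_{h-1}$ we have $\overbtau_h = (\overtau_{h-1}, a_{h-1}, \bo_h)$ with $\bo_h$ random, so
\$
\EE_\theta[(\mB^{\theta,\pi}_h f)(\overbtau_h)\,|\,\sigma_{h-1}] = \int_{\cO}(\mB^{\theta,\pi}_h f)(\overtau_{h-1}, a_{h-1}, o_h) \cdot p_\theta(\bo_h = o_h\,|\,\sigma_{h-1})\,\ud o_h.
\$
Substituting the definitions \eqref{bdef} and \eqref{b-func-def} and applying Fubini, the key quantity to compute is
\$
I_a(\tilde{o}_h, \tilde{o}_{h+1}) := \int_{\cO}\cB^\theta_{h,a}(o_h, \tilde{o}_h, \tilde{o}_{h+1}) \cdot p_\theta(\bo_h = o_h\,|\,\sigma_{h-1})\,\ud o_h,
\$
which, after swapping the $o_h$ and $\tilde{s}_h$ integrals, becomes $\int_\cS p_\theta(\tilde{\bo}_h = \tilde{o}_h, \tilde{\bo}_{h+1} = \tilde{o}_{h+1}\,|\,\tilde{\bs}_h = \tilde{s}_h, \tilde{\ba}_h = a) \cdot g(\tilde{s}_h)\,\ud\tilde{s}_h$, where $g = \mZ^\theta_h \bigl[p_\theta(\bo_h = \cdot\,|\,\sigma_{h-1})\bigr]$.

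I would then identify $p_\theta(\bo_h = \cdot\,|\,\sigma_{h-1}) = \mO^\theta_h\bigl[p_\theta(\bs_h = \cdot\,|\,\sigma_{h-1})\bigr]$ via the definition \eqref{emissionopdef} of the observation operator, so that $g = \mZ^\theta_h \mO^\theta_h\bigl[p_\theta(\bs_h = \cdot\,|\,\sigma_{h-1})\bigr]$. Since $p_\theta(\bs_h = \cdot\,|\,\sigma_{h-1}) \in \cF_{\rm s} \subset {\rm conh}(\psi) \subset {\rm linspan}(\psi)$ by Assumption \ref{asmp1}, the inversion property in Assumption \ref{asmp2} gives $g(\tilde{s}_h) = p_\theta(\bs_h = \tilde{s}_h\,|\,\sigma_{h-1})$. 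Plugging this back and using that $\tilde{\bs}_h$ is an independent replicate of $\bs_h$ (identical distribution given $\sigma_{h-1}$), we get $I_a(\tilde{o}_h, \tilde{o}_{h+1}) = p_\theta(\bo_h = \tilde{o}_h, \bo_{h+1} = \tilde{o}_{h+1}\,|\,\sigma_{h-1}, \ba_h = a)$.

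Finally, I would factor this joint density as $p_\theta(\bo_h = \tilde{o}_h\,|\,\sigma_{h-1}) \cdot p_\theta(\bo_{h+1} = \tilde{o}_{h+1}\,|\,\sigma_{h-1}, \bo_h = \tilde{o}_h, \ba_h = a)$, noting that $\bo_h$ is independent of the intervention on $\ba_h$. Renaming $\tilde{o}_h$ back to $o_h$, the outer integral reassembles precisely into $\int_\cO (\PP^{\theta,\pi}_h f)(\overtau_{h-1}, a_{h-1}, o_h) \cdot p_\theta(\bo_h = o_h\,|\,\sigma_{h-1})\,\ud o_h = \EE_\theta[(\PP^{\theta,\pi}_h f)(\overbtau_h)\,|\,\sigma_{h-1}]$, completing the proof. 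The main obstacle is the bookkeeping around the intervention $\ba_h = a$ versus the policy action $\pi(\tau_h^\dagger)$: one must carefully separate the outer Fubini exchange (which treats $\pi(\tau_h^\dagger)$ as a fixed value of $a$ for each $\tilde{o}_h$) from the inner inversion step, which only involves $\bo_h$ and $\bs_h$ and is agnostic to the downstream action.
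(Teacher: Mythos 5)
Your proof is correct and follows essentially the same route as the paper's, which packages your key integral $I_a$ as a separate bridge lemma (Lemma \ref{bridge-lm}) establishing $\int_\cO \cZ^\theta_h(\tilde{s}_h,o_h)\cdot p_\theta(\bo_h=o_h\,|\,\sigma_{h-1})\,\ud o_h = p_\theta(\bs_h=\tilde{s}_h\,|\,\sigma_{h-1})$. One small imprecision: $p_\theta(\bs_h=\cdot\,|\,\sigma_{h-1})$ is not literally an element of $\cF_{\rm s}$ (that class conditions on $(\bs_{h-1},\ba_{h-1})$, not on the full history), but it is a mixture over $\bs_{h-1}$ of such elements and therefore still lies in ${\rm conh}(\{\psi_i\}_{i=1}^{d_{\rm s}})\subset{\rm linspan}(\psi)$, so the inversion of Assumption \ref{asmp2} still applies; the paper sidesteps this by first applying the tower property over $\bs_{h-1}$ and inverting on each conditional $p_\theta(\bs_h=\cdot\,|\,\bs_{h-1}=s_{h-1},\ba_{h-1}=a_{h-1})\in\cF_{\rm s}$ before integrating back.
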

\begin{proof}
See Section \ref{blemmap} for a detailed proof.
\end{proof}

To see the intuition behind Lemma \ref{blemma}, note that by the definition of $\cZ^\theta_h$ in Assumption \ref{asmp2}, we have
\$
\EE_\theta[\cZ^\theta_h(\tilde{s}_h, \bo_h) \,|\, \sigma_{h-1}]=p_{\theta}(\tilde{\bs}_h=\tilde{s}_h \,|\, \sigma_{h-1}), 
\$
for any $(\tilde{s}_h, \overtau_{h-1}, a_{h-1})\in\cS\times\overline{\Gamma}_{h-1}\times\cA$. See Section \ref{blemmap} for a derivation. In other words, $\cZ^\theta_h$ serves as the bridge function in causal inference \citep{shi2020selective}, which recovers the conditional distribution of $\tilde{\bs}_{h}$ from the conditional distribution of $\bo_h$. Then, by taking the same conditional expectation on both sides of \eqref{b-func-def}, we have
\#\label{0430203}
\EE_\theta[\cB^\theta_{h,a}(\bo_h, \tilde{o}_h, \tilde{o}_{h+1}) 
 \,|\, \sigma_{h-1} ]
&=
p_{\theta}(\tilde{\bo}_h=\tilde{o}_h, \tilde{\bo}_{h+1}=\tilde{o}_{h+1} 
\,|\, \sigma_{h-1}, \tilde{\ba}_h=a)\\
&=
p_{\theta}(\bo_h=\tilde{o}_h, \bo_{h+1}=\tilde{o}_{h+1} 
\,|\, \sigma_{h-1}, \ba_h=a), \notag
\#
which is connected to the integral kernel $p_{\theta}(\bo_{h+1}=o_{h+1}\,|\,\overbtau_h=\overtau_h, \ba_h=a)$ on the right-hand side of \eqref{914756} via the same conditional expectation
\#\label{04300204}
&\EE_\theta[p_{\theta}(\bo_{h+1}=o_{h+1}\,|\,\overbtau_h, \ba_h=a) \,|\,\sigma_{h-1}]\\
&\quad=\EE_\theta[p_{\theta}(\bo_{h+1}=o_{h+1}\,|\,\sigma_{h-1}, \bo_h, \ba_h=a) \,|\,\sigma_{h-1}] \notag\\
&\quad=p_{\theta}(\bo_{h+1}=o_{h+1}\,|\,\sigma_{h-1}, \ba_h=a) \notag\\
&\quad=
\int_\cO p_{\theta}(\bo_h=o_h, \bo_{h+1}=o_{h+1}\,|\,\sigma_{h-1}, \ba_h=a) \ud o_h. \notag
\#
Here the second equality in \eqref{0430203} follows from the fact that $(\tilde{\ba}_h, \tilde{\bo}_h, \tilde{a}_h)$ is an independent replicate of $(\ba_h, \bo_h, \bo_{h+1})$, that is, they follow the same distribution conditioning on $\sigma_{h-1}$. 

\noindent
\textbf{Backward Bellman Recursion:}
For any $(h,\theta,\pi)\in[H+1]\times\cA\times\Pi$, we define the value function $V^{\theta,\pi}_h\in L^\infty(\overline{\Gamma}_h)$ by
\#\label{vdef}
V^{\theta,\pi}_h(\overtau_h) =( \mB^{\theta,\pi}_{h}\cdots \mB^{\theta,\pi}_{H} R)(\overtau_h),
\quad
\text{for any $\overtau_h\in\overline{\Gamma}_h$,}
\#
which gives the backward Bellman recursion
\#\label{bellmanrecur}
V^{\theta,\pi}_h(\overtau_h) = (\mB^{\theta,\pi}_{h}V^{\theta,\pi}_{h+1})(\overtau_h),
\quad
\text{for any $\overtau_h\in\overline{\Gamma}_h$.}
\#
The following corollary is implied by Lemma \ref{blemma}, which relates the value function $V^{\theta,\pi}_h$ to the expected total reward. Note that $V^{\theta,\pi}_h$ does not correspond to the ``reward-to-go" in the usual value function definition in MDPs since it involves all rewards across the $H$ steps.
\begin{corollary}\label{coro1}
For any $(h,\theta,\pi)\in[H+1]\times\Theta\times\Pi$, we have
\#\label{1004938}
\EE_{\theta,\pi}\Bigl[V^{\theta,\pi}_h(\overbtau_h)- \sum_{i=1}^H \br_i  \,\Big|\, \sigma_{h-1} \Bigr] = 0,
\#
for any $(\overline{\tau}_{h-1},a_{h-1})\in\overline{\Gamma}_{h-1}\times\cA$. For $h=1$, we have $J(\theta,\pi)=\EE[V^{\theta,\pi}_1(\bo_1)]$ since $\overbtau_1=\bo_1$ and $\sigma_0=\emptyset$, which follows from the definition of $\overbtau_h$ in \eqref{def-full-history}.
\end{corollary}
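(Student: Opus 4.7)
My plan is to prove the statement by backward induction on $h$, running from $h = H+1$ down to $h = 1$, with the engine being Lemma~\ref{blemma} combined with repeated application of the tower property. The Bellman recursion \eqref{bellmanrecur} reduces each induction step to a single application of the operator equivalence, and at the end the $h=1$ special case is just the $\sigma_0 = \emptyset$ instance of the general statement.

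\textbf{Base case.} For $h = H+1$, the recursive definition \eqref{vdef} collapses since no operators are applied, yielding $V^{\theta,\pi}_{H+1} \equiv R$. By the definition of $R$ in \eqref{1004937} and of $\overbtau_{H+1}$, we have $R(\overbtau_{H+1}) = \sum_{i=1}^H \br_i$ pointwise, so $\EE_{\theta,\pi}[V^{\theta,\pi}_{H+1}(\overbtau_{H+1}) - \sum_{i=1}^H \br_i \given \sigma_H] = 0$.

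\textbf{Inductive step.} Assume the identity at level $h+1$. Using \eqref{bellmanrecur}, I would write $V^{\theta,\pi}_h(\overbtau_h) = (\mB^{\theta,\pi}_h V^{\theta,\pi}_{h+1})(\overbtau_h)$ and then apply Lemma~\ref{blemma} with $f = V^{\theta,\pi}_{h+1}$ to replace $\mB^{\theta,\pi}_h$ by $\PP^{\theta,\pi}_h$ inside the conditional expectation given $\sigma_{h-1}$:
\$
\EE_{\theta,\pi}\bigl[V^{\theta,\pi}_h(\overbtau_h) \bigm| \sigma_{h-1}\bigr]
= \EE_{\theta,\pi}\bigl[(\PP^{\theta,\pi}_h V^{\theta,\pi}_{h+1})(\overbtau_h) \bigm| \sigma_{h-1}\bigr].
\$
By the definition of $\PP^{\theta,\pi}_h$ in \eqref{914756}, the inner quantity equals $\EE_{\theta,\pi}[V^{\theta,\pi}_{h+1}(\overbtau_{h+1}) \given \sigma_h]$, where $\sigma_h = (\overbtau_h, \ba_h = \pi(\btau_h))$ is measurable with respect to $\overbtau_h$ under policy $\pi$. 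The tower property then gives $\EE_{\theta,\pi}[V^{\theta,\pi}_h(\overbtau_h) \given \sigma_{h-1}] = \EE_{\theta,\pi}[V^{\theta,\pi}_{h+1}(\overbtau_{h+1}) \given \sigma_{h-1}]$. Applying the inductive hypothesis inside an outer conditional expectation given $\sigma_{h-1}$ and using the tower property once more yields the desired identity $\EE_{\theta,\pi}[V^{\theta,\pi}_h(\overbtau_h) \given \sigma_{h-1}] = \EE_{\theta,\pi}[\sum_{i=1}^H \br_i \given \sigma_{h-1}]$.

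\textbf{Special case and main subtlety.} The $h=1$ claim follows immediately by specializing to $\sigma_0 = \emptyset$ and $\overbtau_1 = \bo_1$, giving $\EE[V^{\theta,\pi}_1(\bo_1)] = \EE_{\theta,\pi}[\sum_{i=1}^H \br_i] = J(\theta,\pi)$. The only real obstacle is bookkeeping around the conditioning $\sigma_{h-1}$: Lemma~\ref{blemma} is stated with the plain $\EE_\theta$ (randomness only in $\bo_h$ given $\sigma_{h-1}$), while the inductive hypothesis lives under $\EE_{\theta,\pi}$. I would reconcile these by noting that once $\sigma_{h-1}$ is fixed, the conditional law of $\bo_h$ under $\EE_{\theta,\pi}$ coincides with that under $\EE_\theta$, since the $h$-th action has not yet been selected and the policy influences $\overbtau_h$ only through $\ba_h = \pi(\btau_h)$, which is a deterministic function of $\overbtau_h$. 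This lets the two expectations be composed cleanly via the tower property at each step of the induction.
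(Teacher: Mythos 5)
Your proposal is correct and follows essentially the same route as the paper's proof: backward induction on $h$ with base case $V^{\theta,\pi}_{H+1}=R$, the Bellman recursion \eqref{bellmanrecur} plus Lemma~\ref{blemma} to swap $\mB^{\theta,\pi}_h$ for $\PP^{\theta,\pi}_h$, and the tower property to close the inductive step and the $h=1$ specialization. Your remark reconciling $\EE_\theta$ in Lemma~\ref{blemma} with $\EE_{\theta,\pi}$ in the induction is a valid clarification of a point the paper passes over silently.
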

\begin{proof}
See Section \ref{coro1p} for a detailed proof.
\end{proof}

Corollary \ref{coro1} allows us to evaluate a policy $\pi$ by estimating $\{\mB^{\theta^*,\pi}_h\}_{h=1}^H$ instead of $\{\PP^{\theta^*,\pi}_h\}_{h=1}^H$. Meanwhile, $\{V^{\theta,\pi}_h\}_{h=1}^H$ play a critical role in analyzing the sample complexity.

\subsection{Operator Estimation via Minimax Optimization}\label{algo2}

Although the finite-memory Bellman operator $\mB^{\theta,\pi}_h$ defined in \eqref{bdef} does not involve the observation distribution conditioning on the history, that is, the distribution of $\bo_{h+1}$ conditioning on $\overbtau_h$ and $\bo_h$, it remains unclear how to estimate $\mB^{\theta^*,\pi}_h$ in a sample-efficient manner. Note that, by the definition of $\mB^{\theta^*,\pi}_h$, it suffices to estimate functions $\{\cB^{\theta^*}_{h,a}\}_{a\in\cA}$. To this end, we define the operator $\mathbb{F}^\theta_{h,a}: L^\infty(\cO^3)\rightarrow L^\infty(\cO^3)$ for any $(h,a,\theta)\in\{2,\ldots, H\}\times\cA\times\Theta$ by
\#\label{fopdef}
(\mathbb{F}^\theta_{h,a} f)(o_{h-1}, o_h, o_{h+1})
=\int_{\cO^2}
f(o_{h-1}, \tilde{o}_h, \tilde{o}_{h+1})   \cdot \cB^{\theta}_{h, a}(o_h,\tilde{o}_h, \tilde{o}_{h+1})\ud \tilde{o}_h\ud \tilde{o}_{h+1}, 
\#
for any $f\in L^1(\cO^3)$ and $o_{h-1}$, $o_h\in\cO$. Note that $\mathbb{F}^\theta_{h,a}$ is a truncated version of $\mB^{\theta,\pi}_h$, which drops a few variables that are redundant for operator estimation. The following lemma motivates the estimator of $\cB^{\theta^*}_{h,a}$, which uses the definition of $\mathbb{F}^\theta_{h,a}$.
\begin{lemma}\label{esteq-lemma}
For any $(h,a,a',\pi)\in\{2,\ldots,H\}\times\cA^2\times\overline{\Pi}$, we have
\$
\EE_{X\sim \rho^\pi_{h,a,a'}}[ (\mathbb{F}^{\theta^*}_{h,a'} f -  f)(X)] = 0, 
\quad
\text{for any $f\in L^\infty(\cO^3)$}.
\$
Here, the distribution $\rho^\pi_{h,a,a'}\in\Delta(\cO^3)$ is defined by
\$
\rho^\pi_{h,a,a'}(o_{h-1}, o_h, o_{h+1})
=p_{\theta^*,\pi}\bigl(
\bo_{h-1}=o_{h-1}, \bo_h=o_h, \bo_{h+1}=o_{h+1} \,|\,\ba_{h-1}=a,\ba_h=a' \bigr),
\$
for any $o_{h-1}$, $o_h$, $o_{h+1}\in\cO$. Also, we have $\|\mathbb{F}^{\theta^*}_{h,a'}\|_{\infty\rightarrow\infty}\le \gamma$.
\end{lemma}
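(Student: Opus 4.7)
The plan is to prove the moment identity via a tower-of-expectations argument that directly invokes the already-derived identity \eqref{0430203}, and to establish the operator-norm bound by a short Fubini calculation that reduces to the $L^1\!\to\! L^1$ bound on $\mZ^{\theta^*}_h$ supplied by Assumption \ref{asmp2}.

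\noindent\textbf{Moment identity.} First I would rewrite the expectation as $\EE_{\theta^*,\pi}[(\mathbb{F}^{\theta^*}_{h,a'}f)(\bo_{h-1},\bo_h,\bo_{h+1})\,|\,\ba_{h-1}=a,\ba_h=a']$ and then condition further on $\overbtau_{h-1}$ via the tower property. With $\overbtau_{h-1}$ fixed, the observation $\bo_{h-1}$ is determined, so the integral in \eqref{fopdef} pulls outside of the inner conditional expectation and only $\EE_{\theta^*}[\cB^{\theta^*}_{h,a'}(\bo_h,\tilde o_h,\tilde o_{h+1})\,|\,\overbtau_{h-1},\ba_{h-1}=a,\ba_h=a']$ is left to compute. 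Because $\bo_h$ is emitted from $\bs_h$ before $\ba_h$ is executed, additionally conditioning on $\ba_h=a'$ does not alter the conditional distribution of $\bo_h$, so identity \eqref{0430203} applies and gives
\$
\EE_{\theta^*}[\cB^{\theta^*}_{h,a'}(\bo_h,\tilde o_h,\tilde o_{h+1})\,|\,\sigma_{h-1}]=p_{\theta^*}\bigl(\bo_h=\tilde o_h,\bo_{h+1}=\tilde o_{h+1}\,\bigl|\,\sigma_{h-1},\ba_h=a'\bigr).
\$
Substituting this in and swapping the order of integration once more, the $(\tilde o_h,\tilde o_{h+1})$-integral against $f(\bo_{h-1},\tilde o_h,\tilde o_{h+1})$ collapses to $\EE_{\theta^*}[f(\bo_{h-1},\bo_h,\bo_{h+1})\,|\,\overbtau_{h-1},\ba_{h-1}=a,\ba_h=a']$. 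Taking the outer expectation back via the tower identity returns $\EE_{X\sim\rho^\pi_{h,a,a'}}[f(X)]$, matching the second term in the claim and proving the moment identity.

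\noindent\textbf{Operator-norm bound.} Passing the supremum inside gives
\$
\bigl|(\mathbb{F}^{\theta^*}_{h,a'}f)(o_{h-1},o_h,o_{h+1})\bigr|\le\|f\|_\infty\cdot\int_{\cO^2}\bigl|\cB^{\theta^*}_{h,a'}(o_h,\tilde o_h,\tilde o_{h+1})\bigr|\,\ud\tilde o_h\,\ud\tilde o_{h+1}.
\$
Plugging in the definition \eqref{b-func-def}, applying the triangle inequality, and invoking Fubini---the conditional density $p_{\theta^*}(\tilde{\bo}_h,\tilde{\bo}_{h+1}\,|\,\tilde{\bs}_h,\tilde{\ba}_h=a')$ integrates to one in $(\tilde o_h,\tilde o_{h+1})$---reduce the right-hand side to $\|f\|_\infty\cdot\int_\cS|\cZ^{\theta^*}_h(\tilde s_h,o_h)|\,\ud\tilde s_h$. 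The standard identity $\|\mZ^{\theta^*}_h\|_{1\mapsto 1}=\sup_{o_h}\int_\cS|\cZ^{\theta^*}_h(s,o_h)|\,\ud s$ for integral operators on $L^1$, combined with the second bullet of Assumption \ref{asmp2}, bounds this quantity by $\gamma$ uniformly in $o_h$, yielding $\|\mathbb{F}^{\theta^*}_{h,a'}\|_{\infty\mapsto\infty}\le\gamma$.

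\noindent\textbf{Main obstacle.} The only delicate point is the causal handling of the interventions $\ba_{h-1}=a,\ba_h=a'$: identity \eqref{0430203} is stated with conditioning on $\sigma_{h-1}$ alone, so one must verify that additionally conditioning on the post-emission intervention $\ba_h=a'$ leaves the conditional distribution of $\bo_h$ intact. This follows from the POMDP structure depicted in Figure \ref{fig1}, where $\bo_h$ is generated from $\bs_h$ before $\ba_h$ is executed, but it must be articulated carefully within the do-calculus convention of Section \ref{prel2}. Beyond that, the argument is pure bookkeeping with Fubini and the previously derived identity \eqref{0430203}.
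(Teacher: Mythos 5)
Your proof is correct and rests on the same two ingredients as the paper's: the bridge/left-inverse property of $\cZ^{\theta^*}_h$ from Assumptions \ref{asmp1} and \ref{asmp2} (which you access through the displayed identity \eqref{0430203}, itself a consequence of Lemma \ref{bridge-lm}) for the moment identity, and the bound $\int_\cS|\cZ^{\theta^*}_h(\tilde s_h,o_h)|\,\ud \tilde s_h\le\gamma$ for the operator norm, which is exactly the paper's argument. The only cosmetic difference is that the paper's appendix proof of Lemma \ref{esteq-lemma-g} reruns the marginal computation, showing $\int_\cO\cZ^{\theta^*}_h(\tilde s_h,o_h)\cdot p_{\theta^*,\pi}(\bo_{h-1}=o_{h-1},\bo_h=o_h\,|\,\ba_{h-1}=a)\,\ud o_h=p_{\theta^*,\pi}(\bo_{h-1}=o_{h-1},\bs_h=\tilde s_h\,|\,\ba_{h-1}=a)$ directly from the Markov factorization, whereas you condition on $\overbtau_{h-1}$ and reuse \eqref{0430203}; your causal caveat about $\ba_h=a'$ is resolved exactly as you say, since the intervention occurs after $\bo_h$ is emitted.
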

\begin{proof}
See Section \ref{esteq-lemmap} for a detailed proof.
\end{proof}

\noindent
\textbf{Minimax Optimization:}
For any $(h,a', \pi)\in\{2,\ldots,H\}\times\cA\times\overline{\Pi}$, Lemma \ref{esteq-lemma} allows us to estimate $\cB^{\theta^*}_{h,a'}$ based on a dataset $\{\cD_{h,a,a'}\}_{a\in\cA}$, where the data points in $\cD_{h,a,a'}$ are collected from the distribution ${\rho}^\pi_{h,a,a'}$. In other words, each episode involves three steps: (a) we execute the exploration policy $\pi$, which takes the actions $\ba_1,\ldots, \ba_{h-2}$, (b) we take the actions $\ba_{h-1}=a$ and $\ba_h=a'$ regardless of the observations, and (c) we add the observation tuple $(\bo_{h-1}, \bo_h, \bo_{h+1})$ to $\cD_{h,a,a'}$. Based on $\{\cD_{h,a,a'}\}_{(h, a,a')\in\{2,\ldots,H\}\times\cA^2}$, we estimate $\{\cB^{\theta^*}_{h,a'}\}_{(h,a')\in\{2,\ldots,H\}\times\cA}$ by solving the following minimax optimization problem,
\#\label{minimax}
\min_{\theta\in\Theta}
~ \max_{f\in L^\infty(\cO^3): \|f\|_\infty\le1}
~\max_{(h, a,a')\in\{2,\ldots,H\}\times\cA^2} 
~\EE_{X\sim\hat{\cD}_{h,a,a'}}
[  (\mathbb{S} \mathbb{F}^\theta_{h,a'} f -   \mathbb{S}f)(X) ].
\#
Here, $\hat{\cD}_{h,a,a'}$ is the empirical distribution induced by the dataset $\cD_{h,a,a'}$. Also, the projection operator $\mathbb{S}: L^1(\cO^3)\rightarrow L^1(\cO^3)$ satisfies that
\#\label{proj-eq}
\EE_{X\sim p}[(\mathbb{S}f)(X)] = \int_{\cO^3} f(x) \cdot p^\dagger(x)\ud x,
\#
for any $f\in L^\infty(\cO^3)$ and $p\in\Delta(\cO^3)$. Here, $p^\dagger\in L^1(\cO^3)$ is the projection of $p$ onto ${\rm linspan}(\{\phi_i\}_{i=1}^{d_{\rm o}})$. See the definition of $\mathbb{S}$ in the next paragraph. The minimax optimization problem in \eqref{minimax} is motivated by generative adversarial networks. To see the intuition behind \eqref{minimax}, note that $f$ serves as the discriminator and $F^{\theta}_{h,a'}$ serves as the generator. In detail, note that the function $\mathbb{F}^\theta_{h,a} f$ in \eqref{fopdef} is constant with respect to the variable $o_{h+1}$. Thus, Lemma \ref{esteq-lemma} implies that the true generator $F^{\theta^*}_{h,a'}$ recovers the distribution of $(\bo_{h-1}, \bo_h, \bo_{h+1})\sim d^{\pi}_{h,a,a'}$ (corresponding to the true parameter $\theta^*$) from the marginal distribution of $(\bo_{h-1}, \bo_h)$. In this case, the true distribution and the (fake) distribution recovered by the generator can not be distinguished by any discriminator in $L^\infty(\cO^3)$. When we train the generator and discriminator on a dataset, the discriminator class $L^\infty(\cO^3)$ has a too large capacity. Therefore, we employ the projection operator $\mathbb{S}$ to enforce the finite-dimensional linear structure of $d^{\pi}_{h,a,a'}$, which reduces the capacity of the discriminator class. Such a projection operator guarantees the generalization power of the solution to \eqref{minimax}.

\noindent
\textbf{Projection Operator via RKHS:}
In the following, we define the projection operator $\mathbb{S}$. To this end, we consider an RKHS $\cH$ induced by a kernel function $\cK:\cO^3\times\cO^3\rightarrow\RR$. We define the corresponding RKHS embedding $\mathbb{K}: L^1(\cO^3)\rightarrow\cH$ by
\#\label{rkhsembedding}
(\mathbb{K}p)(x) = \int_{\cO^3} \cK(x,y)p(y)\ud y,
\# 
for any $p\in L^1(\cO^3)$ and $x\in\cO^3$. Moreover, we define the matrix $G\in\RR^{d_{\rm o}\times d_{\rm o}}$ by
\#\label{gmatdef}
[G]_{i,j}=\la \mathbb{K}\phi_i, \mathbb{K}\phi_j \ra_\cH=\EE_{X\sim\phi_i, X'\sim\phi_j}[\cK(X,X')],
\quad
\text{for any $i,j\in[d_{\rm o}]$.}
\#
Recall that the distribution functions $\{\phi_i\}_{i=1}^{d_{\rm o}}$ are defined in Assumption \ref{asmp2}. The following assumption specifies the regularity condition on $\cK$ and $\{\phi_i\}_{i=1}^{d_{\rm o}}$.
\begin{assumption}\label{asmp3}
The kernel function $\cK$ is bounded  and continuous. In particular, we have $|\cK(x,y)|\le1$ for any $x,y\in\cO^3$. Also, we have $\alpha=\lambda_{\min}(G)>0$, where we denote by $\lambda_{\min}(\cdot)$ the minimum eigenvalue of a matrix and the matrix $G$ is defined in \eqref{gmatdef}.
\end{assumption}

Here, the continuity of $\cK$ is defined with respect to the topology space $\cO^3$. For example, $\cO^3$ is (embedded as) a subset of some Euclidean space and the continuity of $\cK$ is defined with respect to the corresponding Euclidean distance. The boundedness of $\cK$ is satisfied by many kernel functions, for example, the radial basis function (RBF) kernel \citep{smola1998learning}. For the positive definiteness of the matrix $G$, note that, for any $v=(v_1, \ldots, v_{d_{\rm o}})\in\RR^{d_{\rm o}}$, we have
\$
v^\top G v
=\sum_{i,j=1}^{d_{\rm o}} v_iv_j \cdot \la \mathbb{K}\phi_i, \mathbb{K}\phi_j \ra_\cH
=\Bigl\|  \sum_{i=1}^{d_{\rm o}} v_i \cdot \KK \phi_i \Bigr\|_{\cH}^2.
\$
Therefore, to make $G$ positive definite, it suffices to require $\KK\phi_1, \ldots, \KK\phi_{d_{\rm o}}$ to be linearly independent in $\cH$. With the the kernel function $\cK$ and matrix $G$ defined above, we can verify that \eqref{proj-eq} holds for the operator $\mathbb{S}$ defined by
\# \label{sopdef}
(\mathbb{S}f)(x)
=\sum_{i,j\in[d_{\rm o}]}
[G^{-1}]_{i,j} \cdot
\EE_{Y\sim\phi_i, Y'\sim\phi_j}\bigl[\cK(x, Y) \cdot f(Y') \bigr],
\#
for any $f\in L^\infty(\cO^3)$ and $x\in\cO^3$. Here, the distance in the projection from $p$ to $p^\dagger$ in \eqref{proj-eq} is defined by
\#\label{distance-def}
d(p_1, p_2)=\| \mathbb{K}p_1 -  \mathbb{K}p_2\|_{\cH}, 
\quad
\text{for any $p_1, p_2\in L^1(\cO^3)$.}
\#
See Section \ref{proj-conjugate-p} for a derivation.

\subsection{Online Exploration via Optimistic Planning}\label{algo3}
We present the {\it \underline{Op}timistic Explora\underline{t}ion via Adv\underline{e}rsarial I\underline{n}tegral Equa\underline{t}ion} ($\algoname$) algorithm, which incorporates operator estimation into optimistic planning to perform online exploration. In other words, we update the exploration policy in Section \ref{algo2} in an iterative manner. We initialize $\algoname$ with any policy $\pi_0\in\Pi$ and a dataset
\#\label{dataset}
\{\cD_{h,a,a'}\}_{(h,a,a')\in\{2,\ldots,H\}\times\cA^2}=\emptyset,
\# 
which are updated subsequently in the $K$ iterations. Each iteration consists of an exploration phase and a planning phase. In the following, we describe the $k$-th iteration for any $k\in[K]$.

\noindent
\textbf{Exploration Phase:} Given the exploration policy $\pi_{k-1}$, we run an episode of the POMDP for each tuple $(h,a,a')\in\{2,\ldots,H\}\times\cA^2$ following the data collecting scheme defined in Section \ref{algo2} to add an observation tuple $(o_{h-1}, o_{h}, o_{h+1})$ into the dataset $\cD_{h,a,a'}$. After the exploration phase of the $k$-th iteration, we have $k$ observation tuples in the dataset $\cD_{h,a,a'}$ for any $(h,a,a')$. Although the dataset is collected by the exploration policies $\pi_0,\ldots,\pi_{k-1}$ in the $k$ iterations, we can regard it as a dataset collected by the mixing policy
\#\label{mixpolicy}
\overline{\pi}_k={\rm mixing}\{\pi_0,\ldots,\pi_{k-1}\}.
\#
where each policy is sampled uniformly at random as defined in Section \ref{prel1}. 

\noindent
\textbf{Planning Phase:} We apply the operator estimation method defined in Section \ref{algo2} to the updated dataset in \eqref{dataset} and construct a confidence set of the model parameter $\theta$
\#\label{csdef}
\Theta_k=
\Bigl\{\theta\in\Theta: L(\theta) \le \beta\cdot k^{-1/2}  \Bigr\},
\#
for a constant $\beta>0$, where $L(\theta)$ is defined as
\#\label{risk}
L(\theta)=\max_{f\in L^\infty(\cO^3): \|f\|_\infty\le1}
~\max_{(h, a,a')\in\{2,\ldots,H\}\times\cA^2} 
~\EE_{X\sim\hat{\cD}_{h,a,a'}}
[  (\mathbb{S} \mathbb{F}^\theta_{h,a'} f -   \mathbb{S}f)(X) ].
\#
Given the confidence set defined in \eqref{csdef}, we update the exploration policy by
\#\label{pistar}
\pi_{k} = \argmax_{\pi\in\Pi} \max_{\theta\in\Theta_k} ~J(\theta,\pi),
\#
which is the optimal policy with respect to the optimistic value estimator over parameters $\theta$ in the confidence set $\Theta_k$. Recall that $J(\theta, \pi)$ is defined in \eqref{09061110}. Note that we can perform the computation of \eqref{pistar} via a planning oracle for POMDPs \citep{golowich2022planning}. In detail, we can reformulate \eqref{pistar} as
\$
\theta_k&=
\argmax_{\theta\in\Theta_k}
J\bigl(\theta, \hat{\pi}(\theta)\bigr), \quad
\pi_k=\hat{\pi}(\theta_k),
\$
where the planning oracle $\hat{\pi}(\cdot)$ outputs the optimal policy with respect to any parameter. The constraint $\theta\in\Theta_k$ can be further transformed as a part of the objective via the Lagrangian relaxation. Then, we can apply the stochastic gradient method to obtain $\theta_k$ in a computation-efficient manner. See Section \ref{computation-sec} for more details. At the $(k+1)$-th iteration, we execute the exploration policy $\pi_k$ to collect data, which serves as the next exploration phase. We present $\algoname$ in Algorithm \ref{algo}.


\begin{algorithm}
\caption{Optimistic Exploration via Adversarial Integral Equation ($\algoname$)}
\begin{algorithmic}[1] 
\STATE \textbf{Input:} number of iterations $K\in\NN$, confidence level $\beta>0$
\STATE \textbf{Initialization:} set $\pi_0$ as a deterministic policy
\STATE \textbf{Initialization:} update the dataset $\cD_{h,a,a'}\leftarrow \emptyset$ for $(h, a, a')\in\{2, \ldots, H\}\times\cA^2$
\STATE \textbf{For} $k=1$ to $K$ \textbf{do} 
\STATE \hspace{0.2in} \textbf{For} $(h, a, a')\in\{2, \ldots, H\}\times\cA^2$ \textbf{do} 
\STATE \hspace{0.40in} Start a new episode
\STATE \hspace{0.40in} Execute the policy $\pi_{k-1}$ to take the first $(h-2)$ actions
\STATE \hspace{0.40in} Receive the observation $o_{h-1}$
\STATE \hspace{0.40in} Take the action $a$ and receive the observation $o_{h}$
\STATE \hspace{0.40in} Take the action $a'$ and receive the observation $o_{h+1}$
\STATE \hspace{0.40in} End the current episode
\STATE \hspace{0.40in} Update the dataset $\cD_{h,a,a'}\leftarrow \cD_{h,a,a'} \cup\{(o_{h-1}, o_{h}, o_{h+1})\}$
\STATE \hspace{0.2in} Construct the confidence set $\Theta_k$ by \eqref{csdef}
\STATE \hspace{0.2in} Update the policy $\pi_{k}\leftarrow \argmax_{\pi\in\Pi} \max_{\theta\in\Theta_k} J(\theta, \pi)$
\STATE \textbf{Output:} policy set $\{\pi_1, \ldots, \pi_K\}$
\end{algorithmic}\label{algo}
\end{algorithm} 

\section{Theory}
In this section, we analyze $\algoname$ in Algorithm \ref{algo}. In Section \ref{sec-thm}, we prove that the policies generated by Algorithm \ref{algo} converge to the optimal policy with a polynomial sample complexity. In Section \ref{sketch}, we sketch the proof by three key lemmas.

\subsection{Sample Efficiency}\label{sec-thm}

The following theorem characterizes the sample complexity of $\algoname$ in Algorithm \ref{algo}.

\begin{theorem}\label{mainthm}
Under Assumptions \ref{asmp1}, \ref{asmp2}, and \ref{asmp3}, for any $\delta>0$, if we choose a confidence level $\beta$ in Algorithm \ref{algo} to such that
\#\label{beta-cond}
\beta\ge d^{3/2}_{\rm o}(\gamma+1)/\alpha\cdot\sqrt{8\log(2KHA^2/\delta)},
\#
then, with probability at least $1-\delta$, we have
\#\label{929536}
&\frac{1}{K}\sum_{k=1}^K \bigl(J(\theta^*,\pi^*) - J(\theta^*, \pi_k)\bigr) \le
\frac{4d_{\rm s}\gamma^2\beta H^2A^2\cdot\log K}{K^{1/2}}  + \frac{4d_{\rm s}\gamma H^2}{ K}. 
\#
Recall that $d_{\rm s}$ and $d_{\rm o}$ are defined in Assumption \ref{asmp1}, $\gamma$ is defined in Assumption \ref{asmp2}, and $\alpha$ is defined in Assumption \ref{asmp3}.
\end{theorem}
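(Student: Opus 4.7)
\medskip

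\noindent\textbf{Proof plan for Theorem \ref{mainthm}.} The plan is to combine the usual optimism-based template with three ingredients custom to this POMDP setting: (i) validity of the confidence set $\Theta_k$, (ii) a performance-difference identity written in terms of the finite-memory Bellman operator $\mathbb{B}^{\theta,\pi}_h$ from Corollary \ref{coro1}, and (iii) an elliptical-potential argument in the $d_{\mathrm s}$-dimensional feature space from Assumption \ref{asmp1}. Concretely, I would first show that, under the choice of $\beta$ in \eqref{beta-cond}, the event $\mathcal{E}=\{\theta^{*}\in \Theta_k \text{ for all } k\in[K]\}$ holds with probability at least $1-\delta$. By Lemma \ref{esteq-lemma} the population risk of $\theta^{*}$ is zero, so I would control the deviation of the empirical risk $L(\theta^{*})$. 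The inner maximization over $\|f\|_\infty\le1$ composed with $\mathbb{S}$ reduces, via \eqref{sopdef} and Assumption \ref{asmp3}, to an optimization against a discriminator in the $d_{\mathrm o}$-dimensional linear span $\mathrm{linspan}(\{\mathbb{K}\phi_i\}_{i\in[d_{\mathrm o}]})$, with Gram matrix $G$ and smallest eigenvalue $\alpha$; this gives a uniform concentration for all $(h,a,a')$ pairs after a union bound, yielding the $d_{\mathrm o}^{3/2}(\gamma+1)/\alpha$ factor inside \eqref{beta-cond}.

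Next, on the event $\mathcal{E}$, optimism \eqref{pistar} gives $J(\theta_k,\pi_k)\ge J(\theta^{*},\pi^{*})$, so the instantaneous regret is at most $J(\theta_k,\pi_k)-J(\theta^{*},\pi_k)$. Using $J(\theta,\pi)=\EE[V^{\theta,\pi}_1(\bo_1)]$ and the backward recursion \eqref{bellmanrecur}, I would apply the usual telescoping argument to obtain
\#\label{plan-telescope}
J(\theta_k,\pi_k)-J(\theta^{*},\pi_k)
=\sum_{h=1}^{H}\EE_{\theta^{*},\pi_k}\bigl[(\mathbb{B}^{\theta_k,\pi_k}_h-\mathbb{B}^{\theta^{*},\pi_k}_h)V^{\theta_k,\pi_k}_{h+1}(\overbtau_{h})\bigr].
\#
Because $V^{\theta_k,\pi_k}_{h+1}$ is bounded by $H$ and depends only through the kernel $\mathcal{B}^{\theta}_{h,a}$ of Lemma \ref{esteq-lemma} (which integrates three consecutive observations), each summand can be rewritten, by Lemma \ref{blemma} and the definition of $\mathbb{F}^{\theta}_{h,a'}$, as an expectation under $\rho^{\overline\pi_k}_{h,a,a'}$-type marginals of $(\mathbb{F}^{\theta_k}_{h,a'}f - f)$ for an $f$ determined by the downstream value function and the exploration marginals.

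The crucial step is then to upgrade the in-sample bound $L(\theta_k)\le \beta/\sqrt{k}$ to the out-of-sample Bellman error against $\EE_{\theta^{*},\pi_k}$. Here Assumption \ref{asmp1} plays the key role: all marginals of $(\bo_{h-1},\bo_h,\bo_{h+1})$ that arise in \eqref{plan-telescope} lie in $\mathrm{conh}(\{\phi_i\})$, so replacing these marginals by their projections onto $\mathrm{linspan}(\phi)$ is exact, and the projection operator $\mathbb{S}$ in \eqref{risk} is precisely what is needed to pass between population and empirical expectations. I expect this to reduce the Bellman error at step $h$ under policy $\pi_k$ to a bilinear form $\langle w_{\pi_k,h}, v_{\theta_k,h}\rangle$ on the $d_{\mathrm s}$-dimensional simplex of $\psi$-coefficients, where the second factor is controlled by $\beta/\sqrt{k}$ times the appropriate $\overline\pi_k$-mixing weight.

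Finally, I would close the argument with an elliptical-potential / pigeonhole step. Since $\overline\pi_k$ is the uniform mixture of $\pi_0,\dots,\pi_{k-1}$ by \eqref{mixpolicy}, the coefficients of $\overline\pi_k$-induced distributions in the $\psi$-basis are the running averages of those of $\pi_0,\dots,\pi_{k-1}$; this lets one bound $\sum_{k=1}^K \langle w_{\pi_k,h},\cdot\rangle/\sqrt{k}$ by $O(d_{\mathrm s}\sqrt{K}\log K)$ using the standard potential argument on the cumulative feature covariance. Multiplying by the $H$ steps, the $A^2$ union bound already absorbed into $\beta$, and the operator norm factor $\gamma$ (which quantifies how Bellman errors at level $h+1$ propagate back to level $h$ through $\mathbb{B}^{\theta,\pi}_h$), and dividing by $K$, gives the stated bound \eqref{929536}. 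The main obstacle I anticipate is Step 3: carefully verifying that, after the two levels of telescoping (across $h$ and across the difference $\mathbb{B}^{\theta_k,\pi_k}_h-\mathbb{B}^{\theta^{*},\pi_k}_h$), the resulting discriminator does live in the $\mathbb{S}$-image used in \eqref{risk}, so that the empirical risk bound can indeed be invoked; this is where Assumptions \ref{asmp1} and \ref{asmp2} must be used together, with $\gamma$ governing the norm inflation incurred when converting between state- and observation-level quantities via the bridge function $\mathcal{Z}^{\theta}_h$.
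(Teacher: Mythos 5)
Your plan follows essentially the same route as the paper's proof: (i) validity of the confidence set $\Theta_k$ via RKHS concentration with the $d_{\rm o}^{3/2}(\gamma+1)/\alpha$ factor (the paper's Lemma \ref{lemma-acc}), (ii) optimism plus the value decomposition $J(\theta_k,\pi_k)-J(\theta^*,\pi_k)=\sum_h\EE_{\theta^*,\pi_k}[(\mB^{\theta_k,\pi_k}_h-\mB^{\theta^*,\pi_k}_h)V^{\theta_k,\pi_k}_{h+1}(\overbtau_h)]$ (Lemma \ref{lemma-dec}), and (iii) a pigeonhole/potential argument on the $d_{\rm s}$-dimensional $\psi$-coefficients of the state marginals to pass from $\pi_k$ to the mixture $\overline{\pi}_k$, yielding the $d_{\rm s}\log K$ factor (Lemma \ref{lemma-tel}). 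The step you flag as the main obstacle is resolved in the paper exactly as you anticipate, by noting that the posterior state distribution lies in $\cF'_{\rm s}\subset{\rm linspan}(\psi)$ so that $\mZ^\theta_{h-1}\mO^\theta_{h-1}$ acts as the identity, which converts the state-level error into the observation-level $L^1$ distance $\|\mathbb{V}^{\theta_k}_{h,a'}\rho^{\overline{\pi}_k}_{h,a,a'}-\rho^{\overline{\pi}_k}_{h,a,a'}\|_1$ at the cost of one factor of $\gamma$ (a second $\gamma$ comes from $\|V^{\theta,\pi}_h\|_\infty\le\gamma H$), giving the $\gamma^2$ in the final bound.
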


Note that the first term on the right-hand side of \eqref{929536} is the leading term for a sufficiently large number of iterations $K$. Recall that the state distribution dimension $d_{\rm s}$ and observation distribution dimension $d_{\rm o}$ are defined in Assumption \ref{asmp1}. Also, quantities $\gamma$ and $\alpha$ are defined in Assumptions \ref{asmp2} and \ref{asmp3}, respectively. By Theorem \ref{mainthm}, if we run $\algoname$ for $K$ iterations and sample a policy from $\{\pi_1, \pi_K\}$ uniformly at random, the expected suboptimality of such a policy converges to zero with high probability at the rate of $K^{-1/2}$ up to logarithmic factors. Meanwhile, such a rate depends on $H$, $\cA$, $d_{\rm s}$, $d_{\rm o}$, $\gamma$, and $1/\alpha$ polynomially. In other words, to obtain an $\varepsilon$-optimal policy for any suboptimality $\varepsilon>0$, it suffices to run 
\#\label{korder}
K={\rm poly}(H,A,d_{\rm s}, d_{\rm o},\gamma, 1/\alpha)\cdot\tilde{O}(1/\varepsilon^2)
\# 
iterations in $\algoname$ to collect the data set. Note that the total number of episodes in $K$ iterations is $(H-1)A^2K$. To our best knowledge, Theorem \ref{mainthm} is the first polynomial sample complexity upper bound for reinforcement learning in POMDPs that is independent of the number of states and observations. Moreover, the order of $\varepsilon$ is optimal even in the MDP setting \citep{ayoub2020model}, which is a special case of POMDPs. In contrast to the sample complexity results in MDPs, a key difference of Theorem \ref{mainthm} is that it involves the (upper bound of the) operator norm $\gamma$ of the bridge operator $\mZ^\theta_h$, which is the left inverse of the observation operator $\mO^\theta_h$. Recall that such a left inverse is defined with respect to the finite-dimensional subspace ${\rm linspan}(\{\phi_i\}_{i=1}^{d_{\rm o}})$ of $L^1(\cO^3)$ in Assumption \ref{asmp2}. The (upper bound of the) operator norm $\gamma$ is a measure of ill conditioning, which quantifies the fundamental difficulty, in terms of the information-theoretic limit, of reinforcement learning in the POMDP. In the degenerate case where $\mO^\theta_h$ is not invertible, Theorem \ref{mainthm} provides a trivial upper bound since we have $\gamma=\infty$. On the other hand, such a case contains examples that are fundamentally impossible to solve in a sample-efficient manner, which is implied by information theory \citep{jin2020sample}.
 

\subsection{Proof Sketch}\label{sketch}
In this section, we sketch the proof of Theorem \ref{mainthm}. In detail, we prove the theorem by three key lemmas. The following lemma provides a decomposition of the difference of expected total rewards in two POMDPs when the parameters are different and the policies are identical. Recall that the function $J$ is defined in \eqref{09061110}.

\begin{lemma}[Value Decomposition]\label{lemma-dec}
Under Assumptions \ref{asmp1} and \ref{asmp2}, we have
\$
&J(\theta,\pi) - J(\theta',\pi) =\sum_{h=1}^{H} \EE_{\theta', \pi}[ (\mB^{\theta,\pi}_{h}V^{\theta,\pi}_{h+1})(\overbtau_h) - (\mB^{\theta',\pi}_{h} V^{\theta,\pi}_{h+1})(\overbtau_{h}) ],
\$
for any $\theta,\theta'\in\Theta$ and $\pi\in\Pi$. Here, the function $V^{\theta,\pi}_{h+1}$ is defined in \eqref{vdef}.
\end{lemma}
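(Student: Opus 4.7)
The plan is to run the standard performance-difference/telescoping argument that is common in MDP analyses, with one twist: since the recursion \eqref{bellmanrecur} is written in terms of the finite-memory operator $\mB^{\theta,\pi}_h$ rather than the true conditional-expectation operator $\PP^{\theta,\pi}_h$, I would invoke Lemma \ref{blemma} to swap $\mB$ for $\PP$ whenever I need to ``push forward'' the history.

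First I would rewrite both sides in terms of value functions at $h=1$. By the second half of Corollary \ref{coro1} and the fact that the law of $\bo_1$ does not depend on $\theta$ (since $\mu$ and $\cE_1$ are known), one has
\$
J(\theta,\pi) - J(\theta',\pi) \;=\; \EE\bigl[V^{\theta,\pi}_1(\bo_1)\bigr] - \EE\bigl[V^{\theta',\pi}_1(\bo_1)\bigr] \;=\; \EE_{\theta',\pi}\bigl[V^{\theta,\pi}_1(\overbtau_1) - V^{\theta',\pi}_1(\overbtau_1)\bigr].
\$
Then I would set up the standard one-step decomposition: using the Bellman recursion \eqref{bellmanrecur} for both $V^{\theta,\pi}_h$ and $V^{\theta',\pi}_h$, add and subtract $\mB^{\theta',\pi}_h V^{\theta,\pi}_{h+1}$ to obtain
\$
V^{\theta,\pi}_h - V^{\theta',\pi}_h \;=\; \bigl(\mB^{\theta,\pi}_h V^{\theta,\pi}_{h+1} - \mB^{\theta',\pi}_h V^{\theta,\pi}_{h+1}\bigr) + \mB^{\theta',\pi}_h\bigl(V^{\theta,\pi}_{h+1} - V^{\theta',\pi}_{h+1}\bigr).
\$

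The pivotal step is controlling the second term after taking $\EE_{\theta',\pi}[\,\cdot\,(\overbtau_h)]$. Applying Lemma \ref{blemma} with $f = V^{\theta,\pi}_{h+1} - V^{\theta',\pi}_{h+1}$ (conditioning on $\sigma_{h-1}$, which is measurable with respect to $\overbtau_{h-1}$ and $\ba_{h-1}$), and then taking a further conditional expectation and using the tower property, one sees that
\$
\EE_{\theta',\pi}\bigl[(\mB^{\theta',\pi}_h g)(\overbtau_h)\bigr] \;=\; \EE_{\theta',\pi}\bigl[(\PP^{\theta',\pi}_h g)(\overbtau_h)\bigr] \;=\; \EE_{\theta',\pi}\bigl[g(\overbtau_{h+1})\bigr]
\$
for any bounded $g$, where the second equality uses the definition of $\PP^{\theta',\pi}_h$ in \eqref{914756} together with the tower property. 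Applied to $g = V^{\theta,\pi}_{h+1} - V^{\theta',\pi}_{h+1}$ this reduces the second term to $\EE_{\theta',\pi}[V^{\theta,\pi}_{h+1}(\overbtau_{h+1}) - V^{\theta',\pi}_{h+1}(\overbtau_{h+1})]$, exactly one index shifted.

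Finally I would iterate this identity from $h=1$ up to $h=H$; the boundary term at $h=H+1$ vanishes because $V^{\theta,\pi}_{H+1} = V^{\theta',\pi}_{H+1} = R$ by \eqref{vdef}, and the telescoping yields
\$
J(\theta,\pi) - J(\theta',\pi) \;=\; \sum_{h=1}^H \EE_{\theta',\pi}\bigl[(\mB^{\theta,\pi}_h V^{\theta,\pi}_{h+1})(\overbtau_h) - (\mB^{\theta',\pi}_h V^{\theta,\pi}_{h+1})(\overbtau_h)\bigr],
\$
which is the claim. The main obstacle I expect is justifying the swap $\mB^{\theta',\pi}_h \leadsto \PP^{\theta',\pi}_h$ under the $\EE_{\theta',\pi}$-expectation; this is conceptually the whole point of Lemma \ref{blemma}, so the proof is really just a careful application of that lemma followed by a textbook telescope, without needing any new estimate from Assumptions \ref{asmp1} or \ref{asmp2} beyond what is already baked into the definition of $\mB$.
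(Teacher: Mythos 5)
Your proposal is correct and follows essentially the same route as the paper: a telescoping argument whose pivot is using Lemma \ref{blemma} together with the tower property to replace $\mB^{\theta',\pi}_h$ by $\PP^{\theta',\pi}_h$ under the $\EE_{\theta',\pi}$-expectation, so that the history is pushed forward one step. The only (cosmetic) difference is that you telescope the difference $\EE_{\theta',\pi}[(V^{\theta,\pi}_h - V^{\theta',\pi}_h)(\overbtau_h)]$ with the boundary term vanishing because $V^{\theta,\pi}_{H+1}=V^{\theta',\pi}_{H+1}=R$, whereas the paper telescopes the single sequence $\EE_{\theta',\pi}[V^{\theta,\pi}_h(\overbtau_h)]$ from $J(\theta,\pi)$ at $h=1$ down to $J(\theta',\pi)$ at $h=H+1$, never introducing $V^{\theta',\pi}$ at all.
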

\begin{proof}
See Section \ref{lemma-decp} for a detailed proof.
\end{proof}
For any $k\in[K]$, we denote by $\theta_k\in\Theta$ the model parameter that is selected in the planning phase of the $k$-th iteration $\algoname$ (Algorithm \ref{algo}), which is defined in \eqref{pistar}, that is,
\#\label{915459}
(\theta_k, \pi_k)=\argmax_{(\theta,\pi)\in\Theta_k\times\Pi}J(\theta,\pi).
\#
We define the state-dependent error $e^k_h:\cS\rightarrow\RR$ by
\#\label{edef}
&e^k_h(s_{h-1})= \bigl|  \EE_{\theta^*,\pi_k}[(\mB^{\theta_k,\pi_k}_{h}V^{\theta_k,\pi_k}_{h+1})(\overbtau_h) - (\mB^{\theta^*,\pi_k}_{h} V^{\theta_k,\pi_k}_{h+1})(\overbtau_{h})
\,|\, \bs_{h-1}=s_{h-1}]  \bigr|, 
\#
for any $(k,h,s_{h-1})\in[K]\times[H]\times\cS$. Conditioning on the event $\theta^*\in\Theta_k$, which is shown to occur with high probability in the following lemma, we have
\#\label{vdiff}
&J(\theta^*,\pi^*)-J(\theta^*,\pi_k)   \le J(\theta_k,\pi_k) - J(\theta^*,\pi_k)  \le \sum_{h=1}^{H}\EE_{\theta^*,\pi_k}[e^k_h(\bs_{h-1})],
\#
which follows from Lemma \ref{lemma-dec}. Also, the following lemma characterizes the right-hand side of \eqref{vdiff} when we replace the policy $\pi_k$ by the mixing policy $\overline{\pi}_k$ defined in \eqref{mixpolicy}. 
\begin{lemma}[Statistical Guarantee]\label{lemma-acc}
Under Assumptions \ref{asmp1}, \ref{asmp2}, and \ref{asmp3}, for any $\delta>0$, by choosing the confidence level $\beta$ in $\algoname$ (Algorithm \ref{algo}) such that it satisfies \eqref{beta-cond}, with probability at least $1-\delta$, we have
\begin{itemize}
\item $\theta^*\in\Theta_k$,
\item $\EE_{\theta^*, \overline{\pi}_k}[ e^k_h(\bs_{h-1}) ] \le 2HA^2\gamma^2\beta\cdot k^{-1/2}$,
\end{itemize}
for any $(k,h)\in[K]\times[H]$.
\end{lemma}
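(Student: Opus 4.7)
The plan is to reduce both assertions to a single uniform concentration analysis of the minimax loss $L(\theta)$ from \eqref{risk}, exploiting the fact that the image of the projection $\mathbb{S}$ lies in the $d_{\rm o}$-dimensional linear span of the functions $x \mapsto \E_{Y \sim \phi_i}[\cK(x, Y)]$, which are bounded in $[-1, 1]$ by Assumption \ref{asmp3}. Writing $(\mathbb{S}g)(x) = \langle c(g), v(x)\rangle$ from \eqref{sopdef} with $v_i(x) = \E_{Y \sim \phi_i}[\cK(x, Y)]$ and $\|c(g)\|_2 \le \sqrt{d_{\rm o}}\|g\|_\infty/\alpha$ (using $\|G^{-1}\|_{\rm op} \le 1/\alpha$), combined with $\|\mF^\theta_{h,a'}\|_{\infty \to \infty} \le \gamma$ from Lemma \ref{esteq-lemma}, reduces the supremum over $\|f\|_\infty \le 1$ in $L(\theta)$ to a supremum over $c \in \R^{d_{\rm o}}$ of radius $\sqrt{d_{\rm o}}(\gamma+1)/\alpha$, yielding $L(\theta) - \E_{\rm pop}(\theta) \le \sqrt{d_{\rm o}}(\gamma+1)/\alpha \cdot \|\E_{\hat{\cD}}[v] - \E_\rho[v]\|_2$, where $\E_{\rm pop}(\theta)$ denotes the analogous population loss.

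For the first claim, Lemma \ref{esteq-lemma} together with $\rho^{\overline{\pi}_k}_{h,a,a'} \in {\rm conh}(\{\phi_i\})$ (Assumption \ref{asmp1}) and the identity $\E_p[\mathbb{S}g] = \E_{p^\dagger}[g]$ from \eqref{proj-eq} force the population loss at $\theta^*$ to vanish, since $p = p^\dagger$ whenever $p \in {\rm conh}(\{\phi_i\})$. Applying Hoeffding's inequality to each of the $d_{\rm o}$ bounded coordinates of $\E_{\hat{\cD}_{h,a,a'}}[v] - \E_{\rho^{\overline{\pi}_k}_{h,a,a'}}[v]$ and a union bound over the $d_{\rm o} \cdot (H-1) A^2$ resulting events establishes $L(\theta^*) \le \beta k^{-1/2}$ with probability at least $1 - \delta/2$ under the choice of $\beta$ in \eqref{beta-cond}, so that $\theta^* \in \Theta_k$.

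For the second claim, on the same high-probability event $\theta_k \in \Theta_k$ gives $L(\theta_k) \le \beta k^{-1/2}$, and the same concentration argument (with an additional factor of $2$) transfers this to the population: $\sup_{\|f\|_\infty \le 1} |\E_{\rho^{\overline{\pi}_k}_{h,a,a'}}[(\mF^{\theta_k}_{h,a'} - \mF^{\theta^*}_{h,a'})f]| \le 2\beta k^{-1/2}$ for every $(h, a, a')$, after stripping $\mathbb{S}$ via \eqref{proj-eq} and noting that the $\mF^{\theta^*}$ piece vanishes by Lemma \ref{esteq-lemma}. I next expand $(\mB^{\theta_k,\pi_k}_h - \mB^{\theta^*,\pi_k}_h)V^{\theta_k,\pi_k}_{h+1}$ using \eqref{bdef}, split by the action $a \in \cA$ taken by $\pi_k$ at step $h$, and identify each summand via \eqref{fopdef} with $[(\mF^{\theta_k}_{h,a} - \mF^{\theta^*}_{h,a})\tilde{f}_{a,\overtau_{h-1},a_{h-1}}](\,\cdot\,, o_h, \cdot\,)$ for a test function $\tilde{f}_{a,\overtau_{h-1},a_{h-1}}(o_{h-1}', \tilde{o}_h, \tilde{o}_{h+1}) = V^{\theta_k,\pi_k}_{h+1}(\overtau_{h-1}, a_{h-1}, \tilde{o}_h, a, \tilde{o}_{h+1}) \cdot \mathbf{1}\{\pi_k(\tau_{h-1}, \tilde{o}_h) = a\}$ with $\|\tilde{f}\|_\infty \le H$ (since $V^{\theta_k,\pi_k}_{h+1} \in [0, H]$). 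Summing over $A^2$ action pairs $(a_{h-1}, a)$ to match the data distribution $\rho^{\overline{\pi}_k}_{h, a_{h-1}, a}$ and normalizing $\tilde{f}$ by $H$, the population bound then yields the claimed $O(HA^2\gamma^2\beta k^{-1/2})$ estimate.

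The main obstacle is reconciling the outer expectation $\E_{\theta^*,\overline{\pi}_k}$ over $\bs_{h-1}$ with the inner conditional expectation $\E_{\theta^*,\pi_k}[\,\cdot\,|\bs_{h-1}]$ in $e^k_h$, since $\pi_k$ and $\overline{\pi}_k$ generally induce different conditional distributions of the past $\btau_{h-1}$ given $\bs_{h-1}$, while the test function $\tilde{f}_{a,\overtau_{h-1},a_{h-1}}$ depends on $\overtau_{h-1}$ explicitly through the value function. The plan is to absorb this policy mismatch into a second factor of $\gamma$, using $\|\mF^\theta\|_{\infty \to \infty} \le \gamma$ uniformly over $\overtau_{h-1}$, thereby collapsing the mixed-policy expectation onto an expectation over the observation triple $(\bo_{h-1}, \bo_h, \bo_{h+1})$ under $\rho^{\overline{\pi}_k}_{h, a_{h-1}, a}$, where the population bound directly delivers the $k^{-1/2}$ rate.
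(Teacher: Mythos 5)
Your treatment of the first claim is essentially the paper's: both reduce $L(\theta^*)$ to the deviation of the projected empirical distribution from $\rho^{\overline{\pi}_k}_{h,a,a'}$ and use boundedness of $\cK$, $\lambda_{\min}(G)\ge\alpha$, and $\|\mathbb{F}^{\theta^*}_{h,a'}\|_{\infty\to\infty}\le\gamma$. One caveat: the data in $\cD_{h,a,a'}$ are collected adaptively ($Y_j\sim\rho^{\pi_{j-1}}_{h,a,a'}$ conditionally on the past), so plain Hoeffding does not apply; you need Azuma--Hoeffding per coordinate, or, as the paper does, a vector-valued martingale inequality in the RKHS (Lemma \ref{pinelis}). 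This is repairable.

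The second claim has a genuine gap, and it sits exactly where you flag ``the main obstacle.'' Your plan is to absorb the mismatch between the inner conditional expectation $\EE_{\theta^*,\pi_k}[\,\cdot\,|\,\bs_{h-1}]$ and the outer expectation $\EE_{\theta^*,\overline{\pi}_k}$ over $\bs_{h-1}$ ``into a second factor of $\gamma$, using $\|\mathbb{F}^\theta\|_{\infty\to\infty}\le\gamma$.'' That operator-norm bound controls the size of $\mathbb{F}^\theta f$ but says nothing about changing the measure over states from the one induced by $\pi_k$ to the one induced by $\overline{\pi}_k$, nor about converting an expectation over the latent state $\bs_{h-1}$ into an expectation over the observable triple $(\bo_{h-1},\bo_h,\bo_{h+1})$ on which the confidence set gives control. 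The paper's mechanism (Lemma \ref{regret-transform}) is different and is the crux of the lemma: after bounding $V^{\theta_k,\pi_k}_{h+1}$ uniformly and taking a supremum over action pairs (yielding the $H$ and $A^2$ factors and one $\gamma$), the remaining quantity is an $L^1(\cS)$ norm of a function $f(s_{h-1})$ proportional to $p_{\theta^*,\overline{\pi}_k}(\bs_{h-1}=\cdot\,|\,\bo_h=o_h,\ba_{h-1}=a)$, which lies in $\cF'_{\rm s}\subset{\rm linspan}(\psi)$ by Assumption \ref{asmp1}; Assumption \ref{asmp2} then gives $\|f\|_1=\|\mZ^\theta_{h-1}\mO^\theta_{h-1}f\|_1\le\gamma\|\mO^\theta_{h-1}f\|_1$, and $\|\mO^\theta_{h-1}f\|_1$ integrates to exactly $\|\mathbb{V}^{\theta_k}_{h,a'}\rho^{\overline{\pi}_k}_{h,a,a'}-\rho^{\overline{\pi}_k}_{h,a,a'}\|_1$, which the confidence set controls. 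Without this bridge-operator step your argument never leaves the latent state space, and the $k^{-1/2}$ bound on the observation-triple distribution cannot be applied.

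A second, smaller error compounds this: you assert $\|\tilde f\|_\infty\le H$ because ``$V^{\theta_k,\pi_k}_{h+1}\in[0,H]$.'' The finite-memory value function is not a conditional expectation of rewards pointwise and need not lie in $[0,H]$; the correct uniform bound is $|V^{\theta,\pi}_{h+1}|\le\gamma H$ (the paper's Lemma \ref{coro2}, which itself requires the representation in Lemma \ref{911118}). This $\gamma$ is one of the two factors in the final $\gamma^2$, so with your accounting ($H$ from the value function, $\gamma^2$ from the unproved policy-mismatch step) the constants do not close even formally.
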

\begin{proof}
See Section \ref{lemma-accp} for a detailed proof.
\end{proof}

To characterize the right-hand side of \eqref{vdiff}, it remains to connect $\EE_{\theta^*,\pi_k}[e^k_h(\bs_{h-1})]$ with $\EE_{\theta^*, \overline{\pi}_k}[ e^k_h(\bs_{h-1}) ]$, which involve different state distributions. The connection is established in the following lemma.

\begin{lemma}[Telescope of Error]\label{lemma-tel}
Under Assumptions \ref{asmp1} and \ref{asmp2}, for any $h\in[H]$, we have
\$
&\sum_{k=1}^K \EE_{\theta,\pi_k}[e^k_{h}(\bs_{h-1})]  \le 
4\gamma d_{\rm s} H
+2d_{\rm s}\log K \cdot
\max_{k\in[K]} \bigl(k\cdot\EE_{\theta,\overline{\pi}_k}[e^k_{h}(\bs_{h-1})]\bigr).
\$
\end{lemma}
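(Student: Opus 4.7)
I would prove this by a discrete potential/telescope argument carried out in the $d_{\rm s}$-dimensional coefficient space provided by Assumption~\ref{asmp1}. The structural input is that $e^k_h\ge 0$ (courtesy of the absolute value in~\eqref{edef}) and that the marginal of $\bs_{h-1}$ under any policy is a convex combination of $\{\psi_i\}_{i=1}^{d_{\rm s}}$. These two facts together let me reduce both sides of the inequality to sums of one-dimensional quantities indexed by $i\in[d_{\rm s}]$, after which a saturation-versus-potential case split yields the $\log K$ factor in the second term.

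\textbf{Coordinate reduction and two bounds.} Using Assumption~\ref{asmp1} with a short induction on $h$ (state marginals are mixtures of transition densities from $\cF_{\rm s}$), I would write $p_{\theta,\pi_k}(\bs_{h-1}=\cdot)=\sum_{i=1}^{d_{\rm s}}[w^{(k)}]_i\,\psi_i(\cdot)$ with $[w^{(k)}]_i\ge 0$ and $\sum_i[w^{(k)}]_i=1$, and define the non-negative score $[v_k]_i:=\int_{\cS}e^k_h(s)\psi_i(s)\,\ud s$, so that
\[
\EE_{\theta,\pi_k}[e^k_h(\bs_{h-1})]=\langle w^{(k)},v_k\rangle,\qquad k\cdot\EE_{\theta,\overline{\pi}_k}[e^k_h(\bs_{h-1})]=\sum_{j=0}^{k-1}\langle w^{(j)},v_k\rangle,
\]
the second equality being the definition of the mixing policy~\eqref{mixpolicy}. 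Setting $C:=\max_{k\in[K]}k\cdot\EE_{\theta,\overline{\pi}_k}[e^k_h(\bs_{h-1})]$ and $S_{k,i}:=\sum_{j=0}^{k-1}[w^{(j)}]_i$, non-negativity of every $[w^{(j)}]_{i'}$ and $[v_k]_{i'}$ yields the key coordinate-wise estimate
\[
S_{k,i}\cdot[v_k]_i\ \le\ \sum_{j=0}^{k-1}\langle w^{(j)},v_k\rangle\ \le\ C,
\]
so $[v_k]_i\le C/S_{k,i}$ whenever $S_{k,i}>0$, on top of the a priori uniform bound $[v_k]_i\le B$ (with $B$ an $L^\infty$ bound on $e^k_h$ of order $\gamma H$, obtained from $\|\mB^{\theta,\pi}_h\|_{\infty\to\infty}\le\gamma$, in the spirit of Lemma~\ref{esteq-lemma}).

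\textbf{Per-coordinate telescope and sum.} For each $i$, let $k^\star$ be the smallest index with $S_{k,i}\ge 1$. On the saturation regime $k<k^\star$, I would use $[v_k]_i\le B$ together with $\sum_{k<k^\star}[w^{(k)}]_i=S_{k^\star,i}-S_{1,i}\le 2$ to get $\sum_{k<k^\star}[w^{(k)}]_i[v_k]_i\le 2B$. On the potential regime $k\ge k^\star$, I have $[w^{(k)}]_i\le 1\le S_{k,i}$, so the elementary inequality $\log(1+x)\ge x/2$ for $x\in[0,1]$ gives
\[
[w^{(k)}]_i/S_{k,i}=(S_{k+1,i}-S_{k,i})/S_{k,i}\le 2\log(S_{k+1,i}/S_{k,i}),
\]
which telescopes to $\sum_{k\ge k^\star}[w^{(k)}]_i/S_{k,i}\le 2\log(S_{K+1,i})\le 2\log K$ (up to absorbing a $\log 2$). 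Combining the two regimes and summing over $i\in[d_{\rm s}]$ yields
\[
\sum_{k=1}^K\EE_{\theta,\pi_k}[e^k_h(\bs_{h-1})]\le 2d_{\rm s}B+2d_{\rm s}\log K\cdot C,
\]
which matches the stated inequality after $2d_{\rm s}B=O(d_{\rm s}\gamma H)$ is absorbed into the $4\gamma d_{\rm s}H$ term.

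\textbf{Main obstacle.} The crucial ingredient is the per-coordinate bound $S_{k,i}[v_k]_i\le C$ used to derive $[v_k]_i\le C/S_{k,i}$: it relies on both the non-negativity of $e^k_h$ (so that $v_k\ge 0$ componentwise) and the non-negativity of all coefficients $[w^{(j)}]_{i'}$ (so that the convex-combination structure of Assumption~\ref{asmp1} yields a positive lower bound on the relevant sum). Without either property, signs could cancel and the reduction to $d_{\rm s}$ independent one-dimensional potential arguments would collapse. The remaining telescope is an elliptical-potential-style argument with a saturation correction, where the only subtlety is handling the initial iterations before $S_{k,i}$ first exceeds $1$.
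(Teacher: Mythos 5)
Your proposal is correct and follows essentially the same route as the paper's proof: both decompose the state marginals into the convex-combination coordinates supplied by Assumption \ref{asmp1}, use non-negativity of the error and of the mixture weights to obtain the per-coordinate bound $S_{k,i}[v_k]_i\le C$, split each coordinate at the first iteration where the cumulative weight reaches $1$ (handling the pre-saturation iterations with the uniform bound $e^k_h\le 2\gamma H$ from Lemma \ref{eupperbound}), and telescope the remaining terms via $x\le 2\log(1+x)$ to produce the $2d_{\rm s}\log K$ factor. The only differences are cosmetic (indexing of the cumulative sums and the precise justification of the a priori bound $B$, which the paper derives through Lemma \ref{coro2}).
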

\begin{proof}
See Section \ref{lemma-telp} for a detailed proof.
\end{proof}
Combining \eqref{vdiff} with Lemmas \ref{lemma-acc} and \ref{lemma-tel}, we obtain Theorem \ref{mainthm}. See Section \ref{mainthmp} for a detailed proof.

\section*{Acknowledgements}
Zhaoran Wang acknowledges National Science Foundation (Awards 2048075, 2008827, 2015568, 1934931), Simons Institute (Theory of Reinforcement Learning), Amazon, J.P. Morgan, and Two Sigma for their supports.

\newpage

\bibliographystyle{ims}
\bibliography{graphbib2.bib}

\newpage
\appendix

\section{Examples: Linear Kernel POMDPs}\label{linearpomdp}

In this section, we show examples of the candidate class of the state transition kernels and observation emission kernels, which satisfy our assumptions in the main paper. In particular, we consider the following definition of linear kernel POMDPs.

\begin{definition}[Linear Kernel POMDPs] \label{linearpomdp-def}
We say that 
\$
\cL=\{(\cS, \cA, \cO, H, \cT^\theta, \cE^\theta, \mu, r): {\theta\in\Theta}\}
\$ 
is a linear kernel POMDP set, if each state transition kernel $\cT^\theta$ and observation emission kernel $\cE^\theta$ in the set take the form,
\#\label{linear-kernel}
\cT^\theta_h(s'\,|\,s,a)&=u(s')^\top M^\theta_{h,a}v(s),\quad
\cE^\theta_h(o\,|\,s)=q(o)^\top g^\theta_h(s),
\#
for any $(h,\theta,s,s',a,o)\in[H]\times\Theta\times\cS^2\times\cA\times\cO$. Here, $u$, $v$, $q$, and $g^\theta_h$ are non-negative vector-valued functions with dimensions $d_u$, $d_v$, $d_q$, and $d_q$, respectively. The matrix $M^\theta_{h,a}\in\RR^{d_u\times d_v}$ has non-negative entries. Moreover, we have $\mu\in{\rm conh}(\{[u(\cdot)]_i\}_{i=1}^{d_u})$ and
\$
[u(\cdot)]_i\in\Delta(\cS), \quad  [q(\cdot)]_\ell\in\Delta(\cO),
\quad
\text{for any $(i,j)\in[d_u]\times[d_q]$.}
\$ 
\end{definition}

The following lemma shows that tabular POMDPs are linear kernel POMDPs.
\begin{lemma}\label{1}
For any finite state space $\cS$, finite observation space $\cO$, action space $\cA$, episode length $H$, initial distribution $\mu$, and reward function $r$, we can define a linear kernel POMDP set $\{(\cS, \cA, \cO, H, \cT^\theta, \cE^\theta, \mu, r): {\theta\in\Theta}\}$ following Definition \ref{linearpomdp-def}, which consists of all possible POMDPs with the aforementioned elements.
\end{lemma}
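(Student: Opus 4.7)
The plan is to exhibit a fully explicit one-hot construction and then verify every clause of Definition \ref{linearpomdp-def} by direct inspection. Enumerate the states as $\cS=\{s_1,\ldots,s_{|\cS|}\}$ and the observations as $\cO=\{o_1,\ldots,o_{|\cO|}\}$, and set $d_u=d_v=|\cS|$ and $d_q=|\cO|$. Define the basis functions coordinate-wise by $[u(s)]_i=\mathbf{1}\{s=s_i\}$, $[v(s)]_j=\mathbf{1}\{s=s_j\}$, and $[q(o)]_\ell=\mathbf{1}\{o=o_\ell\}$; these are nonnegative, and each $[u(\cdot)]_i$ is the point-mass distribution on $s_i$ (hence in $\Delta(\cS)$), and each $[q(\cdot)]_\ell$ is the point-mass on $o_\ell$ (hence in $\Delta(\cO)$), as Definition \ref{linearpomdp-def} demands.

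Next, for an arbitrary POMDP $(\cS,\cA,\cO,H,\cT,\cE,\mu,r)$ with the above ambient data, parameterize the transition and emission kernels by the (nonnegative) matrix $[M^{\theta}_{h,a}]_{i,j}=\cT_h(s_i\,|\,s_j,a)$ and vector $[g^\theta_h(s)]_\ell=\cE_h(o_\ell\,|\,s)$. By a one-line computation using the indicator basis,
\[
u(s')^\top M^{\theta}_{h,a}v(s)=\sum_{i,j}[u(s')]_i[M^{\theta}_{h,a}]_{i,j}[v(s)]_j=[M^{\theta}_{h,a}]_{s',s}=\cT_h(s'\,|\,s,a),
\]
and similarly $q(o)^\top g^\theta_h(s)=[g^\theta_h(s)]_o=\cE_h(o\,|\,s)$, so the linear kernel representation \eqref{linear-kernel} is recovered exactly. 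The parameter set $\Theta$ is then taken to index all admissible pairs $(M^\theta_{h,a},g^\theta_h)$ whose rows form probability distributions (i.e.\ $\sum_i[M^\theta_{h,a}]_{i,j}=1$ and $\sum_\ell[g^\theta_h(s)]_\ell=1$), so that every tabular POMDP over this ambient data lies in the set.

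Finally I would verify the initial-distribution condition $\mu\in\mathrm{conh}(\{[u(\cdot)]_i\}_{i=1}^{d_u})$: since each $[u(\cdot)]_i$ is the indicator of $\{s_i\}$, any probability mass function $\mu$ on $\cS$ admits the representation $\mu(\cdot)=\sum_{i=1}^{|\cS|}\mu(s_i)\cdot[u(\cdot)]_i$, which is a convex combination because $\mu(s_i)\ge 0$ and $\sum_i\mu(s_i)=1$. This closes all the structural requirements of Definition \ref{linearpomdp-def}.

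The argument is essentially tautological once the one-hot choice is made, so there is no deep obstacle; the only thing to be careful about is distinguishing which basis functions must be probability mass functions ($u$ and $q$) from those that only need to be nonnegative ($v$ and $g^\theta_h(\cdot)$), and checking that $\mu$ lies in the convex hull rather than merely the linear span, which forces $\{[u(\cdot)]_i\}$ to be point masses rather than arbitrary indicator combinations.
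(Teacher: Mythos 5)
Your proposal is correct and follows essentially the same route as the paper's proof: the one-hot (indicator) choice of $u$, $v$, $q$ with $d_u=d_v=|\cS|$, $d_q=|\cO|$, and the tabular parametrization $[M^\theta_{h,a}]_{s',s}=\cT_h(s'\,|\,s,a)$, $[g^\theta_h(s)]_o=\cE_h(o\,|\,s)$. You additionally spell out the verification of $\mu\in{\rm conh}(\{[u(\cdot)]_i\})$ and the normalization constraints, which the paper leaves implicit, but this is the same argument.
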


\begin{proof}
We define $\Theta$ as the set of all possible pair $(\tilde{\cT}, \tilde{\cE})$ such that $\tilde{\cT}$ is a state transition kernel and $\tilde{\cE}$ is an observation emission kernel with respect to the state space $\cS$, action space $\cA$, observation space $\cO$, and episode length $H$. We let $d_u=d_v=|\cS|$ and $d_q=|\cO|$. For any $\theta=(\tilde{\cT}, \tilde{\cE})\in\Theta$, we define
\$
[M^{\theta}_{h,a}]_{s',s}=\tilde{\cT}_{h}(s'\,|\,s,a),\quad
[q^\theta_h(\cdot)]_o=\tilde{\cE}_h(o\,|\, \cdot),
\$
for any $(h,s,s',a,o)\in[H]\times\cS^2\times\cA\times\cO$. Also, we define
\$
[u(\cdot)]_s=\ind\{s=\cdot\}, \quad
[v(\cdot)]_s=\ind\{s=\cdot\}, \quad
[q(\cdot)]_o=\ind\{o=\cdot\},
\$
for any $(s,o)\in\cS\times\cO$. Then, by noting that we have $\cT^\theta=\tilde{\cT}$ and $\cE^\theta=\tilde{\cE}$ following the definitions of $\cT^\theta$ and $\cE^\theta$ in Definition \ref{linearpomdp-def}, we conclude the proof of Lemma \ref{1}.
\end{proof}

\subsection{Verification of Assumption \ref{asmp1}}\label{vasmp1-sec}
Recall that in reinforcement learning for a POMDP, the state transition kernel and observation emission kernel are unknown elements of the POMDP. We say that the candidate class of the POMDP is a linear kernel POMDP set $\cL=\{(\cS, \cA, \cO, H, \cT^\theta, \cE^\theta, \mu, r): {\theta\in\Theta}\}$ when the candidate class of the state transition kernel and observation emission kernel is $\{(\cT^\theta, \cE^\theta): \theta\in\Theta\}$ and other elements of the POMDP are determined as $(\cS, \cA, \cO, H, \mu, r)$. The following lemma shows that any linear kernel POMDP set satisfies the linear function approximation assumption (Assumption \ref{asmp1}).

\begin{lemma}\label{asmp1-ver}
When the candidate class of the POMDP is a linear kernel POMDP set $\cL$ as defined in Definition \ref{linearpomdp-def}, we have that Assumption \ref{asmp1} holds with
\$
d_{\rm s}\le d_u(d_v+1)\quad\text{and}\quad d_{\rm o}\le d_q^3.
\$
Recall that $d_u$, $d_v$, and $d_q$ are the vector-valued function dimensions in the definition of $\cL$, and $d_{\rm s}, d_{\rm o}$ are the number of basis distribution functions in Assumption \ref{asmp1}.
\end{lemma}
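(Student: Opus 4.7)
The plan is to construct explicit basis distributions $\{\psi_i\}$ and $\{\phi_i\}$ directly from the vector-valued functions $u, v, q$ appearing in Definition \ref{linearpomdp-def} and then verify the convex-hull containments for $\cF_{\rm s}, \cF'_{\rm s}$, and $\cF_{\rm o}$ separately. The key mechanism throughout is that each kernel in \eqref{linear-kernel} decomposes as a non-negative linear combination of ``known'' non-negative functions, and probability-normalization is automatic because the left-hand sides are probability distributions and the basis pieces $[u(\cdot)]_i, [q(\cdot)]_\ell$ lie in $\Delta(\cS), \Delta(\cO)$, respectively.

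First, I would handle $\cF_{\rm s}$. For $h\ge 2$, substituting the linear form of $\cT^\theta_{h-1}$ yields
$$p_\theta(\bs_h=s'\,|\,\bs_{h-1}=s,\ba_{h-1}=a) = \sum_{i=1}^{d_u} [M^\theta_{h-1,a} v(s)]_i \cdot [u(s')]_i,$$
whose coefficients are non-negative (by non-negativity of $M^\theta$ and $v$) and sum to one (integrate both sides in $s'$ and use $[u(\cdot)]_i\in\Delta(\cS)$). For $h=1$, the claim reduces to the assumption $\mu\in{\rm conh}(\{[u(\cdot)]_i\})$. Thus $\cF_{\rm s}\subset{\rm conh}(\{[u(\cdot)]_i\}_{i=1}^{d_u})$.

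Next, I would tackle $\cF'_{\rm s}$ via Bayes' rule. Marginalizing the past and reusing the kernel expansion gives $p_{\theta,\pi}(\bs_h=s) = \sum_i \xi_i [u(s)]_i$ with non-negative $\xi_i$; combining $\cT^\theta_h$ with $\cE^\theta_{h+1}$ gives
$$p_\theta(\bo_{h+1}=o\,|\,\bs_h=s,\ba_h=a) = \sum_{j=1}^{d_v} \eta_j(o,a,\theta)\cdot[v(s)]_j,$$
again with non-negative $\eta_j$. Taking the product and applying Bayes' rule shows that $p_{\theta,\pi}(\bs_h=s\,|\,\bo_{h+1}=o,\ba_h=a)$ is a non-negative combination of the products $[u(s)]_i[v(s)]_j$. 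Setting $\psi_{ij}(s)=[u(s)]_i[v(s)]_j/Z_{ij}$ with $Z_{ij}=\int[u(s)]_i[v(s)]_j\,ds$ (discarding indices with $Z_{ij}=0$, which give zero functions) converts this to a convex combination of probability distributions. Unioning with $\{[u(\cdot)]_i\}_{i=1}^{d_u}$ handles both $\cF_{\rm s}$ and $\cF'_{\rm s}$ and yields at most $d_u+d_ud_v=d_u(d_v+1)$ basis distributions, so $d_{\rm s}\le d_u(d_v+1)$.

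Finally, for $\cF_{\rm o}$ I would marginalize over $s_h, s_{h+1}, s_{h+2}$ in the three-step observation density and expand each emission using $\cE^\theta_i(o_i\,|\,s_i)=\sum_{\ell_i}[q(o_i)]_{\ell_i}\cdot[g^\theta_i(s_i)]_{\ell_i}$, which factorizes the density as
$$\sum_{\ell_1,\ell_2,\ell_3\in[d_q]} c_{\ell_1\ell_2\ell_3}\cdot[q(o_h)]_{\ell_1}[q(o_{h+1})]_{\ell_2}[q(o_{h+2})]_{\ell_3},$$
where $c_{\ell_1\ell_2\ell_3}\ge 0$ collects the state integral (everything inside is non-negative). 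Each tensor-product basis function lies in $\Delta(\cO^3)$ since $[q(\cdot)]_\ell\in\Delta(\cO)$, and integrating both sides forces $\sum c_{\ell_1\ell_2\ell_3}=1$, so the combination is convex. This gives $d_{\rm o}\le d_q^3$. The main bookkeeping obstacle is the normalization step for $\cF'_{\rm s}$: ensuring the products $[u]_i[v]_j$ can be converted into probability densities requires ruling out $Z_{ij}=\infty$, which follows from the fact that the left-hand-side conditional is itself a probability density and the non-negative expansion cannot produce a strictly larger support. The rest is a disciplined application of non-negativity and normalization.
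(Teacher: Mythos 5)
Your proposal is correct and follows essentially the same route as the paper: expand $\cT^\theta$ and $\cE^\theta$ via their linear kernel forms, observe that $p_{\theta,\pi}(\bs_h=\cdot\,|\,\bo_{h+1}=o,\ba_h=a)$ factors through Bayes' rule into a non-negative combination of the products $[u(\cdot)]_i[v(\cdot)]_j$ (the paper writes this as a trace inner product with $v(\cdot)u(\cdot)^\top$), and tensorize $\{[q(\cdot)]_\ell\}$ for the three-step observation density. Your explicit attention to the normalization constants $Z_{ij}$ is a minor point the paper glosses over, but otherwise the arguments coincide.
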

\begin{proof}
We prove the lemma by constructing the basis distribution functions $\{\psi_i\}_{i=1}^{d_{\rm s}}$ and $\{\phi_i\}_{i=1}^{d_{\rm o}}$ satisfying Assumption \ref{asmp1}. 

\noindent
\textbf{State Distribution:} Note that to make $\cF_{\rm s}\in{\rm conh}(\{\psi_i\}_{i=1}^{d_{\rm s}})$, it suffices to let $\{[u]_i\}_{i=1}^{d_u}$ be a subset of $\{\psi_i\}_{i=1}^{d_{\rm s}}$. In the following, we construct the rest elements of $\{\psi_i\}_{i=1}^{d_{\rm s}}$ to make $\cF'_{\rm s}\in{\rm conh}(\{\psi_i\}_{i=1}^{d_{\rm s}})$. For any $(h,\theta,\pi, s_h, a_h, o_{h+1})\in [H]\times\Theta\times\cS\times\cA\times\cO$, we have
\#\label{9181205}
&p_{\theta,\pi}(\bs_h=s_h, \bo_{h+1}=o_{h+1} \,|\, \ba_h=a_h)\notag\\
&\quad=\int_\cS
\cE^{\theta}_{h+1}(o_{h+1}\,|\,s_{h+1})\cdot
\cT^{\theta}_h(s_{h+1}\,|\,s_h,a_h)\ud s_{h+1} 
\cdot p_{\theta,\pi}(\bs_h=s_h)\notag\\
&\quad=
\Bigl(\int_\cS
\cE^{\theta}_{h+1}(o_{h+1}\,|\,s_{h+1})\cdot
u(s_{h+1})^\top M^\theta_{h,a_h} \ud s_{h+1}\Bigr)  v_h(s_h)
\cdot p_{\theta,\pi}(\bs_h=s_h),
\#
where the second equality is by the form of $\cT^\theta_h$ in Definition \ref{linearpomdp-def}. Similarly, we have
\#\label{9181206}
p_{\theta,\pi}(\bs_h=s_h)&=\EE_{\theta,\pi}[ \cT^{\theta}_{h-1}
\bigl(s_{h}\,\big|\,\bs_{h-1},\ba_{h-1})] \notag\\
&=u(s_h)^\top \EE_{\theta,\pi}[ M^\theta_{h,\ba_{h-1}} v_{h-1}(\bs_{h-1})].
\#
For notational simplicity, we define the vectors
\#
\zeta_1&=\Bigl(\int_\cS
\cE^{\theta}_{h+1}(o_{h+1}\,|\,s_{h+1})\cdot
u(s_{h+1})^\top M^\theta_{h,a_h} \ud s_{h+1}\Bigr)^\top, \label{9181207}\\
\zeta_2&=\EE_{\theta,\pi}[ M^\theta_{h,\ba_{h-1}} v_{h-1}(\bs_{h-1})]. \label{91812071}
\#
Then, combining \eqref{9181205}-\eqref{91812071}, we can write
\#\label{9181208}
&
p_{\theta,\pi}(\bs_h=s_h \,|\, \bo_{h+1}=o_{h+1},\ba_h=a_h) \notag\\
&\quad= 
\frac{p_{\theta,\pi}(\bs_h=s_h, \bo_{h+1}=o_{h+1} \,|\, \ba_h=a_h) }
{p_{\theta,\pi}(\bo_{h+1}=o_{h+1}\,|\,\ba_h=a_h) }
=\frac{\zeta_1^\top v_h(s_h)u(s_h)^\top \zeta_2}{
p_{\theta,\pi}(\bo_{h+1}=o_{h+1}\,|\,\ba_h=a_h)}.
\#
Note that we can rewrite \eqref{9181208} in a linear form, 
\$
p_{\theta,\pi}(\bs_h=\cdot \,|\, \bo_{h+1}=o_{h+1},\ba_h=a_h)
=\Bigl\la  v(\cdot)u(\cdot)^\top, 
\frac{ \zeta_2 \zeta_1^\top }{
p_{\theta,\pi}(\bo_{h+1}=o_{h+1}\,|\,\ba_h=a_h)}
\Bigr\ra_{\rm tr},
\$
where $\la \cdot, \cdot\ra_{\rm tr}$ represents the trace inner product of matrices. Therefore, we know that any function in $\cF'_{\rm s}$ can be represented as a convex combination of the functions
\#\label{918800}
\{[u(\cdot)]_i\cdot[v(\cdot)]_j\}_{i\in[d_u], j\in[d_v]}.
\#
Then, by normalizing each function in \eqref{918800} as a probability distribution function, we obtain $d_vd_u$ distribution functions, whose convex combination contains all elements of $\cF_{\rm s}'$. Thus, by 
denoting the set of such distribution functions plus $\{u_i\}_{i=1}^{d_{\rm u}}$ by $\{\psi_i\}_{i=1}^{d_{\rm s}}$, we have
\$
d_{\rm s}=d_u(d_v+1), \quad \text{and} \quad
\cF_{\rm s}', \cF_{\rm s}' \subset {\rm conh}(\{\psi_i\}_{i=1}^{d_{\rm s}}).
\$

\noindent
\textbf{Observation Distribution:}
In the following, we construct the basis distribution functions $\{\phi_i\}_{i=1}^{d_{\rm o}}$ such that $\cF_{\rm o}\subset{\rm conh}(\{\phi_i\}_{i=1}^{d_{\rm o}})$. Note that, for any $(h,\pi, o_{h},o_{h+1},o_{h+2}, a_{h}, a_{h+1})\in[H-1]\times\Theta\times\cO^3\times\cA^2$, we have
\#\label{22514502}
&p_{\theta,\pi}(\bo_h=o_h,\bo_{h+1}=o_{h+1},\bo_{h+2}=o_{h+2}\,|\,\ba_h=a_h,\ba_{h+1}=a_{h+1})\\
&\quad=
\int_{\cS^3}
p_{\theta,\pi}(\bs_{h}=s_h)
\cdot\cE^\theta_{h}(o_{h}\,|\,s_{h},a_h)
\cdot\cT^\theta_h(s_{h+1}\,|\,s_h,a_h)
\cdot \cE^\theta_{h+1}(o_{h+1}\,|\,s_{h+1}) \notag \\
&\quad\qquad\qquad
\cdot \cT^\theta_{h+1}(s_{h+2}\,|\,s_{h+1},a_{h+1})
\cdot \cE^\theta_{h}(o_{h}\,|\,s_{h})\ud s_h\ud s_{h+1}\ud s_{h+2} \notag \\
&\quad=
\sum_{i,j,\ell=1}^{d} \omega_{i,j,\ell}\cdot
 q_i(o_h)\cdot q_j(o_{h+1})\cdot q_\ell(o_{h+2}) \notag
\#
where $\{\omega_{i,j,\ell}\}_{i,j,\ell\in[d_q]}$ are defined by
\$
\omega_{i,j,\ell}&=\int_{\cS^3}
p_{\theta,\pi}(\bs_{h}=s_h)\cdot \cT^\theta_h(s_{h+1}\,|\,s_h,a_h)
\cdot \cT^\theta_{h+1}(s_{h+2}\,|\,s_{h+1},a_{h+1})  \\
&\quad\qquad\qquad
\cdot [g^\theta_h(s_h)]_i
\cdot [g^\theta_h(s_{h+1})]_j
\cdot [g^\theta_h(s_{h+2})]_\ell \ud s_h\ud s_{h+1}\ud s_{h+2}. \notag
\$
following the definition of $\cE^\theta$ in Definition \ref{linearpomdp-def}. For any $i,j,\ell\in [d_q]$, we define the distribution function $\phi_{i,j,\ell}\in \Delta(\cO^3)$ by
\$
\phi_{i,j,\ell}(o_h, o_{h+1}, o_{h+2})=q_i(o_h)\cdot q_j(o_{h+1})\cdot q_\ell(o_{h+2})
\$
for any $o_h, o_{h+1}, o_{h+2}\in\cO$. Then, by \eqref{22514502}, we have $\cF_{\rm o} \subset {\rm conh}(\{ \phi_{i,j,\ell} \}_{i,j,\ell=1}^{d_q})$. Reorganizing the index, we can write $\{ \phi_{i} \}_{i=1}^{d_{\rm o}}=\{ \phi_{i,j,\ell} \}_{i,j,\ell=1}^{d_q}$ with $d_{\rm o}=d_q^3$.

Therefore, we conclude the proof of Lemma \ref{asmp1-ver}.
\end{proof}

\subsection{Verification of Assumption \ref{asmp2}}
For any $(h,i)\in[H]\times[d_{\rm s}]$, we define $\nu_{h,i}\in\Delta(\cO)$ by
\#\label{nudef}
\nu_{h,i}(o)=\int_\cS \cE^\theta_h( o_h \,|\,s_h) \cdot \psi_i(s_h) \ud s_h
\#
for any $o\in\cO$. Recall that $\{\psi_i\}_{i=1}^{d_{\rm s}}$ are the basis distribution functions in Assumption \ref{asmp1}, and we prove their existence when the candidate class of the POMDP is a linear kernel POMDP set in Section \ref{vasmp1-sec}. Let $\tilde{\cK}$ be a kernel function (different from the kernel function $\cK$ in Section \ref{algo2}) defined on $\cO\times\cO$. For any $h\in[H]$, we define the matrix $\Lambda_h\in\RR^{d_{\rm s}\times d_{\rm s}}$ by
\#\label{lambda-def}
[\Lambda_h]_{i,j}=\EE_{\bo\sim \nu_{h,i}, \bo'\sim \nu_{h, j}}[ \tilde{\cK}(\bo, \bo')  ],
\quad
\text{for any $i,j\in [d_{\rm s}]$.}
\#
Similar to Assumption \ref{asmp3}, the following assumption specifies the regularity condition on $\tilde{\cK}$ and $\{\nu_{h,i}\}_{h\in[H], i\in[d_{\rm o}]}$.
\begin{assumption}\label{asmp4}
The kernel function $\tilde{\cK}$ is bounded. In particular, we have $|\tilde{\cK}(x,y)|\le1$ for any $x,y\in\cO$. Also, we have $\Lambda_h\succ0$ for any $h\in[H]$. 
\end{assumption}

Note that for any $h\in[H]$, similar to the discussion under Assumption \ref{asmp3}, the positive definiteness of the matrix $\Lambda_h$ requires the RKHS embedding of $\nu_{h,1}, \ldots, \nu_{h,d_{\rm s}}$ to be linearly independent. Here, the RKHS and the corresponding RKHS embedding (operator) are defined with respect to the kernel function $\tilde{\cK}$. As a special case, when the state space $\cS$ and observation space $\cO$ are finite, we let 
\$
\tilde{\cK}(x, y)=\ind\{x=y\}, \quad \text{for any $x, y\in\cO$},
\$
and $\{\psi_i\}_{i=1}^{d_{\rm s}}=\{\ind\{s=\cdot\}\}_{s\in\cS}$ with $d_{\rm s}=|\cS|$. Note that for a finite state space $\cS$, the integral in \eqref{nudef} is defined with respect to the counting measure over $\cS$. Then, Assumption \ref{asmp4} is equivalent to requiring the vectors $\{ \cE_h(\cdot\,|\,s)\in\RR^{|\cO|}\}_{s\in\cS}$ to be linearly independent for any $h\in[H]$, which recovers the undercompleteness assumption in \cite{jin2020sample}.

The following lemma shows that any linear kernel POMDP set satisfies the invertible observation operators assumption (Assumption \ref{asmp2}), given the aforementioned linear independence condition.

\begin{lemma}\label{22520303}
Suppose the candidate class of the POMDP is a linear kernel POMDP set $\cL$ as defined in Definition \ref{linearpomdp-def} and Assumption \ref{asmp4} holds. Then, we have that Assumption \ref{asmp2} holds with
\#\label{225151241}
\cZ^\theta_h(s, o)= \sum_{i,j=1}^{d_{\rm s}} \psi_i(s) \cdot  [(\Lambda_h)^{-1}]_{i,j} \cdot 
\EE_{\bo \sim \nu_{h, j}}[\tilde{\cK}(\bo, o)] 
\#
for any $(h,\theta,s, o)\in[H]\times\Theta\times\cS\times\cO$ and 
\$
\gamma=d\cdot \max_{h\in[H]}\| (\Lambda_h)^{-1} \|_{1\rightarrow1}
=d\cdot\max_{(h,j)\in[H]\times[d_{\rm s}]} 
\sum_{i=1}^{d_{\rm s}} \bigl| [(\Lambda_h)^{-1}]_{i,j} \bigr|.
\$
Here, the matrix $\Lambda_h$ is defined in \eqref{lambda-def}.
\end{lemma}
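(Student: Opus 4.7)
\noindent
\textbf{Proof proposal for Lemma \ref{22520303}.}
The plan is to take the explicit form of $\cZ^\theta_h$ in \eqref{225151241} as given and verify, in order, the three requirements of Assumption \ref{asmp2}: (i) that $\cZ^\theta_h$ is a well-defined measurable function inducing a bounded operator $\mZ^\theta_h:L^1(\cO)\to L^1(\cS)$, (ii) that $\mZ^\theta_h$ left-inverts $\mO^\theta_h$ on ${\rm linspan}(\psi)$, and (iii) that $\|\mZ^\theta_h\|_{1\mapsto 1}\le\gamma$ for $\gamma=d_{\rm s}\cdot\max_{h\in[H]}\|(\Lambda_h)^{-1}\|_{1\to 1}$. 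Step (i) is essentially bookkeeping: the matrix $(\Lambda_h)^{-1}$ exists by Assumption \ref{asmp4}, the function $o\mapsto\EE_{\bo\sim\nu_{h,j}}[\tilde\cK(\bo,o)]$ is bounded by $1$ since $|\tilde\cK|\le 1$, and each $\psi_i$ is an integrable probability density, so the displayed finite sum defines a legitimate integral kernel.

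Step (ii) is the heart of the argument and the place where the specific choice of $\cZ^\theta_h$ pays off. Take $f=\sum_k c_k\psi_k\in{\rm linspan}(\psi)$ and compute $(\mO^\theta_h f)(o)=\sum_k c_k\nu_{h,k}(o)$ directly from the definition of $\mO^\theta_h$ in \eqref{emissionopdef} together with the definition \eqref{nudef} of $\nu_{h,k}$. Plugging this into $\mZ^\theta_h$ and swapping the order of integration via Fubini (justified by the boundedness of $\tilde\cK$ and the integrability of $\nu_{h,k}$),
\#
(\mZ^\theta_h\mO^\theta_h f)(s)
=\sum_{i,j,k} c_k\,\psi_i(s)\,[(\Lambda_h)^{-1}]_{i,j}\,
\EE_{\bo\sim\nu_{h,j},\,\bo'\sim\nu_{h,k}}\bigl[\tilde\cK(\bo,\bo')\bigr]. \notag
\#
The inner expectation equals $[\Lambda_h]_{j,k}$ by \eqref{lambda-def}, and summing against $(\Lambda_h)^{-1}$ collapses indices to yield $\sum_k c_k\psi_k(s)=f(s)$. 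This is where Assumption \ref{asmp4} is essential: the positive definiteness of $\Lambda_h$ is precisely what allows this cancellation.

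For step (iii), for an arbitrary $f\in L^1(\cO)$ define $g_j(f)=\int_{\cO}\EE_{\bo\sim\nu_{h,j}}[\tilde\cK(\bo,o)]\,f(o)\,\ud o$, which by Fubini and $|\tilde\cK|\le 1$ satisfies $|g_j(f)|\le\|f\|_1$. Then
\#
\|\mZ^\theta_h f\|_1
\le\int_{\cS}\sum_{i,j}\psi_i(s)\,\bigl|[(\Lambda_h)^{-1}]_{i,j}\bigr|\,|g_j(f)|\,\ud s
=\sum_{i,j}\bigl|[(\Lambda_h)^{-1}]_{i,j}\bigr|\,|g_j(f)|, \notag
\#
where the last equality uses $\int_{\cS}\psi_i(s)\,\ud s=1$. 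Bounding $|g_j(f)|\le\|f\|_1$ and recognizing $\max_j\sum_i|[(\Lambda_h)^{-1}]_{i,j}|=\|(\Lambda_h)^{-1}\|_{1\to 1}$ gives $\|\mZ^\theta_h\|_{1\mapsto 1}\le d_{\rm s}\cdot\|(\Lambda_h)^{-1}\|_{1\to 1}$, and taking the maximum over $h$ yields the claimed $\gamma$.

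The main obstacle is the cancellation in step (ii): one must correctly identify that the double integral $\EE_{\bo\sim\nu_{h,j},\,\bo'\sim\nu_{h,k}}[\tilde\cK(\bo,\bo')]$ is exactly the matrix entry $[\Lambda_h]_{j,k}$ and that the left-inverse property only needs to hold on ${\rm linspan}(\psi)$, not on all of $L^1(\cS)$ — this restriction is what makes the construction feasible, since $\mO^\theta_h$ need not be globally invertible. Everything else is careful application of Fubini and standard operator-norm estimates.
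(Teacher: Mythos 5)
Your proposal is correct and follows essentially the same route as the paper: the left-inverse property is obtained by computing $(\mO^\theta_h\psi_k)=\nu_{h,k}$, recognizing the resulting double expectation as $[\Lambda_h]_{j,k}$, and collapsing against $(\Lambda_h)^{-1}$; the operator-norm bound likewise matches the paper's argument of integrating out $\psi_i$, bounding the kernel integral by $\|f\|_1$ via $|\tilde\cK|\le 1$, and counting the $d_{\rm s}$ terms in the column sums of $(\Lambda_h)^{-1}$. No gaps.
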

\begin{proof}
By the definitions of the operators $\mZ^\theta_{h}$ and $\mO^\theta_{h}$, and function $\cZ^\theta_h$ in Assumption \ref{asmp2}, \eqref{emissionopdef}, and \eqref{225151241}, respectively, for any $f\in L^1(\cS)$, we have
\#\label{225151227}
(\mZ^\theta_{h}\mO^\theta_{h}f)(s)
&=
\sum_{i,j=1}^{d_{\rm s}} \int_{\cS\times\cO^2} \psi_i(s) \cdot  [(\Lambda_h)^{-1}]_{i,j} \cdot \nu_{h, j}(o') \cdot \tilde{\cK}(o', o) \cdot \cE^\theta_h(o\,|\,s') \cdot f(s') \ud s' \ud o \ud o'.
\#
When $f\in{\rm linspan}(\{\psi_i\}_{i=1}^{d_{\rm s}})$ with $f=\sum_{i=1}^{d_{\rm s}} \psi_i \cdot c_i$, we have
\#\label{225151234}
&\int_\cS \cE^\theta_h(o'\,|\,s') \cdot f(s') \ud s' =
\int_\cS  \sum_{\ell=1}^{d_{\rm s}} 
\cE^\theta_h(o'\,|\,s') \cdot \psi_\ell(s') \cdot c_\ell
=\sum_{\ell=1}^{d_{\rm s}}  \nu_{h,\ell}(o') \cdot c_\ell,
\#
where the last equality is by the definition of $\nu_{h,\ell}$ in \eqref{nudef}. Plugging \eqref{225151234} into the right-hand side of \eqref{225151227}, we obtain
\$
(\mZ^\theta_{h}\mO^\theta_{h}f)(s)
&=
\sum_{i,j=1}^{d_{\rm s}} \int_{\cO^2} \psi_i(s) \cdot  [(\Lambda_h)^{-1}]_{i,j} \cdot \nu_{h, j}(o') \cdot \tilde{\cK}(o', o) \cdot \sum_{\ell=1}^{d_{\rm s}}  \nu_{h,\ell}(o') \cdot c_\ell \ud o \ud o'  \\
&=
\sum_{i,j,\ell=1}^{d_{\rm s}}  \psi_i(s) \cdot  [(\Lambda_h)^{-1}]_{i,j} \cdot 
 \Bigl(\int_{\cO^2}
\nu_{h, j}(o') \cdot \tilde{\cK}(o', o) \cdot \nu_{h,\ell}(o')  \ud o \ud o' \Bigr) \cdot c_\ell  \\
&=
\sum_{i,j,\ell=1}^{d_{\rm s}}  \psi_i(s) \cdot  [(\Lambda_h)^{-1}]_{i,j} \cdot 
 [\Lambda_h]_{j, \ell} \cdot c_\ell .
\$
Here, the last equality uses the definition of the matrix $\Lambda_h$ in \eqref{lambda-def}. By the definition of the inverse matrix, we have
\$
\sum_{j=1}^{d_{\rm s}}[(\Lambda_h)^{-1}]_{i,j} \cdot 
 [\Lambda_h]_{j, \ell} = \ind\{i=\ell\},
\$
which implies
\$
(\mZ^\theta_{h}\mO^\theta_{h}f)(s)=
\sum_{i=1}^{d_{\rm s}}  \psi_i(s) \cdot c_i = f(s),\quad
\text{for any $s\in\cS$.}
\$

In the following, we characterize the operator norm $\|\cdot\|_{1\rightarrow1}$ of $\mZ^\theta_h$. For any $(h,\theta, o)\in[H]\times\Theta\times\cO$, we have
\#\label{22515123}
\int_{\cS} |\cZ^\theta_h(s, o)| \ud s
&=\int_\cS \Bigl| \sum_{i,j=1}^{d_{\rm s}} \psi_i(s) \cdot  [(\Lambda_h)^{-1}]_{i,j} 
\cdot  \int_\cO \nu_{h, j}(o') \cdot \tilde{\cK}(o', o) \ud o' \Bigr| \ud s \\
&\le\int_\cS
\sum_{i=1}^{d_{\rm s}} \psi_i(s) \cdot 
 \Bigl| \sum_{j=1}^{d_{\rm s}}  [(\Lambda_h)^{-1}]_{i,j} 
\cdot  \int_\cO \nu_{h, j}(o') \cdot \tilde{\cK}(o', o) \ud o' \Bigr| \ud s  \notag\\
&=\sum_{i=1}^{d_{\rm s}} 
 \Bigl| \sum_{j=1}^{d_{\rm s}}  [(\Lambda_h)^{-1}]_{i,j} 
\cdot  \int_\cO \nu_{h, j}(o') \cdot \tilde{\cK}(o', o) \ud o' \Bigr|, \notag
\#
where the last equality is by the fact that $\psi_i$ is a distribution function over $\cS$ for any $i\in[d_{\rm s}]$. The right-hand side of \eqref{22515123} is upper bounded by
\#\label{22515229}
&\sum_{i=1}^{d_{\rm s}} 
 \Bigl| \sum_{j=1}^{d_{\rm s}}  [(\Lambda_h)^{-1}]_{i,j} 
\cdot  \int_\cO \nu_{h, j}(o') \cdot \tilde{\cK}(o', o) \ud o' \Bigr| \\
&\quad \le \sum_{i=1}^{d_{\rm s}} 
\sum_{j=1}^{d_{\rm s}}  \bigl|  [(\Lambda_h)^{-1}]_{i,j} \bigr| \cdot
 \Bigl|  \int_\cO \nu_{h, j}(o') \cdot \tilde{\cK}(o', o) \ud o' \Bigr| \notag \\
&\quad \le
\Bigl( \max_{j\in[d_{\rm s}]} \sum_{i=1}^{d_{\rm s}}  \bigl| [(\Lambda_h)^{-1}]_{i,j}  \bigr| \Bigr)
\cdot  \sum_{j=1}^{d_{\rm s}} 
 \Bigl|  \int_\cO \nu_{h, j}(o') \cdot \tilde{\cK}(o', o) \ud o' \Bigr|. \notag
\#
For any $(h,j)\in[H]\times[d_{\rm s}]$, because the kernel function $\tilde{\cK}$ is uniformly bounded by $1$, following Assumption \ref{asmp4}, and $\nu_{h, j}$ is a distribution function over $\cO$, we have
\#\label{22520244}
\Bigl| \int_\cO \nu_{h, j}(o') \cdot \tilde{\cK}(o', o) \ud o' \Bigr| \le 1.
\#
Combining \eqref{22515123}-\eqref{22520244}, we have
\#\label{22520301}
\int_{\cS} |\cZ^\theta_h(s, o)| \ud s 
\le \Bigl( \max_{j\in[d_{\rm s}]} \sum_{i=1}^{d_{\rm s}}  \bigl| [(\Lambda_h)^{-1}]_{i,j}  \bigr| \Bigr) \cdot d,
\#
for any $(h,\theta, o)\in[H]\times\Theta\times\cO$. Note that for any $f\in L^1(\cO)$, we have
\$
\| \mZ^\theta_h f \|_1
&=\int_\cS \Bigl| \int_\cO \cZ^\theta_h(s,o)\cdot f(o)\ud o \Bigr| \ud s \\
&\le
\int_{\cS\times\cO} |\cZ^\theta_h(s,o)|\cdot |f(o)|\ud o \ud s  \\
&=
\int_{\cO} \Bigl(\int_\cS |\cZ^\theta_h(s,o)| \ud s \Bigr) \cdot  |f(o)|\ud o, 
\$
combining which with \eqref{22520301}, we obtain
\$
\| \mZ^\theta_h f \|_1
\le \Bigl( \max_{j\in[d_{\rm s}]} \sum_{i=1}^{d_{\rm s}}  \bigl| [(\Lambda_h)^{-1}]_{i,j}  \bigr| \Bigr) \cdot d \cdot \|f\|_1.
\$
Therefore, we conclude the proof of Lemma \ref{22520303}.
\end{proof}

\section{Minimax Optimization in $\algoname$} \label{computation-sec}

In this section, we discuss the details on how to implement the computation of $\algoname$ in practice. Recall that in the planning phase (introduced in Section \ref{algo2}) of each iteration of $\algoname$, we only consider the parameter $\theta$ such that 
\$
L(\theta)=\max_{f\in L^\infty(\cO^3): \|f\|_\infty\le1}
~\max_{(h, a,a')\in\{2,\ldots,H\}\times\cA^2} 
~\EE_{X\sim\hat{\cD}_{h,a,a'}}
[  (\mathbb{S} \mathbb{F}^\theta_{h,a'} f -   \mathbb{S}f)(X) ]
\$
is sufficiently small. Here, $\mathbb{S}$ and $\mathbb{F}^\theta_{h,a'}$ are operators defined in \eqref{sopdef} and \eqref{fopdef}, respectively. Also, $\hat{\cD}_{h,a,a'}$ is the empirical distribution induced by $\cD_{h,a,a'}$, which consists of $k$ observation tuples with $k$ being the iteration index. For ease of presentation, we assume that we have access to the following planning oracle.
\begin{oracle}
We denote by $\hat{\pi}$ a planning oracle for any given POMDP. In other word, the mapping $\hat{\pi}:\Theta\rightarrow\Pi$ satisfies $\hat{\pi}(\theta)\in\argmax_{\pi\in\Pi} J(\theta, \pi)$ for any $\theta\in\Theta$.
\end{oracle}
With the planning oracle defined above, we select the parameter $\theta_k$ by solving the following constrained optimization problem,
\#\label{2205241239}
\min_{\theta\in\Theta} J\bigl(\theta, \hat{\pi}(\theta) \bigr)
\quad
\text{s.t.}\quad L(\theta) \le \beta \cdot k^{-1/2}.
\#
Then, we select the policy $\pi_k=\hat{\pi}(\theta_k)$.  

\subsection{Lagrangian Relaxation}
In the sequel, we handle the constraint in \eqref{2205241239} via the Lagrangian relaxation. In detail, solving \eqref{2205241239} is equivalent to solving the minimax optimization problem,
\#\label{220524152}
\min_{\theta\in\Theta} \max_{\lambda\ge0} ~
-J\bigl(\theta, \hat{\pi}(\theta)\bigr) 
+ \sum_{(h,a,a')\in\cI} 
\lambda_{h,a,a'} \cdot 
\bigl( \tilde{L}_{h,a,a'}(\theta) -  \beta\cdot k^{-1/2} \bigr),
\#
where $\lambda=(\lambda_{h,a,a'})_{(h,a,a')\in\cJ}\in\RR^{(H-1)A^2}$ and
$\tilde{L}_{h,a,a'}(\theta)$ is defined by
\#\label{220524135}
\tilde{L}_{h,a,a'}(\theta)
=\max_{f\in L^\infty(\cO^3): \|f\|_\infty\le1}
\EE_{X\sim\hat{\cD}_{h,a,a'}}
[  (\mathbb{S} \mathbb{F}^\theta_{h,a'} f
 -   \mathbb{S}f)(X) ] .
\#
Here, for notational simplicity, we denote by $\cI$ the set $\cI=\{2,\ldots, H\}\times\cA^2$. Note that for each $(h,a,a')\in\cI$, we need to search a function within a ball in $L^\infty(\cO^3)$. To this end, we propose to search functions within a large function approximator class, for example, a sufficiently large neural network. In detail, we denote by $f^w_{h,a,a'}$ the parametrization of the function approximator, where $w$ is the parameter with a candidate set $\cW$. For example, we can build a neural network whose input space is $\cO^3$ and output space is $\RR^{(H-1)A^2}$. Then, $w$ represents the weights of all layers and 
\$
\bigl(f^w_{h,a,a'}(x)\bigr)_{(h,a,a')\in\cI} \in \RR^{(H-1)A^2}
\$
is the output of the neural network corresponding any input $x\in\cO^3$. Moreover,
by properly choosing the activation function of the output layer, we are able to make $\|f^w_{h,a,a'}\|_\infty\le1$ for any $w\in\cW$ and $(h,a,a')\in\cI$. Then, we approximately compute $\tilde{L}_{h,a,a'}(\theta)$ in \eqref{220524135} by computing $\max_{w\in\cW}\hat{L}^w_{h,a,a'}(\theta)$, where
\#\label{220524138}
\hat{L}^w_{h,a,a'}(\theta)=
\EE_{X\sim\hat{\cD}_{h,a,a'}}
[  (\mathbb{S} \mathbb{F}^\theta_{h,a'} f^w_{h,a,a'}
 -   \mathbb{S}f^w_{h,a,a'})(X) ],
\#
for any $w\in\cW$. Combining \eqref{220524152}-\eqref{220524138}, we approximately solve the constrained optimization problem in \eqref{2205241239} by solving
\#\label{220524154}
\min_{\theta\in\Theta} \max_{\lambda\ge0, w\in\cW} ~
-J\bigl(\theta, \hat{\pi}(\theta)\bigr) 
+ \sum_{(h,a,a')\in\cI} 
\lambda_{h,a,a'} \cdot 
\bigl( \hat{L}^w_{h,a,a'}(\theta) -  \beta\cdot k^{-1/2} \bigr).
\#

\subsection{Stochastic Gradient Method}

We denote by 
\$
\cL(\theta, \lambda, w)=
-J\bigl(\theta, \hat{\pi}(\theta)\bigr) 
+ \sum_{(h,a,a')\in\cI} 
\lambda_{h,a,a'} \cdot 
\bigl( \hat{L}^w_{h,a,a'}(\theta) -  \beta\cdot k^{-1/2} \bigr)
\$ 
the minimax objective in \eqref{220524154}. In the sequel, we consider the stochastic gradient method for solving the minimax optimization problem in \eqref{220524154}. In detail, suppose that we have unbiased stochastic gradient estimators $g_\theta$, $g_\lambda$, and $g_w$ such that
\$
\EE[g_\theta(\theta, \lambda, w)]&= \nabla_\theta \cL(\theta, \lambda, w), \\
\EE[g_\lambda(\theta, \lambda, w)]&= \nabla_\lambda \cL(\theta, \lambda, w), \\
\EE[g_w(\theta, \lambda, w)]&= \nabla_w \cL(\theta, \lambda, w),
\$
for any $(\theta, \lambda, w)\in\Theta\times\RR^{(H-1)A^2}\times\cW$. Also, computing $g_\theta$, $g_\lambda$, and $g_w$ does not require access to the full data set $\cD=\{\cD_{h,a,a'}\}_{h\in\{2,\ldots,H\}\times\cA^2}$ and thus has a low computation cost. In each iteration, starting from any $(\theta,\lambda,w)$ within the candidate set, we first update $\lambda$ and $w$ by running
\$
\lambda \leftarrow \lambda + \eta_\lambda \cdot g_\lambda(\theta, \lambda, w), \quad
w \leftarrow w + \eta_w \cdot g_w (\theta, \lambda, w)
\$
for $N_{\rm dual}$ steps. Then, we update $\theta$ by running
\$
\theta \leftarrow \theta - \eta_\theta \cdot g_\theta(\theta,\lambda,w).
\$
for $N_{\rm primal}$ steps. Here, $\eta_\theta$, $\eta_\lambda$, $\eta_w$ are constant stepsizes. Note that after each update, if the updated parameter is not in the candidate set, we need to run an extra projection step, which replaces the updated parameter by its closest neighbor within the candidate set. Because we need to call the planning oracle $\hat{\pi}$ after updating $\theta$, which has a relatively high computation cost, it is better to set $N_{\rm primal}=1$ and set $N_{\rm dual}$ as a large number.

In the sequel, we construct unbiased gradient estimators for the objective $\cL(\theta, \lambda, w)$. 

\noindent
\textbf{Construction of $g_\lambda$:}
To construct $g_\lambda(\theta,\lambda,x)$, note that we have
\$
\partial_{\lambda_{h,a,a'}} \cL(\theta,\lambda,w)=
\hat{L}^w_{h,a,a'}(\theta) -  \beta\cdot k^{-1/2}
\$
for any $(h,a,a')\in\{2,\ldots,H\}\times\cA^2$. Let $\cB$ be a batch of data sampled from $\cD_{h,a,a'}$ uniformly at random. Then, by computing the batch average, we have
\#\label{2205261034}
\EE\Bigl[\frac{1}{|\cB|}\sum_{x\in \cB }
  (\mathbb{S} \mathbb{F}^\theta_{h,a'} f^w_{h,a,a'}
 -   \mathbb{S}f^w_{h,a,a'})(x) -  \beta\cdot k^{-1/2} \Bigr]
 =\partial_{\lambda_{h,a,a'}} \cL(\theta,\lambda,w).
\#
For any $(i,j,x)\in[d_{\rm o}]\times[d_{\rm o}]\times\cO^3$, let $Y_{i,j,x}$ and $Y'_{i,j,x}$ be independent random variables in $\cO^3$ sampled from the distributions $\phi_i$ and $\phi_j$, respectively. Then, by the definition of the operator $\mathbb{S}$ in \eqref{sopdef}, we have
\#\label{2205261118}
(\mathbb{S}f^w_{h,a,a'})(x)
=\sum_{i,j\in[d_{\rm o}]}
\EE\bigl[
[G^{-1}]_{i,j} \cdot
\cK(x, Y_{i,j,x}) \cdot f^w_{h,a,a'}(Y'_{i,j,x}) \bigr]. 
\#
Similarly, applying the definition of the operator $\mathbb{F}^\theta_{h,a'}$ in \eqref{fopdef}, we have
\#\label{2205261019}
&(\mathbb{S} \mathbb{F}^\theta_{h,a'} f^w_{h,a,a'})(x)\\
&\quad=\sum_{i,j\in[d_{\rm o}]}
\EE\Bigl[
[G^{-1}]_{i,j} \cdot
\cK(x, Y_{i,j,x}) \cdot 
\int_{\cO^2}
f^w_{h,a,a'}(\bo_{h-1}, \tilde{o}_h, \tilde{o}_{h+1})   \cdot \cB^{\theta}_{h, a}(\bo_h,\tilde{o}_h, \tilde{o}_{h+1})\ud \tilde{o}_h\ud \tilde{o}_{h+1}
 \Bigr], \notag
\#
where we denote $Y'_{i,j,x}=(\bo_{h-1}, \bo_{h}, \bo_{h+1})$. Moreover, let $\phi_{\rm ip}$ be a distribution supported on $\cO^2$ and $\tilde{Y}_{i,j,x}=(\tilde{\bo}_h, \tilde{\bo}_{h+1})$ be a random variable in $\cO^2$ sampled from $\phi_{\rm ip}$. Then, following the idea of importance sampling, we can rewrite \eqref{2205261019} as
\#\label{2205261020}
&(\mathbb{S} \mathbb{F}^\theta_{h,a'} f^w_{h,a,a'})(x)\\
&\quad=\sum_{i,j\in[d_{\rm o}]}
\EE\Bigl[
[G^{-1}]_{i,j} \cdot
\cK(x, Y_{i,j,x}) \cdot 
\frac{
f^w_{h,a,a'}(\bo_{h-1}, \tilde{\bo}_h, \tilde{\bo}_{h+1})   \cdot \cB^{\theta}_{h, a}(\bo_h,\tilde{\bo}_h, \tilde{\bo}_{h+1})}{\phi_{\rm ip}(\tilde{\bo}_h, \tilde{\bo}_{h+1})}
 \Bigr].\notag
\#
Combining \eqref{2205261034}-\eqref{2205261020}, we construct $g_\lambda(\theta,\lambda,x)$ by
\$
[g_\lambda(\theta,\lambda,x)]_{h,a,a'}=
\frac{1}{|\cB|}\sum_{x\in \cB } \bigl(
  \hat{\mathbb{S} \mathbb{F}^\theta_{h,a'} f^w_{h,a,a'}}(x)
 -   \hat{\mathbb{S}f^w_{h,a,a'}}(x) \bigr) -  \beta\cdot k^{-1/2},
\$
for any $(h,a,a')\in\{2,\ldots,H\}\times\cA^2$, where
\$
\hat{\mathbb{S} \mathbb{F}^\theta_{h,a'} f^w_{h,a,a'}}(x)
&=\sum_{i,j\in[d_{\rm o}]}
[G^{-1}]_{i,j} \cdot
\cK(x, Y_{i,j,x}) \cdot 
\frac{
f^w_{h,a,a'}(\bo_{h-1}, \tilde{\bo}_h, \tilde{\bo}_{h+1})   \cdot \cB^{\theta}_{h, a}(\bo_h,\tilde{\bo}_h, \tilde{\bo}_{h+1})}
{\phi_{\rm ip}(\tilde{\bo}_h, \tilde{\bo}_{h+1})},\\
\hat{\mathbb{S}f^w_{h,a,a'}}(x)
&=
\sum_{i,j\in[d_{\rm o}]}
[G^{-1}]_{i,j} \cdot
\cK(x, Y_{i,j,x}) \cdot f^w_{h,a,a'}(Y'_{i,j,x}).
\$

\noindent
\textbf{Construction of $g_w$:} To construct $g_w(\theta,\lambda,x)$, note that we have
\$
\nabla_w \cL(\theta,\lambda,w)&=
\sum_{(h,a,a')\in\cI} 
\lambda_{h,a,a'} \cdot \nabla_w\hat{L}^w_{h,a,a'}(\theta) \\
&=
\sum_{(h,a,a')\in\cI} 
\lambda_{h,a,a'} \cdot 
\EE_{X\sim\hat{\cD}_{h,a,a'}}
[  (\nabla_w\mathbb{S} \mathbb{F}^\theta_{h,a'} f^w_{h,a,a'}
 -   \nabla_w\mathbb{S}f^w_{h,a,a'})(X) ].
\$
Thus, following the similar argument as in \eqref{2205261118}-\eqref{2205261020}, we construct $g_w(\theta,\lambda,w)$ as
\$
g_w(\theta,\lambda,w)=
\sum_{(h,a,a')\in\cI} 
\lambda_{h,a,a'} \cdot 
\frac{1}{|\cB|} \sum_{x\in\cB}
\bigl( \hat{\nabla_w\mathbb{S} \mathbb{F}^\theta_{h,a'} f^w_{h,a,a'}}(x)
 -   \hat{\nabla_w\mathbb{S}f^w_{h,a,a'}}(x) \bigr)
\$
where
\$
\hat{\nabla_w\mathbb{S} \mathbb{F}^\theta_{h,a'} f^w_{h,a,a'}}(x)
&=\sum_{i,j\in[d_{\rm o}]}
[G^{-1}]_{i,j} \cdot
\cK(x, Y_{i,j,x}) \cdot 
\frac{
\nabla_wf^w_{h,a,a'}(\bo_{h-1}, \tilde{\bo}_h, \tilde{\bo}_{h+1})   \cdot \cB^{\theta}_{h, a}(\bo_h,\tilde{\bo}_h, \tilde{\bo}_{h+1})}
{\phi_{\rm ip}(\tilde{\bo}_h, \tilde{\bo}_{h+1})},\\
\hat{\nabla_w\mathbb{S}f^w_{h,a,a'}}(x)
&=
\sum_{i,j\in[d_{\rm o}]}
[G^{-1}]_{i,j} \cdot
\cK(x, Y_{i,j,x}) \cdot \nabla_w f^w_{h,a,a'}(Y'_{i,j,x}).
\$
Here, $Y_{i,j,x}$, $\bo_{h-1}$, $\bo_h$, $\tilde{\bo}_h$, and $\tilde{\bo}_{h+1}$ are random variables defined the same as in the construction of $g_\lambda$.

\noindent
\textbf{Construction of $g_\theta$:} 
To construct $g_\theta(\theta,\lambda,x)$, note that we have
\#\label{2205261159}
\nabla_w \cL(\theta,\lambda,w)&=
-\nabla_\theta J\bigl( \theta, \hat{\pi}(\theta) \bigr) +
\sum_{(h,a,a')\in\cI} 
\lambda_{h,a,a'} \cdot \nabla_\theta \hat{L}^w_{h,a,a'}(\theta) \notag\\
&=
-\nabla_\theta J\bigl( \theta, \hat{\pi}(\theta) \bigr) +
\sum_{(h,a,a')\in\cI} 
\lambda_{h,a,a'} \cdot  
\EE_{X\sim\hat{\cD}_{h,a,a'}}
[  (\nabla_\theta\mathbb{S} \mathbb{F}^\theta_{h,a'} f^w_{h,a,a'})
(X) ].
\#
Following the similar argument as in \eqref{2205261019}-\eqref{2205261020}, we have the following unbiased estimator of the expectation in the second term on the right-hand side of \eqref{2205261159},
\#\label{220527555}
&\frac{1}{|\cB|}\sum_{x\in\cB}\hat{\nabla_\theta\mathbb{S} \mathbb{F}^\theta_{h,a'} f^w_{h,a,a'}}(x) \\
&\quad=\frac{1}{|\cB|}\sum_{x\in\cB}\sum_{i,j\in[d_{\rm o}]}
[G^{-1}]_{i,j} \cdot
\cK(x, Y_{i,j,x}) \cdot 
\frac{
f^w_{h,a,a'}(\bo_{h-1}, \tilde{\bo}_h, \tilde{\bo}_{h+1})   \cdot 
\nabla_\theta\cB^{\theta}_{h, a}(\bo_h,\tilde{\bo}_h, \tilde{\bo}_{h+1})}
{\phi_{\rm ip}(\tilde{\bo}_h, \tilde{\bo}_{h+1})}. \notag
\#
It remains to construct an unbiased estimator of the first term on the right-hand side of \eqref{2205261159}. Note that $\hat{\pi}(\theta)$ is the maximizer of $J(\theta,\cdot)$. Thus, by the envelop theorem, it suffices to estimate
\#\label{220527556}
\bigl(\nabla_\theta J(\theta, \pi)\bigr)\big|_{\pi=\hat{\pi}(\theta)}.
\#
Note that for any $(\theta,\pi)\in\Theta\times\Pi$, we have
\$
J(\theta,\pi)
&=\int_{\cS^H\times\cO^H} R\bigl(o_1,\pi(\tau_1),\ldots,o_H,\pi(\tau_H) \bigr) 
\cdot \mu(s_1)\cdot \cE_1(o_1\,|\,s_1) \\
&\quad\qquad\cdot
\Bigl(\prod_{h=1}^{H-1}
\cT^\theta_h\bigl(s_{h+1}\,|\,s_h, \pi(\tau_h) \bigr)
\cdot
\cE^\theta_{h+1}(o_{h+1}\,|\,s_{h+1}) \Bigr)
\ud o_1\cdots\ud o_H
\ud s_1 \cdots \ud s_H.
\$
Then, using the chain rule of the derivative, we have
\#\label{220527513}
\nabla_\theta J(\theta,\pi)
&=\sum_{i=1}^{H-1}\int_{\cS^H\times\cO^H} R\bigl(o_1,\pi(\tau_1),\ldots,o_H,\pi(\tau_H) \bigr) 
\cdot \mu(s_1)\cdot \cE_1(o_1\,|\,s_1) \\
&\quad\qquad\cdot 
\Bigl( \nabla_\theta \cT^\theta_h\bigl(s_{h+1}\,|\,s_h, \pi(\tau_h) \bigr)
\cdot
\cE^\theta_{h+1}(o_{h+1}\,|\,s_{h+1}) \notag\\
&\quad\qquad\qquad 
+ \cT^\theta_h\bigl(s_{h+1}\,|\,s_h, \pi(\tau_h) \bigr)
\cdot
\nabla_\theta \cE^\theta_{h+1}(o_{h+1}\,|\,s_{h+1}) \Bigr) \notag \\
&\quad\qquad\cdot
\Bigl(\prod_{h=1, h\neq i}^{H-1}
\cT^\theta_h\bigl(s_{h+1}\,|\,s_h, \pi(\tau_h) \bigr)
\cdot
\cE^\theta_{h+1}(o_{h+1}\,|\,s_{h+1}) \Bigr)
\ud o_1\cdots\ud o_H
\ud s_1 \cdots \ud s_H. \notag
\#
Using the relation $\nabla \ln f =\nabla f / f$, we can rewrite the right-hand side of \eqref{220527513} to obtain
\$
\nabla_\theta J(\theta,\pi)
&=\sum_{i=1}^{H-1}\int_{\cS^H\times\cO^H} R\bigl(o_1,\pi(\tau_1),\ldots,o_H,\pi(\tau_H) \bigr) 
\cdot \mu(s_1)\cdot \cE_1(o_1\,|\,s_1) \\
&\quad\qquad\cdot 
\Bigl( \nabla_\theta \ln \cT^\theta_i\bigl(s_{i+1}\,|\,s_i, \pi(\tau_i) \bigr)
+ \nabla_\theta \ln \cE^\theta_{i+1}(o_{i+1}\,|\,s_{i+1}) \Bigr)\\
&\quad\qquad\cdot
\Bigl(\prod_{h=1}^{H-1}
\cT^\theta_h\bigl(s_{h+1}\,|\,s_h, \pi(\tau_h) \bigr)
\cdot
\cE^\theta_{h+1}(o_{h+1}\,|\,s_{h+1}) \Bigr)
\ud o_1\cdots\ud o_H
\ud s_1 \cdots \ud s_H.
\$
Therefore, we have the following unbiased estimator of $\nabla_\theta J(\theta,\pi)$,
\$
\hat{\nabla_\theta J}(\theta,\pi)
=
R(\bo_1,\ba_1,\ldots,\bo_H,\ba_H)
\cdot \sum_{i=1}^{H-1}
\bigl( \nabla_\theta \ln \cT^\theta_i(\bs_{i+1}\,|\, \bs_i, \ba_i )
+ \nabla_\theta \ln \cE^\theta_{i+1}(\bo_{i+1}\,|\,\bs_{i+1}) \bigr),
\$
where $(\bs_1, \bo_1, \ba_1, \ldots, \bs_H, \bo_H, \ba_H)$ is a trajectory of the POMDP with respect to the parameter $\theta$ and policy $\pi$. Note that for the given parameter $\theta$, the trajectory can be obtained from a simulator rather than the real environment, which does not affect the sample complexity result in the main paper. Combining the above estimator with \eqref{2205261159}-\eqref{220527556}, we construct $g_\theta(\theta,\lambda,w)$ as
\$
g_{\theta}(\theta,\lambda,w)=
\hat{\nabla_\theta J}\bigl(\theta, \hat{\pi}(\theta) \bigr) +
\sum_{(h,a,a')\in\cI} 
\lambda_{h,a,a'} \cdot 
\frac{1}{|\cB|}\sum_{x\in\cB}\hat{\nabla_\theta\mathbb{S} \mathbb{F}^\theta_{h,a'} f^w_{h,a,a'}}(x),
\$
where the second term is defined in \eqref{220527555}.

\section{Proofs for Section \ref{sec-algo}}
In this section, we present the proofs for the results in Section \ref{sec-algo}.
\subsection{Proof of Lemma \ref{blemma}}\label{blemmap}
\begin{proof}
Following the notation in Lemma \ref{blemma} and by the definition of $\mB^{\theta,\pi}_{h}$ in \eqref{bdef}, we have
\#\label{9136240}
&\EE_{\theta}[ (\mB^{\theta,\pi}_{h} f)(\overbtau_h)\,|\, \sigma_{h-1}] \\
&\quad=
\int_{\cS\times\cO^3} f \bigl(\overtau_h^\dagger, \pi(\tau_h^\dagger), \to_{h+1} \bigr) \cdot 
p_{\theta}\bigl(\bo_h=\tilde{o}_h, \bo_{h+1}=\tilde{o}_{h+1} \,|\, \bs_h=\tilde{s}_h, \ba_h=\pi(\tau_h^\dagger) \bigr) \notag\\
&\quad\qquad\qquad\qquad
\cdot \cZ^\theta_{h}(\tilde{s}_h, o_h)
\cdot p_\theta(\bo_h=o_h \,|\,\sigma_{h-1}) \ud o_h  \ud \to_h \ud \to_{h+1} \ud \tilde{s}_h.
 \notag
\#
Here, invoking Lemma \ref{bridge-lm}, we have
\$
\int_\cO \cZ^\theta_{h}(\tilde{s}_h, o_h)
\cdot p_\theta(\bo_h=o_h \,|\,\sigma_{h-1})\ud o_h 
= p_\theta(\bs_h=\tilde{s}_h \,|\, \sigma_{h-1}).
\$
Thus, we can rewrite \eqref{9136240} as
\#\label{913624}
&\EE_{\theta}[ (\mB^{\theta,\pi}_{h} f)(\overbtau_h)\,|\, \sigma_{h-1}] \\
&\quad=
\int_{\cS\times\cO^2} f \bigl(\overtau_h^\dagger, \pi(\tau_h^\dagger), \to_{h+1} \bigr) \cdot 
p_{\theta}\bigl(\bo_h=\tilde{o}_h, \bo_{h+1}=\tilde{o}_{h+1} \,|\, \bs_h=\tilde{s}_h, \ba_h=\pi(\overtau_h^\dagger) \bigr) \notag\\
&\quad\qquad\qquad\qquad
\cdot p(\bs_h=\tilde{s}_h \,|\,\sigma_{h-1})  \ud \to_h \ud \to_{h+1} \ud \tilde{s}_h
 \notag\\
 &\quad=
\int_{\cO^2} f \bigl(\overtau_h^\dagger, \pi(\tau_h^\dagger), \to_{h+1} \bigr) \cdot 
p_{\theta,\pi}(\bo_h=\tilde{o}_h, \bo_{h+1}=\tilde{o}_{h+1} \,|\, \sigma_{h-1} )
\ud \to_h \ud \to_{h+1}. \notag
\#
where the second equality uses the independence between $(\bo_h, \bo_{h+1})$ and $\btau_{h-1}$ conditioning on $(\bs_{h}, \ba_h)$. Replacing the notations $\to_h$ and $\to_{h+1}$ of the integral variables on the right-hand side of \eqref{913624} by $o_h$ and $o_{h+1}$, respectively, we obtain
\#\label{220507754}
&\EE_{\theta}[ (\mB^{\theta,\pi}_{h} f)(\overbtau_h)\,|\, \sigma_{h-1}] \\
&\quad=
\int_{\cO^2} f \bigl(\overtau_h, \pi(\tau_h), o_{h+1} \bigr) \cdot 
p_{\theta,\pi}(\bo_h=o_h, \bo_{h+1}=o_{h+1} \,|\, \sigma_{h-1} )
\ud o_h \ud o_{h+1}\notag\\
&\quad=
\EE_{\theta,\pi}[f(\overbtau_{h+1}) \,|\,\sigma_{h-1}],\notag
\#
where we denote $\overtau_h=(\overtau_{h-1}, a_{h-1}, o_h)$ and $\tau_h=(\tau_{h-1}, o_h)$. On the other hand, by the tower property of the expectation and the definition of $\PP^{\theta,\pi}_h$ in \eqref{914756}, we have
\#\label{220507755}
\EE_{\theta,\pi}[f(\overbtau_{h+1}) \,|\,  \sigma_{h-1}] 
=\EE_{\theta,\pi}\bigl[
\EE_{\theta,\pi}[f(\overbtau_{h+1}) \,|\, \overbtau_h]
 \,\big|\, \sigma_{h-1}\bigr] 
& =\EE_{\theta,\pi}[ (\PP^{\theta,\pi}_hf)(\overbtau_{h}) \,|\, \sigma_{h-1}]. 
\#
Combining \eqref{220507754} and \eqref{220507755}, we conclude the proof of Lemma \ref{blemma}.
\end{proof}

\subsection{Proof of Corollary \ref{coro1}}\label{coro1p}
\begin{proof}
We prove the result in \eqref{1004938} by induction over $h\in[H+1]$. When $h=H+1$, by the definition of the value function in \eqref{vdef}, we have
\$
\EE_{\theta,\pi}[V^{\theta,\pi}_{H+1}(\overbtau_{H+1}) \,|\, \sigma_H]
=\EE_{\theta,\pi}[R(\overbtau_{H+1}) \,|\, \sigma_H ]
=\EE_{\theta,\pi}\Bigl[ \sum_{i=1}^H \br_i \,\Big|\, \sigma_H  \Bigr].
\$
for any $(\overtau_H,a_H)\in\overline{\Gamma}_H\times\cA$. Recall that the variables $\overtau_H$ and $a_H$ appear in the event $\sigma_H$, which is defined in \eqref{sigmadef}. 

Assume that \eqref{1004938} holds when $h=j+1$ for some fixed $j\le H$. In other words, assume that we have
\#\label{220507121}
\EE_{\theta,\pi}[V^{\theta,\pi}_{j+1}(\overbtau_{j+1}) \,|\, \sigma_{j}]
=\EE_{\theta,\pi}\Bigl[ \sum_{i=1}^H \br_i \,\Big|\, \sigma_{j}  \Bigr]
\#
for any $(\overtau_j,a_j)\in\overline{\Gamma}_j\times\cA$. Then, by the definition of the value function in \eqref{vdef} and invoking Lemma \ref{blemma}, we have
\#\label{220507254}
\EE_{\theta,\pi}[V^{\theta,\pi}_{j}(\overbtau_{j}) \,|\, \sigma_{j-1}]
&=\EE_{\theta,\pi}[(\mB^{\theta,\pi}_jV^{\theta,\pi}_{j+1})(\overbtau_{j}) \,|\, \sigma_{j-1}] \\
&=\EE_{\theta,\pi}[(\PP^{\theta,\pi}_jV^{\theta,\pi}_{j+1})(\overbtau_{j}) \,|\, \sigma_{j-1}]\notag\\
&=\EE_{\theta,\pi}[V^{\theta,\pi}_{j+1}(\overbtau_{j+1}) \,|\, \sigma_{j-1}], \notag
\#
where the second equality uses the tower property of the conditional expectation. Combining \eqref{220507254} with the induction assumption in \eqref{220507121}, we have that \eqref{1004938} holds when $h=j$. Thus, by induction we have that \eqref{1004938} holds for any $h\in[H+1]$.

Therefore, we conclude the proof of Corollary \ref{coro1}.
\end{proof}

\subsection{Proof of Lemma \ref{esteq-lemma}} \label{esteq-lemmap}
We prove a more general version of Lemma \ref{esteq-lemma}. In detail, we replace the true parameter $\theta^*$ in Lemma \ref{esteq-lemma} by any parameter $\theta\in\Theta$.
\begin{lemma}[General Version of Lemma \ref{esteq-lemma}]\label{esteq-lemma-g}
For any $(h,\theta,a,a',\pi)\in\{2,\ldots,H\}\times\Theta\times\cA^2\times\overline{\Pi}$, we have
\$
\EE_{X\sim {\rho}^{\theta,\pi}_{h,a,a'}}[ (\mathbb{F}^{\theta}_{h,a'} f -  f)(X)] = 0, 
\quad
\text{for any $f\in L^\infty(\cO^3)$}.
\$
Here, the distribution ${\rho}^{\theta,\pi}_{h,a,a'}\in\Delta(\cO^3)$ is defined by
\$
{\rho}^{\theta,\pi}_{h,a,a'}(o_{h-1}, o_h, o_{h+1})
=p_{\theta,\pi}\bigl(
\bo_{h-1}=o_{h-1}, \bo_h=o_h, \bo_{h+1}=o_{h+1} \,|\,\ba_{h-1}=a,\ba_h=a' \bigr),
\$
for any $o_{h-1}$, $o_h$, $o_{h+1}\in\cO$. Also, we have $\|\mathbb{F}^{\theta}_{h,a'}\|_{\infty\rightarrow\infty}\le \gamma$.
\end{lemma}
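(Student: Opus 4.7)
}

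The plan is to prove the identity by unrolling the definition of $\mathbb{F}^{\theta}_{h,a'}$ and $\cB^{\theta}_{h,a}$, then exploiting two structural facts: (i) the function $(\mathbb{F}^\theta_{h,a'}f)(o_{h-1},o_h,o_{h+1})$ does not depend on its third argument $o_{h+1}$, so the $o_{h+1}$-marginal of $\rho^{\theta,\pi}_{h,a,a'}$ is all that matters on the left, and (ii) the bridge-function identity
\[
\int_\cO \cZ^\theta_h(\tilde{s}_h,o_h)\cdot p_{\theta,\pi}(\bo_h=o_h\,|\,\bo_{h-1}=o_{h-1},\ba_{h-1}=a)\,\ud o_h = p_{\theta,\pi}(\bs_h=\tilde s_h\,|\,\bo_{h-1}=o_{h-1},\ba_{h-1}=a),
\]
which is the same type of identity used inside the proof of Lemma \ref{blemma} (and which comes from $p_{\theta,\pi}(\bo_h=\cdot\,|\,\bo_{h-1},\ba_{h-1})$ lying in $\mO^\theta_h\cdot{\rm linspan}(\psi)$ by Assumption \ref{asmp1}, combined with the left-inverse property of Assumption \ref{asmp2}, i.e.\ Lemma \ref{bridge-lm}).

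Concretely, I would first substitute the definition of $\mathbb{F}^\theta_{h,a'}$ together with $\cB^\theta_{h,a'}$ into $\EE_{X\sim \rho^{\theta,\pi}_{h,a,a'}}[(\mathbb{F}^\theta_{h,a'}f)(X)]$ and use Fubini to bring the outer integration against $\rho^{\theta,\pi}_{h,a,a'}$ inside. Since intervening on $\ba_h=a'$ does not affect the distribution of $(\bo_{h-1},\bo_h)$ (actions are taken after the corresponding observation), marginalizing $o_{h+1}$ yields
\[
\int_\cO \rho^{\theta,\pi}_{h,a,a'}(o_{h-1},o_h,o_{h+1})\,\ud o_{h+1} = p_{\theta,\pi}(\bo_{h-1}=o_{h-1},\bo_h=o_h\,|\,\ba_{h-1}=a).
\]
Factorizing this density and applying the bridge identity in the $o_h$-integral collapses $\cZ^\theta_h(\tilde s_h,o_h)\cdot p_{\theta,\pi}(\bo_{h-1}=o_{h-1},\bo_h=o_h\,|\,\ba_{h-1}=a)$ into $p_{\theta,\pi}(\bo_{h-1}=o_{h-1},\bs_h=\tilde s_h\,|\,\ba_{h-1}=a)$. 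Finally, using the POMDP's conditional independence $(\bo_h,\bo_{h+1})\indep \bo_{h-1}\,|\,(\bs_h,\ba_h)$ and recombining with $p_\theta(\bo_h=\tilde o_h,\bo_{h+1}=\tilde o_{h+1}\,|\,\bs_h=\tilde s_h,\ba_h=a')$, the $\tilde s_h$-integral returns precisely $\rho^{\theta,\pi}_{h,a,a'}(o_{h-1},\tilde o_h,\tilde o_{h+1})$, so the expression equals $\EE_{X\sim \rho^{\theta,\pi}_{h,a,a'}}[f(X)]$, which is the claimed identity.

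For the operator norm bound, I would argue pointwise. For any $f$ with $\|f\|_\infty\le 1$ and any $(o_{h-1},o_h,o_{h+1})$, plugging the definition of $\cB^\theta_{h,a'}$ into $(\mathbb{F}^\theta_{h,a'}f)(o_{h-1},o_h,o_{h+1})$ and bounding by absolute values,
\[
\bigl|(\mathbb{F}^\theta_{h,a'}f)(o_{h-1},o_h,o_{h+1})\bigr| \le \int_{\cS}|\cZ^\theta_h(\tilde s_h,o_h)|\int_{\cO^2} p_\theta(\tilde{\bo}_h=\tilde o_h,\tilde{\bo}_{h+1}=\tilde o_{h+1}\,|\,\tilde{\bs}_h=\tilde s_h,\tilde{\ba}_h=a')\,\ud\tilde o_h\,\ud\tilde o_{h+1}\,\ud\tilde s_h.
\]
The inner double integral equals $1$ since $p_\theta(\cdot\,|\,\tilde s_h,a')$ is a probability density on $\cO^2$, so the right-hand side reduces to $\int_\cS |\cZ^\theta_h(\tilde s_h,o_h)|\,\ud\tilde s_h$. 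Approximating with $\delta$-masses (or more formally, taking $f_n$ concentrated near $o_h$ in $L^1(\cO)$), this quantity is bounded by $\|\mZ^\theta_h\|_{1\mapsto 1}\le \gamma$ from Assumption \ref{asmp2}. Taking supremum over the argument gives $\|\mathbb{F}^\theta_{h,a'}\|_{\infty\mapsto\infty}\le\gamma$.

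The main bookkeeping obstacle is keeping straight the interventional vs.\ observational conditioning on $\ba_{h-1}=a$ and $\ba_h=a'$, and making sure that each conditional-independence step respects the POMDP DAG in Figure \ref{fig1}. In particular, one must verify that conditioning on $\ba_h=a'$ via the $\mathrm{do}$-convention leaves the $(\bo_{h-1},\bo_h)$-marginal unchanged and that $(\bo_h,\bo_{h+1})$ is independent of $\bo_{h-1}$ given $(\bs_h,\ba_h)$ even under this intervened measure. Once these are in hand, the computation is a straightforward chain of Fubini swaps plus a single application of the bridge identity.
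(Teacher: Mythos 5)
Your proposal is correct and follows essentially the same route as the paper's proof: marginalize $o_{h+1}$ out of $\rho^{\theta,\pi}_{h,a,a'}$, collapse $\cZ^\theta_h$ against the resulting $(\bo_{h-1},\bo_h)$-density via the left-inverse property of Assumption \ref{asmp2} (the paper applies it directly to $\cT^\theta_h(\cdot\,|\,s_{h-1},a)\in\cF_{\rm s}$ after factorizing the joint density, while you phrase it as the bridge identity for the conditional state distribution given $\bo_{h-1}$ — an equivalent mixture of elements of $\cF_{\rm s}$), and recombine using the conditional independence of $(\bo_h,\bo_{h+1})$ and $\bo_{h-1}$ given $(\bs_h,\ba_h)$. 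The operator-norm bound via the pointwise estimate $\int_\cS|\cZ^\theta_h(\tilde s_h,o_h)|\,\ud\tilde s_h=\|\mZ^\theta_h\delta_{o_h}\|_1\le\gamma$ is also exactly the paper's argument.
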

\begin{proof}
By the definition of $\mathbb{F}^{\theta}_{h,a'}$ in \eqref{fopdef}, we have
\#\label{220507452}
&\EE_{X\sim {\rho}^{\theta,\pi}_{h,a,a'}}[ (\mathbb{F}^{\theta}_{h,a'}f)(X)]\\
&\quad=
\int_{\cO^3} (\mathbb{F}^{\theta}_{h,a'}f)(o_{h-1}, o_h, o_{h+1})
\cdot  {\rho}^{\theta,\pi}_{h,a,a'}(o_{h-1}, o_h, o_{h+1}) \ud o_{h-1}\ud o_h \ud o_{h+1}\notag\\
&\quad=
\int_{\cO^5} f(o_{h-1}, \tilde{o}_h, \tilde{o}_{h+1})
\cdot \cB^{\theta}_{h,a'}(o_{h}, \tilde{o}_h, \tilde{o}_{h+1})\notag\\
&\quad\qquad\qquad\qquad
\cdot  {\rho}^{\theta,\pi}_{h,a,a'}(o_{h-1}, o_h, o_{h+1}) \ud o_{h-1}\ud o_h \ud o_{h+1}\ud \to_h \ud \to_{h+1}.\notag
\#
Here, by the definition of ${\rho}^{\theta,\pi}_{h,a,a'}$ in Lemma \ref{esteq-lemma}, we have
\#\label{220507455}
\int_{\cO} {\rho}^{\theta,\pi}_{h,a,a'}(o_{h-1}, o_h, o_{h+1})\ud o_{h+1}
=p_{\theta,\pi}(\bo_{h-1}=o_{h-1}, \bo_h=o_{h} \,|\, \ba_{h-1}=a).
\#
Combining \eqref{220507452} and \eqref{220507455}, we obtain
\#\label{220508619}
&\EE_{X\sim {\rho}^{\theta,\pi}_{h,a,a'}}[ (\mathbb{F}^{\theta}_{h,a'}f)(X)]\\
&\quad=
\int_{\cO^4} f(o_{h-1}, \tilde{o}_h, \tilde{o}_{h+1})
\cdot \cB^{\theta}_{h,a'}(o_{h}, \tilde{o}_h, \tilde{o}_{h+1})\notag\\
&\quad\qquad\qquad\qquad
\cdot  p_{\theta,\pi}(\bo_{h-1}=o_{h-1}, \bo_h=o_{h} \,|\, \ba_{h-1}=a) \ud o_{h-1}\ud o_h \ud o_{h+1}\ud \to_h \ud \to_{h+1}, \notag
\#
where $\cB^{\theta}_{h,a'}(o_{h}, \tilde{o}_h, \tilde{o}_{h+1})$ takes the form
\$
\cB^{\theta}_{h,a'}(o_{h}, \tilde{o}_h, \tilde{o}_{h+1})
=\int_{\cS} 
p_{\theta}(\bo_h=\tilde{o}_h, \bo_{h+1}=\tilde{o}_{h+1} \,|\, \bs_h=\tilde{s}_h, \ba_h=a')
\cdot \cZ^\theta_{h}(\tilde{s}_h, o_h) \ud \tilde{s}_h
\$
following the definition in \eqref{b-func-def}. By the Markov property of the POMDP, we can write
\#\label{220507549}
&p_{\theta,\pi}(\bo_{h-1}=o_{h-1}, \bo_h=o_{h} \,|\, \ba_{h-1}=a) \\
&\quad=
\int_{\cS^2} \cE^\theta_h(o_h \,|\, s_h)
\cdot \cT^\theta_h(s_h\,|\,s_{h-1}, a)
\cdot p_{\theta,\pi}(\bs_{h-1}=s_{h-1}, \bo_{h-1}=o_{h-1} ) \ud s_h \ud s_{h-1}\notag
\#
By Assumptions \ref{asmp1} and \ref{asmp2}, we have
\#\label{220507557}
\int_{\cS\times\cO} \cZ^\theta_{h}(\tilde{s}_h, o_h)
\cdot \cE^\theta_h(o_h \,|\, s_h)
\cdot \cT^\theta_h(s_h\,|\,s_{h-1}, a) \ud o_h \ud s_h
=\cT^\theta_h(\tilde{s}_h\,|\,s_{h-1}, a).
\#
Combining \eqref{220507549} and \eqref{220507557}, we obtain
\$
&\int_{\cO} \cZ^\theta_{h}(\tilde{s}_h, o_h)
\cdot p_{\theta,\pi}(\bo_{h-1}=o_{h-1}, \bo_h=o_{h} \,|\, \ba_{h-1}=a) 
\ud o_h \ud s_h \\
&\quad=\int_{\cS} 
 \cT^\theta_h(\tilde{s}_h\,|\,s_{h-1}, a)
\cdot p_{\theta,\pi}(\bs_{h-1}=s_{h-1}, \bo_{h-1}=o_{h-1} ) \ud s_h \ud s_{h-1}\\
&\quad=
p_{\theta,\pi}(\bo_{h-1}=o_{h-1}, \bs_{h}=\tilde{s}_h \,|\, \ba_{h-1}=a),
\$
which implies
\#\label{220507618}
&\int_\cO \cB^{\theta}_{h,a'}(o_{h}, \tilde{o}_h, \tilde{o}_{h+1})
\cdot p_{\theta,\pi}(\bo_{h-1}=o_{h-1}, \bo_h=o_{h} \,|\, \ba_{h-1}=a) \ud o_{h} \\
&\quad=p_{\theta,\pi}(\bo_{h-1}=o_{h-1}, \bo_h=\to_{h}, \bo_{h+1}=\to_{h+1} \,|\, \ba_{h-1}=a, \ba_h=a') \notag\\
&\quad={\rho}^{\theta,\pi}_{h,a,a'}(o_{h-1}, \tilde{o}_h, \tilde{o}_{h+1}). \notag
\#
Then, combining \eqref{220508619} and \eqref{220507618}, we have
\$
&\EE_{X\sim {\rho}^{\theta,\pi}_{h,a,a'}}[ (\mathbb{F}^{\theta}_{h,a'}f)(X)]\\
&\quad=
\int_{\cO^3} f(o_{h-1}, \tilde{o}_h, \tilde{o}_{h+1})
\cdot {\rho}^{\theta,\pi}_{h,a,a'}(o_{h-1}, \tilde{o}_h, \tilde{o}_{h+1}) \ud o_{h-1}\ud\to_h \ud \to_{h+1}
=\EE_{X\sim {\rho}^{\theta,\pi}_{h,a,a'}}[ f(X)].
\$

In the sequel, we prove $\|\mathbb{F}^{\theta}_{h,a'}\|_{\infty\rightarrow\infty}\le \gamma$. It suffices to prove
\#\label{22510409}
|(\mathbb{F}^{\theta}_{h,a'}f)(o_{h-1},o_h,o_{h+1})| 
=\Bigl| \int_{\cO^2}
f(o_{h-1}, \tilde{o}_h, \tilde{o}_{h+1})   \cdot \cB^{\theta}_{h, a}(o_h,\tilde{o}_h, \tilde{o}_{h+1})\ud \tilde{o}_h\ud \tilde{o}_{h+1} \Bigr| \le \gamma,
\#
for any $f\in L^\infty(\cO^3)$ such that $\|f\|_\infty\le1$ and $o_{h-1},o_h,o_{h+1}\in\cO$. By the definition of the function $\cB^\theta_{h,a}$ in \eqref{b-func-def}, we have
\#\label{22510403}
&\Bigl| \int_{\cO^2}
f(o_{h-1}, \tilde{o}_h, \tilde{o}_{h+1})   \cdot \cB^{\theta}_{h, a}(o_h,\tilde{o}_h, \tilde{o}_{h+1})\ud \tilde{o}_h\ud \tilde{o}_{h+1} \Bigr|\\
&\quad \le \int_{\cO^2}
 |\cB^{\theta}_{h, a}(o_h,\tilde{o}_h, \tilde{o}_{h+1}) |
 \ud \tilde{o}_h
 \ud \tilde{o}_{h+1} \notag\\
&\quad\le
\int_{\cS\times\cO^2}  p_{\theta}(\tilde{\bo}_h=\tilde{o}_h, \tilde{\bo}_{h+1}=\tilde{o}_{h+1} \,|\, \tilde{\bs}_h=\tilde{s}_h, \tilde{\ba}_h=a)
\cdot |\cZ^\theta_{h}(\tilde{s}_h, o_h) |
\ud \tilde{s}_h  \ud \tilde{o}_h \ud \tilde{o}_{h+1} \notag \\
&\quad=
\int_{\cS}  |\cZ^\theta_{h}(\tilde{s}_h, o_h) |
\ud \tilde{s}_h, \notag
\#
for any $o_{h-1}, o_h\in\cO$. Note that by Assumption \ref{asmp2}, we have
\#\label{22510404}
\int_\cS |\cZ^\theta_h(\tilde{s}_h,o_h)| \ud \tilde{s}_h
= \| \mZ^\theta_h \delta_{o_h} \|_1 \le \gamma  \cdot \|\delta_{o_h} \|_1 =\gamma,
\#
for any $(h,\theta,o_h)\in[H]\times\Theta\times\cO$. Here, $\delta_{o_h}$ is the Dirac delta function defined on $\cO$, whose value is zero everywhere except at $o_h$, and whose integral over $\cO$ is equal to one. Combining \eqref{22510403} and \eqref{22510404}, we have that \eqref{22510409} holds.

Therefore, we conclude the proof of Lemma \ref{esteq-lemma-g}.
\end{proof}

\section{Proof of Theorem \ref{mainthm}}\label{mainthmp}
\begin{proof}
For any $\delta>0$, by the definition of $(\theta_k, \pi_k)$ in \eqref{915459} and the first statement in Lemma \ref{lemma-acc},  with probability at least $1-\delta$, it holds that
\#\label{1004948}
J(\theta^*,\pi^*)-J(\theta^*,\pi_k)\le J(\theta_k,\pi_k) - J(\theta^*,\pi_k)
\#
for all $k\in[K]$. By further applying Lemma \ref{lemma-dec} to the right-hand side of \eqref{1004948} and using the definition of the error function $e^k_h$ in \eqref{edef}, we obtain
\#\label{vdiff2}
J(\theta^*,\pi^*)-J(\theta^*,\pi_k)
&\le \sum_{h=1}^{H} \EE_{\theta^*, \pi_k}[ (\mB^{\theta_k,\pi_k}_{h}V^{\theta_k,\pi_k}_{h+1})(\overbtau_h) - (\mB^{\theta^*,\pi_k}_{h} V^{\theta_k,\pi_k}_{h+1})(\overbtau_{h}) ] \notag \\
&= \sum_{h=1}^{H} \EE_{\theta^*, \pi_k}\bigl[\EE_{\theta^*, \pi_k}[ (\mB^{\theta_k,\pi_k}_{h}V^{\theta_k,\pi_k}_{h+1})(\overbtau_h) - (\mB^{\theta^*,\pi_k}_{h} V^{\theta_k,\pi_k}_{h+1})(\overbtau_{h}) ]
\,\bigl|\,\bs_{h-1}\bigr]
 \notag \\
&\le \sum_{h=1}^{H}\EE_{\theta^*,\pi_k}[e^k_h(\bs_{h-1})],
\#
where the equality uses the tower property of the expectation. Telescoping both sides of \eqref{vdiff2} for $k\in[K]$ and applying Lemma \ref{lemma-tel}, we obtain
\#\label{10011038}
&\sum_{k=1}^K J(\theta^*,\pi^*)-J(\theta^*,\pi_k)\notag\\
&\quad\le
 \sum_{h=1}^{H} \sum_{k=1}^K \EE_{\theta^*,\pi_k}[e^k_h(\bs_{h-1})] \notag\\
&\quad\le Hd_{\rm s}
\Bigl(4\gamma H+2\log K \cdot
\max_{k\in[K]} \bigl(k\cdot\EE_{\theta,\overline{\pi}_k}[e^k_{h}(\bs_{h-1})]\bigr)  \Bigr)
\#
By applying the second statement of Lemma \ref{lemma-acc} to the right-hand side of \eqref{10011038}, we further obtain
\$
&\sum_{k=1}^K J(\theta^*,\pi^*)-J(\theta^*,\pi_k)\notag\\
&\quad\le Hd_{\rm s}
\Bigl(4\gamma H+2\log K \cdot
\max_{k\in[K]} \bigl(k\cdot 2HA^2\gamma^2\beta\cdot k^{-1/2} \bigr)  \Bigr) \\
&\quad\le Hd_{\rm s}
\bigl(4\gamma H+2\log K \cdot 2HA^2\gamma^2\beta\cdot K^{1/2}   \bigr), 
\$
which concludes the proof of Theorem \ref{mainthm}.
\end{proof}

\section{Proofs for Section \ref{sketch}}\label{sketchp}
In this section, we present the proofs for the results in Section \ref{sketch}.
\subsection{Proof of Lemma \ref{lemma-dec}}\label{lemma-decp}
\begin{proof}
By Corollary \ref{coro1} and the definition of the value function in \eqref{vdef}, we can write
\$
J(\theta,\pi)=\EE_{\theta',\pi}[V^{\theta,\pi}_1(\overbtau_1)],
\quad
J(\theta',\pi)=\EE_{\theta',\pi}[V^{\theta,\pi}_{H+1}(\overbtau_{H+1})],
\$
which implies
\#\label{910846}
J(\theta,\pi)-J(\theta',\pi)
=\sum_{h=1}^H \bigl(
\EE_{\theta',\pi}[V^{\theta,\pi}_h(\overbtau_h)] - \EE_{\theta',\pi}[V^{\theta,\pi}_{h+1}(\overbtau_{h+1})] \bigr).
\#
By the definition of $V^{\theta,\pi}_h$ in \eqref{vdef}, we have
\#\label{9108481}
\EE_{\theta',\pi}[V^{\theta,\pi}_h(\overbtau_h)]
&=\EE_{\theta',\pi}[(\mB^{\theta,\pi}_{h}V^{\theta,\pi}_{h+1})(\overbtau_h)].
\#
Also, by Lemma \ref{blemma}, we have
\#
\EE_{\theta',\pi}[V^{\theta,\pi}_{h+1}(\overbtau_{h+1})]
&=\EE_{\theta',\pi}[(\PP^{\theta',\pi}_{h}V^{\theta,\pi}_{h+1})(\overbtau_h)]
=\EE_{\theta',\pi}[(\mB^{\theta',\pi}_{h}V^{\theta,\pi}_{h+1})(\overbtau_h)].\label{9108482}
\#
Plugging \eqref{9108481} and \eqref{9108482} into the right-hand side of \eqref{910846}, we obtain
\$
J(\theta,\pi)-J(\theta',\pi)
=\sum_{h=1}^H
\EE_{\theta',\pi}[(\mB^{\theta,\pi}_{h}V^{\theta,\pi}_{h+1})(\overbtau_h)
-(\mB^{\theta',\pi}_{h}V^{\theta,\pi}_{h+1})(\overbtau_h)] ,
\$
which concludes the proof of Lemma \ref{lemma-dec}.
\end{proof}

\subsection{Proof of Lemma \ref{lemma-acc}}\label{lemma-accp}
Before proving Lemma \ref{lemma-acc}, we present several auxiliary lemmas for the proof of Lemma \ref{lemma-acc}. Recall that we define the projection operator $\mathbb{S}$ in \eqref{sopdef}. The following lemma verifies the projection property of $\mathbb{S}$ as mentioned in \eqref{proj-eq}.
\begin{lemma}\label{proj-conjugate}
For any $f\in L^\infty(\cO^3)$ and $\rho\in\Delta(\cO^3)$, we have
\$
\EE_{X\sim \rho}[(\mathbb{S}f)(X)] = \int_{\cO^3} f(x) \cdot \hat{\rho}(x)\ud x,
\$
where $\hat{\rho}$ is the projection of $\rho$ onto ${\rm linspan}(\{\phi_i\}_{i=1}^{d_{\rm o}})$ with respect to the distance defined in \eqref{distance-def} and takes the form
\#\label{pdagger-def}
\hat{\rho}(o_{h-1}, o_h, o_{h+1})=
\sum_{j\in[d_{\rm o}]} \phi_j(o_{h-1}, o_h, o_{h+1}) \cdot 
\sum_{i\in[d_{\rm o}]} [ G^{-1}]_{i,j} \cdot \EE_{X\sim \phi_i, Y\sim \rho}[\cK(X, Y)],
\#
for any $o_{h-1}, o_h, o_{h+1}\in\cO^3$. 
\end{lemma}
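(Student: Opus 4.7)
The plan is to prove Lemma \ref{proj-conjugate} in two largely independent parts: first, a direct calculation identifying $\EE_{X\sim\rho}[(\mathbb{S}f)(X)]$ with the integral of $f$ against the function $\hat{\rho}$ given in \eqref{pdagger-def}; second, a verification that this function $\hat{\rho}$ is indeed the best approximation to $\rho$ from ${\rm linspan}(\{\phi_i\}_{i=1}^{d_{\rm o}})$ with respect to the RKHS-induced distance in \eqref{distance-def}.

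For the first part, I would start from the definition of $\mathbb{S}$ in \eqref{sopdef} and take expectation over $X\sim\rho$, using Fubini's theorem to bring $\rho$ inside the double expectation. The key observation is that the kernel term $\cK(X,Y)$ depends only on $X$ and $Y\sim\phi_i$, while $f(Y')$ depends only on $Y'\sim\phi_j$, so the triple expectation factorizes as
\$
\EE_{X\sim\rho}[(\mathbb{S}f)(X)]
= \sum_{i,j\in[d_{\rm o}]} [G^{-1}]_{i,j} \cdot \EE_{X\sim\rho,Y\sim\phi_i}[\cK(X,Y)] \cdot \int_{\cO^3} f(y)\phi_j(y)\ud y.
\$
Interchanging the order of summation and integration and recognizing the bracketed coefficient as the formula defining $\hat{\rho}$ in \eqref{pdagger-def} gives $\EE_{X\sim\rho}[(\mathbb{S}f)(X)] = \int f(x)\hat{\rho}(x)\ud x$ directly.

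For the second part, I would parametrize an arbitrary element of ${\rm linspan}(\{\phi_i\}_{i=1}^{d_{\rm o}})$ as $p_c = \sum_{i} c_i \phi_i$ for $c\in\RR^{d_{\rm o}}$ and expand
\$
\|\KK\rho - \KK p_c\|_{\cH}^2
= \|\KK\rho\|_{\cH}^2 - 2 c^\top b + c^\top G c,
\$
where $b_i = \la\KK\rho,\KK\phi_i\ra_{\cH} = \EE_{X\sim\rho,Y\sim\phi_i}[\cK(X,Y)]$ by the reproducing kernel identity (after observing that $\KK\rho$ and $\KK\phi_i$ evaluate at $x$ to $\EE_{Y\sim\rho}[\cK(x,Y)]$ and $\EE_{Y\sim\phi_i}[\cK(x,Y)]$ respectively, so their inner product is $\EE_{X\sim\phi_i,Y\sim\rho}[\cK(X,Y)]$, equal to $b_i$ by symmetry of $\cK$). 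Since $G\succ 0$ by Assumption \ref{asmp3}, this quadratic is strictly convex in $c$, with unique minimizer $c^* = G^{-1}b$, whose $j$-th coordinate $\sum_i [G^{-1}]_{j,i}b_i$ matches (using symmetry of $G^{-1}$) the coefficient of $\phi_j$ in the formula for $\hat{\rho}$ given in \eqref{pdagger-def}. This establishes $\hat{\rho} = p_{c^*}$ as the unique projection.

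I do not anticipate a serious obstacle here; the only mild subtlety is being careful with the symmetry conventions (both $\cK(x,y)=\cK(y,x)$ and the consequent symmetry $[G^{-1}]_{i,j} = [G^{-1}]_{j,i}$ are needed to reconcile the index orderings between the formula for $\mathbb{S}$ in \eqref{sopdef} and the formula for $\hat{\rho}$ in \eqref{pdagger-def}). Fubini is justified throughout because $\cK$ is bounded by $1$ (Assumption \ref{asmp3}), $f\in L^\infty(\cO^3)$, and $\phi_i,\rho$ are probability densities, so all integrands are integrable.
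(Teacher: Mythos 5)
Your proposal is correct and follows essentially the same route as the paper's proof: the identity $\EE_{X\sim\rho}[(\mathbb{S}f)(X)]=\int f\cdot\hat{\rho}$ is obtained by the same factorization of the triple expectation in \eqref{sopdef}, and the projection claim is verified by the same quadratic minimization of $\|\KK\rho'-\KK\rho\|_{\cH}^2$ over coefficients, with minimizer $G^{-1}b$. Your added care about Fubini and the symmetry of $\cK$ and $G^{-1}$ is a minor refinement, not a different argument.
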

\begin{proof}
See Section \ref{proj-conjugate-p} for a detailed proof. 
\end{proof}
Recall that $\{\cD_{h,a,a'}\}_{(h,a,a')\in\{2,\ldots, H\}\times\cA^2}$ is the dataset in Algorithm \ref{algo}, which is updated in each iteration. For any $k\in[K]$, we denote by $\cD^k_{h,a,a'}$ the status of $\cD_{h,a,a'}$ after the exploration phase of the $k$-th iteration of Algorithm \ref{algo}. We denote by $\hat{\cD}^k_{h,a,a'}$ the empirical distribution induced by the dataset $\cD^k_{h,a,a'}$. For any $(k, h, a, a')\in[K]\times\{2,\ldots, H\}\times\cA^2$, as a special case of Lemma \ref{proj-conjugate} for $\rho=\hat{\cD}^k_{h,a,a'}$, we define the function $\hat{\rho}^k_{h,a,a'}:\cO^3\rightarrow\RR$ by
\$
\hat{\rho}^k_{h,a,a'}(o_{h-1}, o_h, o_{h+1})=
\sum_{j\in[d_{\rm o}]} \phi_j(o_{h-1}, o_h, o_{h+1}) \cdot [\hat{w}^k_{h,a,a'}]_j,
\$
where the vector $\hat{w}^k_{h,a,a'} \in \RR^{d_{\rm o}}$ is defined by
\#\label{225101139}
[\hat{w}^k_{h,a,a'}]_j=\sum_{i\in[d_{\rm o}]} [ G^{-1}]_{i,j} \cdot \EE_{X\sim \phi_i, Y\sim \hat{\cD}^k_{h,a,a'}}[\cK(X, Y)],
\#
for any $j\in[d_{\rm o}]$. Here, the matrix $G\in \RR^{d_{\rm o}\times d_{\rm o}}$ is defined in \eqref{gmatdef}. The following lemma shows that,  with high probability, $\hat{\rho}^k_{h,a,a'}$ converges to $\rho^{\overline{\pi}_k}_{h,a,a'}$ as $k$ goes to infinity. The convergence is with respect to the $L^1$-norm in $\cO^3$, which guarantees the generalization power of the solution to the minimax problem in \eqref{minimax}.
\begin{lemma}\label{event-lemma}
For any fixed $\delta>0$, we define the event $\cG$ as
\$
\|\hat{\rho}^k_{h,a,a'}
- \rho^{\overline{\pi}_k}_{h,a,a'} \|_1\le
d_{\rm o}^{3/2}/\alpha \cdot \sqrt{ 8\log(2KHA^2/\delta)} \cdot k^{-1/2},
\$
for any $(k,h,a,a')\in[K]\times\{2,\ldots,H\}\times\cA^2$. Then, it holds that $\cG$ happens with probability at least $1-\delta$.
\end{lemma}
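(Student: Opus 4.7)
The plan is to reduce the problem to a martingale concentration after exploiting both the linear-span structure of Assumption \ref{asmp1} and the closed-form projection formula of Lemma \ref{proj-conjugate}. Fix a tuple $(k,h,a,a')$. By Assumption \ref{asmp1}, $\rho^{\overline{\pi}_k}_{h,a,a'} \in \cF_{\rm o} \subset {\rm conh}(\{\phi_j\}_{j=1}^{d_{\rm o}})$, so it already lies in the linear span onto which Lemma \ref{proj-conjugate} projects. Applying that lemma with $\rho = \rho^{\overline{\pi}_k}_{h,a,a'}$ therefore reproduces $\rho^{\overline{\pi}_k}_{h,a,a'}$ itself as $\sum_{j=1}^{d_{\rm o}} [w^k_{h,a,a'}]_j\,\phi_j$, where the coefficient vector $w^k_{h,a,a'}\in\RR^{d_{\rm o}}$ takes the same closed form as \eqref{225101139} but with $\hat{\cD}^k_{h,a,a'}$ replaced by $\rho^{\overline{\pi}_k}_{h,a,a'}$.

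With both densities expressed in the common basis $\{\phi_j\}$, the fact that each $\phi_j$ is a probability density gives
\[
\|\hat{\rho}^k_{h,a,a'} - \rho^{\overline{\pi}_k}_{h,a,a'}\|_1 \;\le\; \|\hat{w}^k_{h,a,a'} - w^k_{h,a,a'}\|_1 \;\le\; d_{\rm o} \cdot \|\hat{w}^k_{h,a,a'} - w^k_{h,a,a'}\|_\infty.
\]
Writing the coefficient difference as $\hat{w}^k_{h,a,a'} - w^k_{h,a,a'} = G^{-1}\Delta^k_{h,a,a'}$ with
\[
[\Delta^k_{h,a,a'}]_i \;=\; \EE_{X\sim\phi_i,\, Y\sim\hat{\cD}^k_{h,a,a'}}[\cK(X,Y)] \;-\; \EE_{X\sim\phi_i,\, Y\sim\rho^{\overline{\pi}_k}_{h,a,a'}}[\cK(X,Y)],
\]
and using the standard matrix-norm inequality $\|G^{-1}\|_{\infty\mapsto\infty} \le \sqrt{d_{\rm o}}\,\|G^{-1}\|_{2\mapsto2} = \sqrt{d_{\rm o}}/\alpha$ (valid under Assumption \ref{asmp3}), I obtain
\[
\|\hat{\rho}^k_{h,a,a'} - \rho^{\overline{\pi}_k}_{h,a,a'}\|_1 \;\le\; \frac{d_{\rm o}^{3/2}}{\alpha}\, \|\Delta^k_{h,a,a'}\|_\infty.
\]

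To control $\|\Delta^k_{h,a,a'}\|_\infty$ uniformly with high probability, I would let $\cF_\ell$ denote the $\sigma$-algebra generated by all data collected through iteration $\ell$; in Algorithm \ref{algo}, $\pi_\ell$ is $\cF_\ell$-measurable, and the $\ell$-th tuple $Y_\ell$ appended to $\cD_{h,a,a'}$ is drawn, conditionally on $\cF_{\ell-1}$, from $\rho^{\pi_{\ell-1}}_{h,a,a'}$. Since $\overline{\pi}_k$ is the uniform mixture of $\pi_0,\ldots,\pi_{k-1}$, the expectation under $\rho^{\overline{\pi}_k}_{h,a,a'}$ equals $\tfrac{1}{k}\sum_{\ell=1}^k \EE[\,\cdot\,|\,\cF_{\ell-1}]$, so each coordinate $[\Delta^k_{h,a,a'}]_i$ is a normalized martingale-difference average whose increments are bounded by $2$ in absolute value (from $|\cK|\le 1$). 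Azuma--Hoeffding then followed by a union bound over $i\in[d_{\rm o}]$, $k\in[K]$, and $(h,a,a')\in\{2,\ldots,H\}\times\cA^2$ delivers the advertised bound. The main technical point is recognizing this martingale structure, since the adaptive dependence of $\pi_{\ell-1}$ on past data forbids a direct i.i.d.\ Hoeffding argument; the remainder is bookkeeping on constants to combine the three displayed inequalities into the claimed $(d_{\rm o}^{3/2}/\alpha)\cdot\sqrt{8\log(2KHA^2/\delta)}\cdot k^{-1/2}$ bound.
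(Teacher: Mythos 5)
Your deterministic reduction is essentially the paper's: expand both $\hat{\rho}^k_{h,a,a'}$ and $\rho^{\overline{\pi}_k}_{h,a,a'}$ in the basis $\{\phi_j\}$ (the latter lies in ${\rm conh}(\{\phi_j\})$ by Assumption \ref{asmp1}, so its projection is itself), reduce the $L^1$ distance to a norm of the coefficient difference, and pull out $G^{-1}$ to arrive at $(d_{\rm o}^{3/2}/\alpha)$ times a deviation term; your $\ell^\infty$/$\ell^1$ bookkeeping with $\|G^{-1}\|_{\infty\mapsto\infty}\le\sqrt{d_{\rm o}}/\alpha$ is a valid substitute for the paper's $\ell^2$/RKHS route and lands on the same prefactor. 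Where you genuinely diverge is the concentration step. The paper treats $\mathbb{K}\hat{\cD}^k_{h,a,a'}-\mathbb{K}\rho^{\overline{\pi}_k}_{h,a,a'}$ as a single Hilbert-space-valued martingale and applies Pinelis' inequality (Lemma \ref{pinelis}) once per tuple $(k,h,a,a')$, then controls all $d_{\rm o}$ coordinates simultaneously via Cauchy--Schwarz $|\langle\mathbb{K}\phi_i,\cdot\rangle_\cH|\le\|\mathbb{K}\phi_i\|_\cH\cdot\|\cdot\|_\cH\le\|\cdot\|_\cH$, so no union bound over $i$ is needed. You instead apply scalar Azuma--Hoeffding to each coordinate $[\Delta^k_{h,a,a'}]_i$ (your identification of the martingale structure, with $Y_\ell\,|\,\cF_{\ell-1}\sim\rho^{\pi_{\ell-1}}_{h,a,a'}$ and the mixture identity for $\overline{\pi}_k$, is correct) and union-bound over $i\in[d_{\rm o}]$ as well. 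This is more elementary, but it costs you an extra factor of $d_{\rm o}$ inside the logarithm: your argument delivers $d_{\rm o}^{3/2}/\alpha\cdot\sqrt{8\log(2d_{\rm o}KHA^2/\delta)}\cdot k^{-1/2}$ rather than the stated $\sqrt{8\log(2KHA^2/\delta)}$ factor, so the lemma as literally stated is not quite what you prove. The discrepancy is immaterial downstream (it only shifts the admissible $\beta$ in \eqref{beta-cond} by a $\sqrt{\log d_{\rm o}}$ factor), but you should either adjust the stated constant or adopt the vector-valued martingale inequality to avoid the union bound over coordinates.
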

\begin{proof}
See Section \ref{event-lemma-p} for a detailed proof. 
\end{proof}
Moreover, for ease of presentation, we define the operator $\mathbb{V}^\theta_{h,a}: L^1(\cO^3)\rightarrow L^1(\cO^3)$ for any $(h,a,\theta)\in\{2,\ldots, H\}\times\cA\times\Theta$ by
\#\label{vopdef}
(\mathbb{V}^\theta_h f)(o_{h-1},\to_h, \to_{h+1})=\int_{\cO^2}
 \cB^{\theta}_{h, a}(o_h,\tilde{o}_h, \tilde{o}_{h+1})
 \cdot f(o_{h-1}, o_h, o_{h+1})
 \ud o_h\ud o_{h+1}, 
\#
for any $f\in L^1(\cO^3)$ and $o_{h-1}, \to_h, \to_{h+1}\in\cO$, which is the conjugate (i.e., transpose) of the operator $\mathbb{F}^\theta_{h,a}$ defined in \eqref{fopdef}. Recall that we define $\rho^{\theta,\pi}_{h,a,a'}$ in Lemma \ref{esteq-lemma-g}, which is the general form of $\rho^{\pi}_{h,a,a'}$ for any $\theta\in\Theta$. The following lemma mirrors Lemma \ref{esteq-lemma-g}.
\begin{lemma}\label{vop-lemma}
For any $(h,\theta,a,a',\pi)\in\{2,\ldots, H\}\times\Theta\times\cA^2\times\overline{\Pi}$, we have
\$
\|\mathbb{V}^\theta_{h,a'} \rho^{\theta,\pi}_{h,a,a'} 
-\rho^{\theta,\pi}_{h,a,a'}\|_1=0.
\$
Also, we have $\|\mathbb{V}^\theta_{h,a'}\|_{1\rightarrow1}\le \gamma$.
\end{lemma}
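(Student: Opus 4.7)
The lemma is essentially the ``dual'' counterpart of Lemma \ref{esteq-lemma-g}. The plan is to first observe that $\mathbb{V}^\theta_{h,a'}$ is the $L^1$--$L^\infty$ adjoint of $\mathbb{F}^\theta_{h,a'}$. Indeed, by Fubini and the respective definitions in \eqref{fopdef} and \eqref{vopdef}, for any $f\in L^\infty(\cO^3)$ and $g\in L^1(\cO^3)$ one can verify
\$
\int_{\cO^3} f(x)\cdot(\mathbb{V}^\theta_{h,a'} g)(x)\,\ud x
= \int_{\cO^3} g(x)\cdot(\mathbb{F}^\theta_{h,a'} f)(x)\,\ud x,
\$
since both sides expand into the same triple integral over $\cO^5$ weighted by the common kernel $\cB^{\theta}_{h,a'}$.

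\textbf{Step 1 (fixed-point identity).} I would apply this duality with $g=\rho^{\theta,\pi}_{h,a,a'}$ and use Lemma \ref{esteq-lemma-g} to obtain, for every bounded $f$,
\$
\int_{\cO^3} f(x)\cdot\bigl(\mathbb{V}^\theta_{h,a'}\rho^{\theta,\pi}_{h,a,a'} - \rho^{\theta,\pi}_{h,a,a'}\bigr)(x)\,\ud x
= \int_{\cO^3}\bigl(\mathbb{F}^\theta_{h,a'} f - f\bigr)(x)\cdot\rho^{\theta,\pi}_{h,a,a'}(x)\,\ud x = 0.
\$
Taking $f = \operatorname{sign}\!\bigl(\mathbb{V}^\theta_{h,a'}\rho^{\theta,\pi}_{h,a,a'} - \rho^{\theta,\pi}_{h,a,a'}\bigr)$ (which lies in $L^\infty(\cO^3)$ with norm at most $1$) collapses the left-hand side into the $L^1$-norm, yielding $\|\mathbb{V}^\theta_{h,a'}\rho^{\theta,\pi}_{h,a,a'} - \rho^{\theta,\pi}_{h,a,a'}\|_1 = 0$.

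\textbf{Step 2 (operator norm bound).} For the norm bound, I would argue directly rather than via duality. For any $f\in L^1(\cO^3)$, applying the triangle inequality inside \eqref{vopdef} and then swapping the order of integration gives
\$
\|\mathbb{V}^\theta_{h,a'} f\|_1
\le \int_{\cO^3}|f(o_{h-1},o_h,o_{h+1})|\cdot\Bigl(\int_{\cO^2}|\cB^{\theta}_{h,a'}(o_h,\tilde o_h,\tilde o_{h+1})|\,\ud\tilde o_h\ud\tilde o_{h+1}\Bigr)\ud o_{h-1}\ud o_h\ud o_{h+1}.
\$
The inner double integral has already been bounded in the proof of Lemma \ref{esteq-lemma-g} (see \eqref{22510403}--\eqref{22510404}): it is at most $\int_\cS |\cZ^\theta_h(\tilde s_h, o_h)|\,\ud\tilde s_h \le \gamma$, where the final step uses Assumption \ref{asmp2} applied to the Dirac delta at $o_h$. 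This immediately gives $\|\mathbb{V}^\theta_{h,a'} f\|_1 \le \gamma\|f\|_1$ and hence $\|\mathbb{V}^\theta_{h,a'}\|_{1\to 1}\le\gamma$.

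Nothing in this argument is delicate; the only subtlety is making sure that the fixed-point identity in Step 1 upgrades from ``integrates to zero against every bounded test function'' to an actual $L^1$-norm equality, which is handled by the sign-function test as above (with measurability of $\operatorname{sign}$ following from the measurability of $\mathbb{V}^\theta_{h,a'}\rho^{\theta,\pi}_{h,a,a'} - \rho^{\theta,\pi}_{h,a,a'}\in L^1(\cO^3)$). No new machinery beyond Lemma \ref{esteq-lemma-g} and Assumption \ref{asmp2} is required.
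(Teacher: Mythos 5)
Your proposal is correct and follows essentially the same route as the paper: the paper likewise uses the adjoint identity $\int f\cdot(\mathbb{V}^\theta_{h,a'}\rho)=\EE_{X\sim\rho}[(\mathbb{F}^\theta_{h,a'}f)(X)]$ together with Lemma \ref{esteq-lemma-g} to get the fixed-point identity, and bounds $\|\mathbb{V}^\theta_{h,a'}\|_{1\to1}$ via $\int_\cS|\cZ^\theta_h(\tilde s_h,o_h)|\,\ud\tilde s_h\le\gamma$ exactly as you do. Your explicit sign-function test in Step 1 just spells out the upgrade from ``zero against all bounded $f$'' to ``zero $L^1$-norm,'' which the paper leaves implicit.
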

\begin{proof}
See Section \ref{vop-lemma-p} for a detailed proof. 
\end{proof}
\noindent
\textbf{Proof of Lemma \ref{lemma-acc}:}
\begin{proof}
In the following, we condition on the event $\cG$ defined in Lemma \ref{event-lemma}, which happens with probability at least $1-\delta$.

\noindent
\textbf{Proof of the first statement:} Recall that we denote by $\cD^k_{h,a,a'}$ the status of $\cD_{h,a,a'}$ after the exploration phase of the $k$-th iteration of Algorithm \ref{algo}. Correspondingly, we define the function $L^k:\Theta\rightarrow\RR$ by
\$
L^k(\theta)=\max_{f\in L^\infty(\cO^3): \|f\|_\infty\le1}
~\max_{(h, a,a')\in\{2,\ldots,H\}\times\cA^2} 
~\EE_{X\sim\hat{\cD}^k_{h,a,a'}}
[  (\mathbb{S} \mathbb{F}^\theta_{h,a'} f -   \mathbb{S}f)(X) ],
\$
for any $\theta\in\Theta$, which corresponds to the function $L$ in \eqref{risk} in the planning phase of the $k$-th iteration of Algorithm \ref{algo}. To prove $\theta^*\in\Theta^k$, it suffices to prove that
\#\label{220510211}
L^k(\theta^*) \le \beta \cdot k^{-1/2}.
\#
For notational simplicity, we write $\rho^k_{h,a,a'}$ in short for $\rho^{\overline{\pi}_k}_{h,a,a'}$. For any $f\in L^\infty(\cO^3)$ such that $\|f\|_\infty\le1$ and $(k,h, a,a')\in[K]\times\{2,\ldots,H\}\times\cA^2$, we have
\#\label{220510145}
&\EE_{X\sim\hat{\cD}^k_{h,a,a'}}
[  (\mathbb{S} \mathbb{F}^{\theta^*}_{h,a'} f -   \mathbb{S}f)(X) ] \\
&\quad=\EE_{X\sim\hat{\cD}^k_{h,a,a'}}
[  (\mathbb{S} \mathbb{F}^{\theta^*}_{h,a'} f -   \mathbb{S}f)(X) ]
- \EE_{X\sim {\rho^k_{h,a,a'}} }
[  ( \mathbb{F}^{\theta^*}_{h,a'} f -   f)(X) ] \notag\\
&\quad=
\Bigl( \int_{\cO^3} ( \mathbb{F}^{\theta^*}_{h,a'} f)(x) \cdot \hat{\rho}^k_{h,a,a'}(x)\ud x
- \EE_{X\sim {\rho^k_{h,a,a'}} }
[  ( \mathbb{F}^{\theta^*}_{h,a'} f)(X)] \Bigr) \notag\\
&\quad\qquad +  \Bigl( \int_{\cO^3} f(x) \cdot \hat{\rho}^k_{h,a,a'}(x)\ud x
- \EE_{X\sim {\rho^k_{h,a,a'}} }[ f(X)] \Bigr), \notag
\#
where the first equality uses
\$
\EE_{X\sim {\rho^k_{h,a,a'}} }
[  ( \mathbb{F}^{\theta^*}_{h,a'} f -   f)(X) ]=0,
\$
following Lemma \ref{esteq-lemma} and the second equality is by Lemma \ref{proj-conjugate}. Recall that $\|f\|_\infty\le1$. By Holder's inequality and the definition of the event $\cG$ in Lemma \ref{event-lemma}, we have
\#\label{220510146}
& \int_{\cO^3} f(x) \cdot \hat{\rho}^k_{h,a,a'}(x)\ud x
- \EE_{X\sim {\rho^k_{h,a,a'}} }[ f(X)]\\
&\quad\le \| f\|_\infty \cdot
\| \hat{\rho}^k_{h,a,a'} - \rho^k_{h,a,a'} \|_1 
\le 
d_{\rm o}^{3/2}/\alpha \cdot \sqrt{ 8\log(2KHA^2/\delta)} \cdot k^{-1/2}. \notag
\#
Similarly, we have
\#\label{220510147}
&\int_{\cO^3} ( \mathbb{F}^{\theta^*}_{h,a'} f)(x) \cdot \hat{\rho}^k_{h,a,a'}(x)\ud x
- \EE_{X\sim {\rho^k_{h,a,a'}} }
[  ( \mathbb{F}^{\theta^*}_{h,a'} f)(X)] \\
&\quad\le \|\mathbb{F}^{\theta^*}_{h,a'} f\|_\infty \cdot
\| \hat{\rho}^k_{h,a,a'} - \rho^k_{h,a,a'} \|_1  \notag\\
&\quad\le 
d_{\rm o}^{3/2}\gamma/\alpha \cdot \sqrt{ 8\log(2KHA^2/\delta)} \cdot k^{-1/2} \notag,
\#
where the second inequality uses the fact $\|\mathbb{F}^{\theta^*}_{h,a'}\|_{\infty\rightarrow\infty}\le \gamma$ from Lemma \ref{esteq-lemma}. Then, by combining \eqref{220510145}, \eqref{220510146}, and \eqref{220510147} with the condition of $\beta$ in \eqref{beta-cond}, we have that the inequality in \eqref{220510211} holds for all $k\in[K]$.

\noindent
\textbf{Proof of the second statement:} Invoking Lemma \ref{regret-transform}, we have
\#\label{913107}
\EE_{\theta^*,\overline{\pi}_k}[e^k_h(\bs_h)]
&\le \gamma^2H\cdot\sum_{a,a'\in\cA}
\bigl\| \mathbb{V}^{\theta_k}_{h,a'} \rho^{\overline{\pi}_k}_{h,a,a'} - \rho^{\overline{\pi}_k}_{h,a,a'}\bigr\|_1,
\#
for any $(k,h)\in[K]\times\{2,\ldots,H\}$. By the triangle inequality, we can write
\$
&\bigl\| \mathbb{V}^{\theta_k}_{h,a'} \rho^{\overline{\pi}_k}_{h,a,a'} - \rho^{\overline{\pi}_k}_{h,a,a'}\bigr\|_1 \\
&\quad\le
\bigl\| \mathbb{V}^{\theta_k}_{h,a'} \rho^{\overline{\pi}_k}_{h,a,a'} - \mathbb{V}^{\theta_k}_{h,a'}\rho^k_{h,a,a'}\bigr\|_1
+\bigl\| \mathbb{V}^{\theta_k}_{h,a'} \rho^k_{h,a,a'} - \rho^k_{h,a,a'}\bigr\|_1
+\bigl\| \rho^k_{h,a,a'} - \rho^{\overline{\pi}_k}_{h,a,a'} \bigr\|_1.
\$
By the definition of $\Theta_k$ in \eqref{csdef} and the fact $\theta_k\in\Theta_k$, we have
\$
\bigl\| \mathbb{V}^{\theta_k}_{h,a'} \rho^k_{h,a,a'} - \rho^k_{h,a,a'}\bigr\|_1 \le \beta\cdot k^{-1/2}
\$
By the definition of the event $\cG$ in Lemma \ref{event-lemma}, we have
\$
\bigl\| \rho^k_{h,a,a'} - \rho^{\overline{\pi}_k}_{h,a,a'} \bigr\|_1
\le d_{\rm o}^{3/2}/\alpha \cdot \sqrt{ 8\log(2KHA^2/\delta)} \cdot k^{-1/2}.
\$
Similarly, we have
\#\label{220510229}
\bigl\| \mathbb{V}^{\theta_k}_{h,a'} \rho^{\overline{\pi}_k}_{h,a,a'} - \mathbb{V}^{\theta_k}_{h,a'}\rho^k_{h,a,a'}\bigr\|_1 
&\le \| \mathbb{V}^{\theta_k}_{h,a'}\|_{1\rightarrow1} \cdot 
\bigl\|\rho^{\overline{\pi}_k}_{h,a,a'} - \rho^k_{h,a,a'}\bigr\|_1  \\
&\le  d_{\rm o}^{3/2} \gamma /\alpha \cdot \sqrt{ 8\log(2KHA^2/\delta)} \cdot k^{-1/2}, \notag
\#
where the second inequality uses the fact $\|\mathbb{V}^{\theta^*}_{h,a'}\|_{1\rightarrow1}\le \gamma$ from Lemma \ref{vop-lemma}. Combing \eqref{913107}-\eqref{220510229} with the condition of $\beta$ in \eqref{beta-cond}, we obtain
\$
\EE_{\theta,\overline{\pi}_k}[e^k_h(\bs_h)]
\le 2 \gamma^2 \beta HA^2 \cdot k^{-1/2},
\$
for any $(k,h)\in[K]\times\{2,\ldots,H\}$.

Therefore, we conclude the proof of Lemma \ref{lemma-acc}.
\end{proof}

\subsection{Proof of Lemma \ref{lemma-tel}}\label{lemma-telp}
\begin{proof}
For any $h\in[H]$ and $\pi\in\overline{\Pi}$, we denote by $\mu^\pi_h$ the marginal distribution of $\bs_h$ with respect to the policy $\pi$ and the true parameter $\theta^*$. By Assumption \ref{asmp1}, we have
\$
\mu^{\pi_k}_{h}\in{\rm conh}(\psi), \quad
\mu^{\overline{\pi}_k}_{h}
=\frac{1}{k}\sum_{i=0}^{k-1}\mu^{\pi_i}_{h}\in{\rm conh}(\psi),
\$
for any $(k,h)\in\{0,\ldots, K\}\times[H]$. Here, $\pi_0$ is the initial policy and $\pi_k$ with $k\in[K]$ is the trained policy in the $k$-th iteration of Algorithm \ref{algo}. Thus, there exist vector $c^k_h,\overline{c}^k_h\in\Delta([d_{\rm s}])\subset\RR^{d_{\rm s}}$ such that
\$
\mu^{\pi_k}_{h}(\cdot)=\psi(\cdot)^\top c^k_h,\quad
\mu^{\overline{\pi}_k}_{h}(\cdot)=\psi(\cdot)^\top \overline{c}^k_h,\quad
\overline{c}^k_h=(1/k)\cdot\sum_{i=0}^{k-1} c^i_h.
\$
Also, we define the vector $b^k_h\in\RR^{d_{\rm s}}$ by
\#\label{920410}
[b^k_h]_i=\EE_{\bs_h\sim\psi_i}[e^k_{h+1}(\bs_h)],
\quad
\text{for any $i\in[d_{\rm s}]$.}
\#
Then, it holds that
\#\label{220512458}
\EE_{\theta^*,\pi_k}[e^k_{h+1}(\bs_h)]=(b^k_h)^\top c^k_h, \quad
\EE_{\theta^*,\overline{\pi}_k}[e^k_{h+1}(\bs_h)]=(b^k_h)^\top \overline{c}^k_h.
\#

For any $i\in[d_{\rm s}]$, we define $\underline{k}_i$ by
\#\label{k0def}
\underline{k}_i=\min\Bigl\{k\in[K]: \sum_{j=1}^{k} [ c^j_h]_i \ge 1
~\text{or}~k=K \Bigr\}.
\#
Then, we can write
\#\label{220512429}
\sum_{k=1}^K \EE_{\theta^*,\pi_k}[e^k_{h+1}(\bs_h)]
&=\sum_{i=1}^{d_{\rm s}} \sum_{k=1}^K 
[b^k_h]_i \cdot [c^k_h]_i \\
&=\sum_{i=1}^{d_{\rm s}} \Bigl(\sum_{k=1}^{\underline{k}_i}
[b^k_h]_i \cdot [c^k_h]_i 
+ \sum_{k=\underline{k}_i+1}^K 
[b^k_h]_i \cdot [c^k_h]_i  \Bigr) \notag
\#
The first summation term on the right-hand side of \eqref{220512429} can be upper bounded as
\#\label{220512430}
\sum_{i=1}^{d_{\rm s}} \sum_{k=1}^{\underline{k}_i}
[b^k_h]_i \cdot [c^k_h]_i 
\le 
2\gamma H\cdot\sum_{i=1}^{d_{\rm s}} \sum_{k=1}^{\underline{k}_i}
 [c^k_h]_i  
 \le 4d_{\rm s}\gamma H.
\#
Here, the first inequality uses Lemma \ref{eupperbound}, which provides an upper bound $2\gamma H$ for each $[b^k_h]_i$. Recall that $[b^k_h]_i$ is defined in \eqref{920410}. Also, the second inequality uses the fact
\$
 \sum_{k=1}^{\underline{k}_i}
 [c^k_h]_i
 = [c^{\underline{k}_i}_h]_i + \sum_{k=1}^{\underline{k}_i-1}
 [c^k_h]_i \le 1 + 1 \le 2,
\$
which is by the definition of $\underline{k}_i$ in \eqref{k0def} and the fact $[c^{\underline{k}_i}_h]_i \le 1$ since $c^{\underline{k}_i}_h\in\Delta([d_{\rm s}])$. In the sequel, we characterize the second summation term on the right-hand side of \eqref{220512429}. For any $i\in[d_{\rm s}]$ and $k\ge\underline{k}_i+1$, we have
\#\label{920420}
[b^k_h]_i \cdot [ c^k_h]_i
&\le \EE_{\theta^*,\overline{\pi}_k}[e^k_{h+1}(\bs_h)]\cdot \frac{[b^k_h]_i \cdot [ c^k_h]_i}{[b^k_h]_i \cdot [ \overline{c}^k_h]_i} \\
&=\bigl(k\cdot \EE_{\theta^*,\overline{\pi}_k}[e^k_{h+1}(\bs_h)] \bigr)
\cdot
\frac{[ c^k_h]_i}{\sum_{j=0}^{k-1} [c^j_h]_i} \notag \\
&\le\bigl(\max_{j\in[K]} \ell\cdot \EE_{\theta^*,\overline{\pi}_\ell}[e^\ell_{h+1}(\bs_h)] \bigr)
\cdot
\frac{[ c^k_h]_i}{\sum_{j=1}^{k-1} [c^j_h]_i}, \notag
\#
where the first inequality is by \eqref{220512458} and the second inequality uses the fact $ [c^j_h]_0\ge0$. Note that for any $(i, k)$ specified above, we have 
\$
\frac{[ c^k_h]_i}{\sum_{j=0}^{k-1} [c^j_h]_i}\in[0,1]
\$ 
since it holds that $[ c^k_h]_i\in[0,1]$ and $\sum_{j=0}^{k-1} [c^j_h]_i \ge 1$ by the definition of $\underline{k}_i$. Then, by applying the inequality $x\le 2\log(1+x)$ for any $x\in[0,1]$, we have
\#\label{220512503}
\frac{[ c^k_h]_i}{\sum_{j=1}^{k-1} [c^j_h]_i}
\le 2\log\Bigl( 1 + \frac{[ c^k_h]_i}{\sum_{j=1}^{k-1} [c^j_h]_i} \Bigr)
=2\log\sum_{j=1}^{k} [c^j_h]_i - 2\log\sum_{j=1}^{k-1} [c^j_h]_i
\#
Combining \eqref{920420} and \eqref{220512503}, we obtain
\#\label{220512431}
\sum_{i=1}^{d_{\rm s}} \sum_{k=\underline{k}_i+1}^K
[b^k_h]_i \cdot [c^k_h]_i  
&\le \bigl(\max_{\ell\in[K]} \ell\cdot \EE_{\theta^*,\overline{\pi}_\ell}[e^\ell_{h+1}(\bs_h)] \bigr)
\cdot \sum_{i=1}^{d_{\rm s}} \sum_{k=\underline{k}_i+1}^K
 \bigl( 
2\log\sum_{j=1}^{k} [c^j_h]_i - 2\log\sum_{j=1}^{k-1} [c^j_h]_i
\bigr) \notag\\
&= 2\bigl(\max_{\ell\in[K]} \ell\cdot \EE_{\theta^*,\overline{\pi}_\ell}[e^\ell_{h+1}(\bs_h)] \bigr)
\cdot \sum_{i=1}^{d_{\rm s}} 
 \bigl( 
\log\sum_{j=1}^{K} [c^j_h]_i - \log\sum_{j=1}^{\underline{k}_i} [c^j_h]_i
\bigr) \notag\\
&\le
2\bigl(\max_{\ell\in[K]} \ell\cdot \EE_{\theta^*,\overline{\pi}_\ell}[e^\ell_{h+1}(\bs_h)] \bigr)
\cdot d_{\rm s} \cdot \log K.
\#
Plugging \eqref{220512430} and \eqref{220512431} into the right-hand side of \eqref{220512429}, we obtain
\$
\sum_{k=1}^K \EE_{\theta^*,\pi_k}[e^k_{h+1}(\bs_h)]
\le 
4d_{\rm s}\gamma H
+2\bigl(\max_{\ell\in[K]} \ell\cdot \EE_{\theta^*,\overline{\pi}_\ell}[e^\ell_{h+1}(\bs_h)] \bigr)
\cdot d_{\rm s} \cdot \log K,
\$
which concludes the proof of Lemma \ref{lemma-tel}.
\end{proof}

\section{Auxiliary Lemmas}

In this section, we present (the proofs for) the auxiliary lemmas invoked in previous sections.

\subsection{Proof of Lemma \ref{proj-conjugate}}\label{proj-conjugate-p}
\begin{proof} 
To see that $\hat{\rho}$ defined in \eqref{pdagger-def} is the projection, we consider the minimization problem
\#\label{22610324}
\min_{\rho'\in {\rm linspan}(\{\phi_i\}_{i=1}^{d_{\rm o}}) }\| \mathbb{K} \rho' - \mathbb{K}\rho \|_\cH^2
\#
for any $\rho\in \Delta(\cO^3)$. Note that the objective can be written as
\#\label{22610316}
&\| \mathbb{K} \rho' - \mathbb{K}\rho \|_\cH^2 \notag\\
&\quad=\la \mathbb{K} \rho', \mathbb{K} \rho'\ra_\cH
-2\cdot \la \mathbb{K} \rho', \mathbb{K} \rho\ra_\cH
+\la \mathbb{K} \rho, \mathbb{K} \rho'\ra_\cH \notag\\
&\quad=\EE_{X\sim \rho', Y \sim \rho'}[\cK(X,Y)]
-2\cdot\EE_{X\sim \rho', Y \sim \rho}[\cK(X,Y)]
+\EE_{X\sim \rho, Y \sim \rho}[\cK(X,Y)].
\#
Since we have $\rho'\in {\rm linspan}(\{\phi_i\}_{i=1}^{d_{\rm o}})$, there exists $w=(w_1,\ldots, w_{d_{\rm o}})^\top\in\RR^{d_{\rm o}}$ such that
\$
\rho'(x)=\sum_{j\in[d_{\rm o}]} w_j \cdot \phi_j(x), \quad
\text{for any $x\in\cO^3$.}
\$ 
By the above form of $\rho'$ and the definition of the matrix $G$ in \eqref{gmatdef}, we can further rewrite the right-hand side of \eqref{22610316} to obtain
\#\label{22610323}
&\| \mathbb{K} \rho' - \mathbb{K}\rho \|_\cH^2 
=w^\top G w - 2\cdot \sum_{i=1}^{d_{\rm o}} 
w_i \cdot \EE_{X\sim \phi_i, Y\sim \rho}[\cK(X,Y)]
+\EE_{X\sim \rho, Y \sim \rho}[\cK(X,Y)]
\#
Plugging \eqref{22610323} into \eqref{22610324} and solving the obtained quadratic programming problem, we see that $\hat{\rho}$ defined in \eqref{pdagger-def} is the projection of $\rho$.

By the definition of $\mathbb{S}$ in \eqref{sopdef}, we have
\#\label{22510552}
\EE_{X\sim \rho}[(\mathbb{S}f)(X)] = 
\sum_{i,j\in[d_{\rm o}]}
[G^{-1}]_{i,j} \cdot
\EE_{X\sim \rho, Y\sim\phi_i, Y'\sim\phi_j}\bigl[\cK(X, Y) \cdot f(Y') \bigr],
\#
where $X,Y,Y'$ are independent variables in $\cO^3$. By reorganizing terms in the summation, we can further write the right-hand side of \eqref{22510552} as
\$
&\sum_{i,j\in[d_{\rm o}]}
[G^{-1}]_{i,j} \cdot
\EE_{X\sim p, Y\sim\phi_i, Y'\sim\phi_j}\bigl[\cK(X, Y) \cdot f(Y') \bigr] \\
&\quad=
\sum_{j\in[d_{\rm o}]} \EE_{Y'\sim\phi_j}[f(Y')]
\cdot \sum_{i\in[d_{\rm o}]} [G^{-1}]_{i,j} \cdot \EE_{X\sim p, Y\sim\phi_i}[\cK(X,Y)] \\
&\quad=
\int_{\cO}
f(x) \cdot \sum_{j\in[d_{\rm o}]} \phi_j(x) \cdot \sum_{i\in[d_{\rm o}]} [G^{-1}]_{i,j} 
\cdot \EE_{X\sim p, Y\sim\phi_i}[\cK(X,Y)] \ud x,
\$
combining which with the definition of $\hat{\rho}$ in \eqref{pdagger-def}, we conclude the proof of Lemma \ref{proj-conjugate}.
\end{proof}

\subsection{Proof of Lemma \ref{event-lemma}}\label{event-lemma-p}
\begin{proof}
For any $(k,h,a,a')\in[K]\times[2,\ldots,H]\times\cA^2$, let $w^k_{h,a,a'}\in\RR^{d_{\rm o}}$ be the vector such that
\#\label{9121115}
\rho^{\overline{\pi}_k}_{h,a,a'}(o_{h-1}, o_h, o_{h+1})
=\sum_{j\in[d_{\rm o}]}\phi_i(o_{h-1}, o_h, o_{h+1}) 
\cdot [w^k_{h,a,a'}]_j,
\#
for any $o_{h-1}, o_h, o_{h+1}\in\cO$, where $\{\phi_i\}_{i=1}^{d_{\rm o}}$ are distribution functions defined in Assumption \ref{asmp1} and the existence of $w^k_{h,a,a'}$ is guaranteed by the assumption therein. Also, recall that we define $\hat{w}^k_{h,a,a'}\in\RR^{d_{\rm o}}$ in \eqref{225101139} and we have
\#\label{225101140}
\hat{\rho}^k_{h,a,a'}(o_{h-1}, o_h, o_{h+1})
=\sum_{j\in[d_{\rm o}]}\phi_j(o_{h-1}, o_h, o_{h+1}) 
\cdot [\hat{w}^k_{h,a,a'}]_j.
\#
For notational simplicity, we denote $\phi=(\phi_1, \ldots, \phi_{d_{\rm o}})$ and write $\rho^k_{h,a,a'}$ in short for $\rho^{\overline{\pi}_k}_{h,a,a'}$. Then, we can rewrite \eqref{9121115} and \eqref{225101140} as
\$
\rho^k_{h,a,a'}(\cdot)=\phi(\cdot)^\top w^k_{h,a,a'}, \quad
\hat{\rho}^k_{h,a,a'}(\cdot)=\phi(\cdot)^\top \hat{w}^k_{h,a,a'}.
\$
Following the above definitions, we have
\#\label{22511241}
&\| \hat{\rho}^k_{h,a,a'} - \rho^k_{h,a,a'} \|_1 \\
&\quad=
\int_{\cO^3} 
|\phi(o_{h-1}, o_h, o_{h+1})^\top 
(\hat{w}^k_{h,a,a'}-w^k_{h,a,a'})|
\ud o_{h-1} \ud o_h \ud o_{h+1} \notag\\
&\quad\le
\int_{\cO^3} 
\|\phi(o_{h-1}, o_h, o_{h+1})\|_2 \cdot
\|\hat{w}^k_{h,a,a'}-w^k_{h,a,a'} \|_2
\ud o_{h-1} \ud o_h \ud o_{h+1} \notag\\
&\quad\le
d_{\rm o}  \cdot \|\hat{w}^k_{h,a,a'}-w^k_{h,a,a'} \|_2, \notag
\#
where the first inequality is by the Cauchy-Schwarz inequality and the last inequality uses
\$
&\int_{\cO^3} \|\phi(o_{h-1}, o_h, o_{h+1})\|_2 \ud o_{h-1} \ud o_h \ud o_{h+1}\\
&\quad\le\int_{\cO^3}\|\phi(o_{h-1}, o_h, o_{h+1})\|_1 \ud o_{h-1} \ud o_h \ud o_{h+1}\\
&\quad=
\sum_{j=1}^{d_{\rm o}}\int_{\cO^3}\phi_j(o_{h-1}, o_h, o_{h+1}) \ud o_{h-1} \ud o_h \ud o_{h+1}
=d_{\rm o},
\$
as $\{\phi_i\}_{i=1}^{d_{\rm o}}$ are distribution functions over $\cO^3$. Thus, to upper bound $\| \hat{\rho}^k_{h,a,a'} - \rho^k_{h,a,a'} \|_1$, it suffices to upper bound $\|\hat{w}^k_{h,a,a'}-w^k_{h,a,a'} \|_2$.

To this end, we define the vector $U^k_{h,a,a'}\in\RR^{d_{\rm o}}$ by
\#\label{22511251}
[U^k_{h,a,a'}]_i = \EE_{X\sim\phi_i, Y\sim\hat{\cD}^k_{h,a,a'}}[\cK(X, Y)],
\#
where $\hat{\cD}^k_{h,a,a'}$ is the empirical distribution over $\cO^3$ induced by the dataset $\cD^k_{h,a,a'}$. Then, we can rewrite the definition of $\hat{w}^k_{h,a,a'}$ in \eqref{225101139} as
\$
\hat{w}^k_{h,a,a'}=G^{-1}U^k_{h,a,a'},
\$
where the matrix $G\in \RR^{d_{\rm o}\times d_{\rm o}}$ is defined in \eqref{gmatdef}. Then, we can write
\#\label{22511255}
\| \hat{w}^k_{h,a,a'} - w^k_{h,a,a'}\|_2
=\| G^{-1}U^k_{h,a,a'} - G^{-1}Gw^k_{h,a,a'}\|_2
\le 1/\alpha \cdot
\|U^k_{h,a,a'} - Gw^k_{h,a,a'}\|_2.
\#
Moreover, by the definition of $G$ in \eqref{gmatdef}, we have
\#\label{22511246}
[Gw^k_{h,a,a'}]_{i}&=\sum_{j\in[d_{\rm o}]} [G]_{i,j} \cdot [w^k_{h,a,a'}]_j \\
&=\sum_{j\in[d_{\rm o}]}
\EE_{X\sim\phi_i, Y\sim\phi_j}[\cK(X,Y)] \cdot [w^k_{h,a,a'}]_j \notag\\
&=
\int_{\cO^3\times\cO^3} 
\phi_i(x) \cdot \cK(x, y) \cdot \sum_{j\in[d_{\rm o}]} \phi_j(y) \cdot [w^k_{h,a,a'}]_j  \ud x\ud y,\notag
\#
By the definition of $w^k_{h,a,a'}$ in \eqref{9121115}, we can write \eqref{22511246} as
\#\label{22511250}
[Gw^k_{h,a,a'}]_{i}=
\int_{\cO^3\times\cO^3} 
\phi_i(x) \cdot \cK(x, y) \cdot \rho^k_{h,a,a'}(y) \ud x\ud y
=\EE_{X\sim\phi_i, Y\sim \rho^k_{h,a,a'}}[\cK(X, Y)]
\#
Using the notation of the RKHS $\cH$, we can further rewrite \eqref{22511251} and \eqref{22511250} as
\$
[U^k_{h,a,a'}]_i
=\la \mathbb{K}\phi_i, \mathbb{K}\hat{\cD}^k_{h,a,a'}  \ra_\cH, 
\quad
[Gw^k_{h,a,a'}]_{i}=
\la \mathbb{K}\phi_i, \mathbb{K}\rho^k_{h,a,a'}  \ra_\cH.
\$
Recall that $\mathbb{K}$ is defined in \eqref{rkhsembedding}. Therefore, using the Cauchy-Schwarz inequality for the inner product in $\cH$, we have
\#\label{22511406}
\|U^k_{h,a,a'} - Gw^k_{h,a,a'}\|^2_2
&=\sum_{i\in[d_{\rm o}]}
\bigl(\la \mathbb{K}\phi_i, \mathbb{K}\hat{\cD}^k_{h,a,a'} 
- \mathbb{K}\rho^k_{h,a,a'}  \ra_\cH \bigr)^2 \\
&\le\sum_{i\in[d_{\rm o}]}
\| \mathbb{K}\phi_i \|^2_\cH  \cdot \|\mathbb{K}\hat{\cD}^k_{h,a,a'} 
- \mathbb{K}\rho^k_{h,a,a'}  \|^2_\cH. \notag
\#
Since $\cK$ is uniformly bounded by 1 as specified in Assumption \ref{asmp3}, we have
\#\label{9121122}
\| \mathbb{K} \phi_{i}\|_\cH^2
=\EE_{X\sim\phi_i, Y\sim\phi_i}[\cK(X,Y)]
\le1.
\#
In the sequel, we characterize $ \|\mathbb{K}\hat{\cD}^k_{h,a,a'} - \mathbb{K}\rho^k_{h,a,a'}  \|_\cH$ on the right-hand side of \eqref{22511406} for any fixed $(h,a,a')\in\{2,\ldots,H\}\times\cA^2$. For notational simplicity, we denote by $Y_j$ the data point that is added to $\cD_{h,a,a'}$ in the $j$-th iteration of Algorithm \ref{algo}. In other words, we have
\$
\cD^k_{h,a,a'}=\{Y_1, \ldots, Y_k\}, \quad 
\text{for any $k\in[K]$.}
\$
Then, the random function process $\{\cM_j\}_{j\ge1}$ defined by 
\#\label{martdef}
\cM_j(\cdot)=(1/k)\cdot \Bigl(\sum_{i=1}^{\min\{j,k\}} \cK(Y_i,\cdot)
-\sum_{i=1}^{\min\{j,k\}} (\mathbb{K} \rho^{\pi_{i-1}}_{h,a,a'})(\cdot) \Bigr)
\#
is a martingale in $\cH$ adapted to the data filtration $\{\cU_j\}_{j\ge0}$ of Algorithm \ref{algo}. In detail, for any $j\in[K]$, we have that $\cU_j$ contains the information of all data collected in the first $j$ iterations of Algorithm \ref{algo}. Note that the data point $Y_j$  follows from the distribution $\rho^{\pi_{j-1}}_{h,a,a'}$ conditioning on $\cU_{j-1}$. Therefore, we have
\$
&\EE[\cK(Y_j, x) - (\mathbb{K} \rho^{\pi_{j-1}}_{h,a,a'})(x) \,|\, \cU_{j-1}] =
\EE[\cK(Y_j, x) \,|\, \cU_{j-1}]
-\EE_{Y\sim \rho^{\pi_{j-1}}_{h,a,a'}}[\cK(Y, x)]=0,
\$
for any fixed $x\in\cO^3$, which implies that $\{\cM_j\}_{j\ge1}$ defined above is a martingale. Moreover, we have that the total quadratic variation of $\{\cM_j\}_{j\ge1}$ is upper bounded by
\$
\sum_{i=1}^k (1/k^2)\cdot \| \cK(Y_i,\cdot)
- (\mathbb{K}\rho^{\pi_i}_{h,a,a'})(\cdot) \|^2_\cH \le \sum_{i=1}^k (1/k^2)\cdot 4=4/k,
\$
where the inequality uses the fact
\$
&\| \cK(Y_i,\cdot)
- (\mathbb{K}\rho^{\pi_i}_{h,a,a'})(\cdot) \|^2_\cH\\
&\quad=\cK(Y_i, Y_i) + \EE_{Y\sim\rho^{\pi_i}_{h,a,a'},  Y'\sim\rho^{\pi_i}_{h,a,a'}}[ \cK(Y,Y') ]
-2\cdot\EE_{Y\sim\rho^{\pi_i}_{h,a,a'}}[\cK(Y_i, Y)] \le 4
\$
following the same argument of \eqref{9121122}. Then, invoking Lemma \ref{pinelis} with 
\$
c^2=4/k \quad \text{and}\quad
\varepsilon=\sqrt{8\log(2KHA^2/\delta)} \cdot k^{-1/2}
\$ 
for any $\delta>0$, with probability at least $1-\delta/(KHA^2)$, it holds that
\#\label{9121132}
\|\cM_k\|_\cH=
 \| \mathbb{K}\hat{\cD}^k_{h,a,a'} -   \mathbb{K}\rho^k_{h,a,a'} \|_\cH
\le \sqrt{8\log(2KHA^2/\delta)} \cdot k^{-1/2}.
\#
Here, the equality uses the definition of $\rho^k_{h,a,a'}=\rho^{\overline{\pi}_k}_{h,a,a'}$. Recall that $\overline{\pi}_k$ is the mixing policy that uniformly selects a policy from $\{\pi_0, \ldots, \pi_{k-1}\}$ at random, which implies
\$
\rho^k_{h,a,a'}=\frac{1}{k}\sum_{i=0}^{k-1} \rho^{\pi_i}_{h,a,a'}
\quad\text{and}\quad
\mathbb{K}\rho^k_{h,a,a'}=\frac{1}{k}\sum_{i=0}^{k-1} \mathbb{K}\rho^{\pi_i}_{h,a,a'}.
\$
Then, by further applying the union bound, we have that, with probability at least $1-\delta$, the inequality in \eqref{9121132} holds for any $(k,h,a,a')\in[K]\times\{2,\ldots,H\}\times\cA^2$. Combining such an upper bound with \eqref{22511255}, \eqref{22511406} and \eqref{9121122}, we have
\#\label{22511259}
\|\hat{w}^k_{h,a,a'} - w^k_{h,a,a'}\|_2
\le d^{1/2}_{\rm o}/\alpha \cdot \sqrt{ 8\log(2KHA^2/\delta)} \cdot k^{-1/2}.
\#

Combining \eqref{22511241} and \eqref{22511259}, we know that, for any $\delta>0$, we have
\$
\| \hat{\rho}^k_{h,a,a'} - \rho^k_{h,a,a'} \|_1
\le d_{\rm o}^{3/2}/\alpha \cdot \sqrt{ 8\log(2KHA^2/\delta)} \cdot k^{-1/2},
\$
for any $(k,h,a,a')\in[K]\times\{2,\ldots,H\}\times\cA^2$, with probability at least $1-\delta$. Therefore, we conclude the proof of Lemma \ref{event-lemma}.
\end{proof}

\subsection{Proof of Lemma \ref{vop-lemma}}\label{vop-lemma-p}
\begin{proof}
By the definition of $\mathbb{V}^\theta_{h,a}$ in \eqref{vopdef}, we have
\#\label{22510619}
\EE_{X\sim p}[ (\mathbb{F}^\theta_{h,a}f)(X) ]
=\int_{\cO^3} f(x) \cdot (\mathbb{V}^\theta_{h,a} \rho)(x) \ud x.
\#
for any $f\in L^\infty(\cO^3)$ and $\rho\in\Delta(\cO^3)$. By combining \eqref{22510619} and Lemma \ref{esteq-lemma-g}, we have
\$
\int_{\cO^3} f(x) \cdot (\mathbb{V}^\theta_{h,a} \rho^{\theta,\pi}_{h,a,a'}
 - \rho^{\theta,\pi}_{h,a,a'})(x) \ud x,
\$
for any $f\in L^\infty(\cO^3)$, which implies
\$
\| \mathbb{V}^\theta_{h,a} \rho^{\theta,\pi}_{h,a,a'}
 -  \rho^{\theta,\pi}_{h,a,a'}\|_1=0.
\$

For any $\rho\in L^1(\cO^3)$ such that $\|\rho\|_1=1$, we have
\#\label{22510837}
\| \mathbb{V}^\theta_{h,a} \rho \|_1
&=\int_{\cO^3}
\Bigl|\int_{\cO^2} \cB^{\theta}_{h, a}(o_h,\tilde{o}_h, \tilde{o}_{h+1})
 \cdot \rho(o_{h-1}, o_h, o_{h+1})
 \ud o_h\ud o_{h+1} \Bigr| \ud o_{h-1}\ud \to_h \ud \to_{h+1}\\
 &\le 
\int_{\cS\times\cO^5} 
p_{\theta}(\tilde{\bo}_h=\tilde{o}_h, \tilde{\bo}_{h+1}=\tilde{o}_{h+1} \,|\, \tilde{\bs}_h=\tilde{s}_h, \tilde{\ba}_h=a) \cdot |\cZ^\theta_h(\tilde{s}_h, o_h)| \notag\\
&\quad\qquad\qquad\qquad\qquad 
\cdot |\rho(o_{h-1}, o_h, o_{h+1})|
\ud \tilde{s}_h \ud o_h\ud o_{h+1}  \ud o_{h-1}\ud \to_h \ud \to_{h+1} \notag\\
&=
\int_{\cS\times\cO^3} 
 |\cZ^\theta_h(\tilde{s}_h, o_h)|
\cdot |\rho(o_{h-1}, o_h, o_{h+1})|
\ud \tilde{s}_h \ud o_h\ud o_{h+1}  \ud o_{h-1}, \notag
\#
where the inequality is by the definition of $\cB^\theta_{h,a}$ in \eqref{b-func-def}. Combining \eqref{22510837} with \eqref{22510404} from the proof of Lemma \ref{esteq-lemma-g}, we have
\$
\| \mathbb{V}^\theta_{h,a} \rho \|_1 \le
\int_{\times\cO^3} 
\gamma
\cdot |\rho(o_{h-1}, o_h, o_{h+1})|
 \ud o_h\ud o_{h+1}  \ud o_{h-1} =\gamma,
\$
which implies $\|\mathbb{V}^\theta_{h,a}\|_{1\rightarrow1}\le \gamma$.

Therefore, we conclude the proof of Lemma \ref{vop-lemma}.
\end{proof}

\subsection{Property of the Bridge Operator}
\begin{lemma}[Bridge Property]\label{bridge-lm}
Recall that we denote by $\sigma_{h-1}$ the event
\$
\overbtau_{h-1}=\overtau_{h-1}, \quad \ba_{h-1}=a_{h-1}.
\$
For any $(h,\theta,\overtau_{h-1},a_{h-1})\in[H]\times\Theta\times\overline{\Gamma}_{h-1}\times\cA$, we have
\$
\EE_\theta[\cZ^\theta_h(\tilde{s}_h, \bo_h) \,|\, \sigma_{h-1}]=p_{\theta}(\tilde{\bs}_h=\tilde{s}_h \,|\, \sigma_{h-1}).
\$
\end{lemma}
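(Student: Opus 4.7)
\textbf{Proof plan for Lemma \ref{bridge-lm}.} The plan is to rewrite the conditional expectation as an integral against the observation kernel, recognize the resulting expression as $\mZ^\theta_h$ applied to $\mO^\theta_h$ applied to the conditional state distribution, and then invoke the left-inverse property from Assumption \ref{asmp2}.

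First I would unfold the conditional expectation as
\$
\EE_\theta[\cZ^\theta_h(\tilde{s}_h, \bo_h) \,|\, \sigma_{h-1}]
=\int_{\cO} \cZ^\theta_h(\tilde{s}_h, o_h)\cdot p_\theta(\bo_h=o_h\,|\, \sigma_{h-1})\ud o_h.
\$
Let $f(s)=p_\theta(\bs_h=s\,|\,\sigma_{h-1})$. By the Markov structure of the POMDP (observations depend only on the current state) we have $p_\theta(\bo_h=o_h\,|\,\sigma_{h-1})=\int_\cS \cE^\theta_h(o_h\,|\,s)\cdot f(s)\ud s=(\mO^\theta_h f)(o_h)$, so the right-hand side above is exactly $(\mZ^\theta_h \mO^\theta_h f)(\tilde s_h)$.

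Next I would verify that $f\in{\rm linspan}(\psi)$ so that Assumption \ref{asmp2} applies. Marginalizing over $\bs_{h-1}$,
\$
f(s)=\int_\cS p_\theta(\bs_h=s\,|\,\bs_{h-1}, \ba_{h-1}=a_{h-1})\cdot p_\theta(\bs_{h-1}\,|\,\sigma_{h-1})\ud \bs_{h-1},
\$
which is a convex combination of elements of $\cF_{\rm s}$ and hence, by Assumption \ref{asmp1}, lies in ${\rm conh}(\psi)\subset{\rm linspan}(\psi)$. Therefore $\mZ^\theta_h \mO^\theta_h f=f$, yielding
\$
\EE_\theta[\cZ^\theta_h(\tilde s_h,\bo_h)\,|\,\sigma_{h-1}]=f(\tilde s_h)=p_\theta(\bs_h=\tilde s_h\,|\,\sigma_{h-1}).
\$

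Finally, I would note that by the construction of the independent replicate (Figure \ref{fig2}), $\tilde{\bs}_h$ has the same conditional distribution as $\bs_h$ given $(\bs_{h-1},\ba_{h-1})$, so marginalizing $\bs_{h-1}$ against $p_\theta(\bs_{h-1}\,|\,\sigma_{h-1})$ gives $p_\theta(\tilde{\bs}_h=\tilde s_h\,|\,\sigma_{h-1})=p_\theta(\bs_h=\tilde s_h\,|\,\sigma_{h-1})$, closing the identity. The only nontrivial step is the verification that the conditional distribution $p_\theta(\bs_h=\cdot\,|\,\sigma_{h-1})$ lies in the subspace on which $\mZ^\theta_h$ acts as a left inverse; once Assumption \ref{asmp1} is invoked to place this mixture in ${\rm conh}(\psi)$, everything else is a direct unfolding of definitions.
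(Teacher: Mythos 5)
Your proof is correct and follows essentially the same route as the paper: both reduce the claim to the left-inverse identity $\mZ^\theta_h\mO^\theta_h f=f$ of Assumption \ref{asmp2} applied to a conditional state distribution placed in ${\rm conh}(\psi)$ via Assumption \ref{asmp1}. The only (immaterial) difference is that the paper first conditions on $\bs_{h-1}$ so that $f$ is literally an element of $\cF_{\rm s}$ and integrates over $s_{h-1}$ at the end, whereas you apply the inverse directly to the mixture $p_\theta(\bs_h=\cdot\,|\,\sigma_{h-1})$ after checking that the mixture still lies in ${\rm conh}(\psi)$.
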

\begin{proof}
By the tower property of the expectation and the Markov property of the POMDP, we have
\#\label{220507109}
\EE_\theta[\cZ^\theta_h(\tilde{s}_h, \bo_h) \,|\, \sigma_{h-1}]
=\EE_\theta\bigl[ \EE_\theta[\cZ^\theta_h(\tilde{s}_h, \bo_h) \,|\, \bs_{h-1}, \ba_{h-1}=a_{h-1}] \,\big|\,\sigma_{h-1}\bigr].
\#
Note that, for any $s_{h-1}\in\cS$, we have
\#\label{invstep1}
&\EE_\theta[\cZ^\theta_h(\tilde{s}_h, \bo_h) 
\,|\, \bs_{h-1}=s_{h-1}, \ba_{h-1}=a_{h-1}]\\
&\quad=\int_\cO \cZ^\theta_h(\tilde{s}_h, o_h) \cdot p_\theta(\bo_h=o_h\,|\,\bs_{h-1}=s_{h-1}, \ba_{h-1}=a_{h-1})\ud o_h \notag\\
&\quad=\int_{\cO\times\cS} \cZ^\theta_h(\tilde{s}_h, o_h) \cdot 
\cE^\theta_h(o_h \,|\,s_h) 
 \cdot
p_\theta(\bs_h=s_h\,|\,\bs_{h-1}=s_{h-1}, \ba_{h-1}=a_{h-1}) \ud s_h \ud o_h. \notag
\#
We define the function $f: \cS\rightarrow\RR$ by
\$
f(s_h)=p_\theta(\bs_h=s_h\,|\,\bs_{h-1}=s_{h-1}, \ba_{h-1}=a_{h-1}),
\quad
\text{for any $s_h\in\cS$.}
\$
Then, we have $f\in\cF_{\rm s}$ and we can write the right-hand side of \eqref{invstep1} as
\$
(\mathbb{Z}^\theta_h \mathbb{O}^\theta_h f)(\tilde{s}_h)=f(\tilde{s}_h)
=p_\theta(\bs_h=\tilde{s}_h\,|\,\bs_{h-1}=s_{h-1}, \ba_{h-1}=a_{h-1})
\$
following Assumptions \ref{asmp1} and \ref{asmp2}. In other words, we have
\#\label{invstep2}
&\EE_\theta[\cZ^\theta_h(\tilde{s}_h, \bo_h) 
\,|\, \bs_{h-1}=s_{h-1}, \ba_{h-1}=a_{h-1}] \\
&\quad=p_\theta(\bs_h=\tilde{s}_h\,|\,\bs_{h-1}=s_{h-1}, \ba_{h-1}=a_{h-1}). \notag
\#
Combining \eqref{220507109} and \eqref{invstep2} and using the Markov property of the POMDP, we obtain
\$
&\EE_\theta[\cZ^\theta_h(\tilde{s}_h, \bo_h) \,|\, \sigma_{h-1}]\\
&\quad=
\int_\cS
p_\theta(\bs_h=\tilde{s}_h\,|\,\bs_{h-1}=s_{h-1}, \ba_{h-1}=a_{h-1})
\cdot p_\theta(\bs_{h-1}=s_{h-1}\,|\, \sigma_{h-1}) \ud s_{h-1}\\
&\quad=
\int_\cS p_\theta(\bs_h=\tilde{s}_h\,|\,\bs_{h-1}=s_{h-1}, \sigma_{h-1})
\cdot p_\theta(\bs_{h-1}=s_{h-1}\,|\, \sigma_{h-1}) \ud s_{h-1}\\
&\quad=
p_\theta(\bs_{h}=\tilde{s}_{h}\,|\, \sigma_{h-1}),
\$
which concludes the proof of lemma \ref{bridge-lm}.
\end{proof}

The following lemma is a variant of Lemma \ref{blemma}, which adds the state information to the expectation condition. 
\begin{lemma}[Variant of Lemma \ref{blemma}]\label{blemma-g}
For any $(h,\theta,\pi,s_{h-1},\overline{\tau}_{h-1},a_{h-1})\in[H]\times\Theta\times\Pi\times\cS\times\overline{\Gamma}_{h-1}\times\cA$ and $f\in L^\infty(\overline{\Gamma}_{h+1})$, we have
\$
\EE_\theta[(\mB^{\theta,\pi}_{h} f)(\overbtau_{h})-f(\overbtau_{h+1})
\,|\, \bs_{h-1}=s_{h-1}, \sigma_{h-1}] = 0.
\$
\end{lemma}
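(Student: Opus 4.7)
The plan is to mirror the proof of Lemma \ref{blemma}, carrying the extra conditioning on the event $\{\bs_{h-1} = s_{h-1}\}$ throughout. The key observation is that both the bridge property in Lemma \ref{bridge-lm} and the Markov-type conditional independence used in the original argument continue to hold when we augment $\sigma_{h-1}$ with $\{\bs_{h-1} = s_{h-1}\}$, because $\bs_{h-1}$ sits on the ``boundary'' between past and future in the DAG of Figure \ref{fig1}, so adding it to the conditioning only makes the factorizations cleaner rather than harder.

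First, I would expand $\EE_\theta[(\mB^{\theta,\pi}_h f)(\overbtau_h) \,|\, \bs_{h-1}=s_{h-1}, \sigma_{h-1}]$ by substituting the definitions \eqref{bdef} and \eqref{b-func-def}, which rewrites the expectation as an integral over $\cS \times \cO^3$ involving the kernel $\cZ^\theta_h(\tilde{s}_h, o_h)$ and the conditional law $p_\theta(\bo_h = o_h \,|\, \bs_{h-1}=s_{h-1}, \sigma_{h-1})$, exactly paralleling \eqref{9136240} in the proof of Lemma \ref{blemma}.

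Second, I would invoke a version of Lemma \ref{bridge-lm} strengthened by the extra conditioning. By the Markov property of the POMDP, $\bo_h$ depends on the past only through $\bs_h$, and $\bs_h$ depends on the past only through $(\bs_{h-1}, \ba_{h-1})$; hence conditioning on $\bs_{h-1}$ renders $\bo_h$ independent of the remaining variables in $\sigma_{h-1}$. The same derivation as in Lemma \ref{bridge-lm} then yields
\$
\EE_\theta[\cZ^\theta_h(\tilde{s}_h, \bo_h) \,|\, \bs_{h-1}=s_{h-1}, \sigma_{h-1}]
= p_\theta(\bs_h = \tilde{s}_h \,|\, \bs_{h-1}=s_{h-1}, \ba_{h-1}=a_{h-1}).
\$
Integrating out $o_h$ using this identity collapses the four-fold integral to a three-fold integral over $\cS \times \cO^2$, matching the counterpart of \eqref{913624}.

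Finally, I would use that $(\bo_h, \bo_{h+1})$ is conditionally independent of $\btau_{h-1}$ given $(\bs_h, \ba_h)$ to recognize the remaining integrand as the joint conditional density $p_{\theta,\pi}(\bo_h = \tilde{o}_h, \bo_{h+1} = \tilde{o}_{h+1} \,|\, \bs_{h-1}=s_{h-1}, \sigma_{h-1})$, and then conclude via the tower property (as in \eqref{220507755}) that the whole expression equals $\EE_\theta[f(\overbtau_{h+1}) \,|\, \bs_{h-1}=s_{h-1}, \sigma_{h-1}]$, which is the desired identity. The only mildly delicate step is verifying that the d-separation arguments still go through with $\bs_{h-1}$ added to the conditioning set; since $\bs_{h-1}$ is itself a separator node in the DAG of Figure \ref{fig1}, this introduces no difficulty beyond what already appeared in Lemma \ref{blemma}.
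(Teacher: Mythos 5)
Your proposal is correct and follows essentially the same route as the paper's proof: expand via \eqref{bdef}, apply the bridge identity of Lemma \ref{bridge-lm} with the extra conditioning on $\bs_{h-1}$ (which the paper obtains by reusing \eqref{invstep1}--\eqref{invstep2}), and then use the conditional independence of $(\bo_h,\bo_{h+1})$ from the past given $(\bs_h,\ba_h)$ to identify the remaining integral as $\EE_{\theta,\pi}[f(\overbtau_{h+1})\,|\,\bs_{h-1}=s_{h-1},\sigma_{h-1}]$. The only cosmetic difference is that no final tower-property step is needed here, since the target is $f(\overbtau_{h+1})$ itself rather than $(\PP^{\theta,\pi}_h f)(\overbtau_h)$.
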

\begin{proof}
The proof is very similar to the proof of Lemma \ref{blemma}. Following the notation in Lemma \ref{blemma}, by the definition of $\mB^{\theta,\pi}_{h}$ in \eqref{bdef}, we have
\#\label{91362402}
&\EE_{\theta}[ (\mB^{\theta,\pi}_{h} f)(\overbtau_h)\,|\, \bs_{h-1}=s_{h-1}, \sigma_{h-1}] \\
&\quad=
\int_{\cS\times\cO^3} f \bigl(\overtau_h^\dagger, \pi(\tau_h^\dagger), \to_{h+1} \bigr) \cdot 
p_{\theta}\bigl(\bo_h=\tilde{o}_h, \bo_{h+1}=\tilde{o}_{h+1} \,|\, \bs_h=\tilde{s}_h, \ba_h=\pi(\tau_h^\dagger) \bigr) \notag\\
&\quad\qquad\qquad\qquad
\cdot \cZ^\theta_{h}(\tilde{s}_h, o_h)
\cdot p_\theta(\bo_h=o_h \,|\,\bs_{h-1}=s_{h-1}, \sigma_{h-1}) \ud o_h  \ud \to_h \ud \to_{h+1} \ud \tilde{s}_h.
 \notag
\#
Here, by \eqref{invstep1}-\eqref{invstep2} in the proof of Lemma \ref{bridge-lm}, we have
\$
&\int_\cO \cZ^\theta_{h}(\tilde{s}_h, o_h)
\cdot p_\theta(\bo_h=o_h \,|\,\bs_{h-1}=s_{h-1},\sigma_{h-1})\ud o_h  \\
&\quad=\int_\cO \cZ^\theta_{h}(\tilde{s}_h, o_h)
\cdot p_\theta(\bo_h=o_h \,|\,\bs_{h-1}=s_{h-1}, \ba_{h-1}=a_{h-1})\ud o_h  \\
&\quad= p_\theta(\bs_h=\tilde{s}_h \,|\, \bs_{h-1}=s_{h-1}, \ba_{h-1}=a_{h-1})\\
&\quad= p_\theta(\bs_h=\tilde{s}_h \,|\, \bs_{h-1}=s_{h-1}, \sigma_{h-1}).
\$
Thus, we can rewrite \eqref{91362402} as
\#\label{9136242}
&\EE_{\theta}[ (\mB^{\theta,\pi}_{h} f)(\overbtau_h)\,|\,\bs_{h-1}=s_{h-1}, \sigma_{h-1}] \\
&\quad=
\int_{\cS\times\cO^2} f \bigl(\overtau_h^\dagger, \pi(\tau_h^\dagger), \to_{h+1} \bigr) \cdot 
p_{\theta}\bigl(\bo_h=\tilde{o}_h, \bo_{h+1}=\tilde{o}_{h+1} \,|\, \bs_h=\tilde{s}_h, \ba_h=\pi(\overtau_h^\dagger) \bigr) \notag\\
&\quad\qquad\qquad\qquad
\cdot p(\bs_h=\tilde{s}_h \,|\,\bs_{h-1}=s_{h-1},\sigma_{h-1})  \ud \to_h \ud \to_{h+1} \ud \tilde{s}_h
 \notag\\
 &\quad=
\int_{\cO^2} f \bigl(\overtau_h^\dagger, \pi(\tau_h^\dagger), \to_{h+1} \bigr) \cdot 
p_{\theta,\pi}(\bo_h=\tilde{o}_h, \bo_{h+1}=\tilde{o}_{h+1} \,|\, \bs_{h-1}=s_{h-1}, \sigma_{h-1} )
\ud \to_h \ud \to_{h+1}. \notag
\#
where the second equality uses the independence between $(\bo_h, \bo_{h+1})$ and $\btau_{h-1}$ conditioning on $(\bs_{h}, \ba_h)$. Replacing the notations $\to_h$ and $\to_{h+1}$ of the integral variables on the right-hand side of \eqref{9136242} by $o_h$ and $o_{h+1}$, respectively, we obtain
\#\label{2205077542}
&\EE_{\theta}[ (\mB^{\theta,\pi}_{h} f)(\overbtau_h)\,|\, \bs_{h-1}=s_{h-1},\sigma_{h-1}] \\
&\quad=
\int_{\cO^2} f \bigl(\overtau_h, \pi(\tau_h), o_{h+1} \bigr) \cdot 
p_{\theta,\pi}(\bo_h=o_h, \bo_{h+1}=o_{h+1} \,|\, \bs_{h-1}=s_{h-1}, \sigma_{h-1} )
\ud o_h \ud o_{h+1}\notag\\
&\quad=
\EE_{\theta,\pi}[f(\overbtau_{h+1}) \,|\, \bs_{h-1}=s_{h-1}, \sigma_{h-1}],\notag
\#
where we denote $\overtau_h=(\overtau_{h-1}, a_{h-1}, o_h)$ and $\tau_h=(\tau_{h-1}, o_h)$. Therefore, we conclude the proof of Lemma \ref{blemma-g}.
\end{proof}

\subsection{Properties of the Value Functions}
\begin{lemma}\label{911118}
For any $(h,\pi,\theta,\overtau_{h})\in[H]\times\Pi\times\Theta\times\overline{\Gamma}_{h}$, it holds that
\#\label{911126}
&V^{\theta,\pi}_{h}(\overtau_{h})=
\int_\cS \EE_{\theta,\pi}\Bigl[\sum_{i=1}^H \br_i\,\Big|\,
\bs_h=s_h, \overbtau_{h-1}=\overtau_{h-1}, \ba_{h-1}=a_{h-1}
\Bigr]
\cdot \cZ^\theta_h(s_h,o_h) \ud s_h.
\#
Here, we denote $\overtau_{h}=(\overtau_{h-1},a_{h-1},o_h)$ following the definition in \eqref{def-full-history}.

\end{lemma}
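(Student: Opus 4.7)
The plan is to proceed by backward induction on $h$ from $h=H$ down to $h=1$. For brevity, write
\$
u_h(s_h,\overtau_{h-1},a_{h-1}):=\EE_{\theta,\pi}\Bigl[\sum_{i=1}^H \br_i\,\Big|\, \bs_h=s_h,\overbtau_{h-1}=\overtau_{h-1},\ba_{h-1}=a_{h-1}\Bigr],
\$
so that the claim reduces to $V^{\theta,\pi}_h(\overtau_h)=\int_\cS u_h(s_h,\overtau_{h-1},a_{h-1})\cdot\cZ^\theta_h(s_h,o_h)\ud s_h$.

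For the base case $h=H$, start from $V^{\theta,\pi}_H=\mB^{\theta,\pi}_H R$, plug in \eqref{bdef} and \eqref{b-func-def}, and swap integrals to pull $\cZ^\theta_H(\tilde{s}_H,o_H)$ outside. Since $R(\overtau_{H+1})$ does not depend on $\tilde{o}_{H+1}$, the integration over $\tilde{o}_{H+1}$ collapses $p_\theta(\tilde{\bo}_H,\tilde{\bo}_{H+1}\,|\,\tilde{\bs}_H,\tilde{\ba}_H)$ to the marginal emission $\cE^\theta_H(\tilde{o}_H\,|\,\tilde{s}_H)$, and a direct expansion of the defining expectation identifies the remaining inner integral as $u_H(\tilde{s}_H,\overtau_{H-1},a_{H-1})$.

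For the inductive step, assume \eqref{911126} at level $h+1$ and start from the Bellman recursion \eqref{bellmanrecur}, $V^{\theta,\pi}_h=\mB^{\theta,\pi}_h V^{\theta,\pi}_{h+1}$. Expand via \eqref{bdef}--\eqref{b-func-def}, substitute the induction hypothesis for $V^{\theta,\pi}_{h+1}$, and factor $p_\theta(\tilde{\bo}_h,\tilde{\bo}_{h+1}\,|\,\tilde{\bs}_h,\tilde{\ba}_h)=\cE^\theta_h(\tilde{o}_h\,|\,\tilde{s}_h)\cdot p_\theta(\tilde{\bo}_{h+1}\,|\,\tilde{\bs}_h,\tilde{\ba}_h)$ using that $\bo_h$ depends only on $\bs_h$. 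After pulling $\cZ^\theta_h(\tilde{s}_h,o_h)$ outside, the crucial step is to integrate the factor $\cZ^\theta_{h+1}(s_{h+1},\tilde{o}_{h+1})$ supplied by the induction hypothesis against $p_\theta(\tilde{\bo}_{h+1}=\cdot\,|\,\tilde{\bs}_h=\tilde{s}_h,\tilde{\ba}_h=a)$. This observation distribution equals $\mO^\theta_{h+1}\cT^\theta_h(\cdot\,|\,\tilde{s}_h,a)$ by \eqref{emissionopdef}, and since $\cT^\theta_h(\cdot\,|\,\tilde{s}_h,a)\in\cF_{\rm s}\subset{\rm linspan}(\psi)$ by Assumption \ref{asmp1}, Assumption \ref{asmp2} yields
\$
\int_\cO \cZ^\theta_{h+1}(s_{h+1},\tilde{o}_{h+1})\cdot p_\theta(\tilde{\bo}_{h+1}=\tilde{o}_{h+1}\,|\,\tilde{\bs}_h=\tilde{s}_h,\tilde{\ba}_h=a)\ud\tilde{o}_{h+1}=\cT^\theta_h(s_{h+1}\,|\,\tilde{s}_h,a).
\$
What remains under the integral is exactly the joint conditional density of $(\bo_h,\ba_h,\bs_{h+1})$ given $\bs_h=\tilde{s}_h$ and $(\overbtau_{h-1},\ba_{h-1})$, so the inner integral equals $\EE_{\theta,\pi}[u_{h+1}(\bs_{h+1},\overbtau_h,\ba_h)\,|\,\bs_h=\tilde{s}_h,\overbtau_{h-1},\ba_{h-1}]$, which by the tower property of conditional expectation collapses to $u_h(\tilde{s}_h,\overtau_{h-1},a_{h-1})$, closing the induction.

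The main obstacle is the bridge inversion: one must recognize that the mixed emission $p_\theta(\tilde{\bo}_{h+1}\,|\,\tilde{\bs}_h,\tilde{\ba}_h)$ is the image under $\mO^\theta_{h+1}$ of a state density lying in $\cF_{\rm s}$, which is precisely the setting in which Assumption \ref{asmp2} may be invoked. A secondary bookkeeping hazard is that the action $a=\pi(\tau_{h-1},\tilde{o}_h)$ depends on the integration variable $\tilde{o}_h$, so the Fubini reorderings must keep the $\tilde{o}_h$ integration on the outside when the inverse bridge is applied to the inner $\tilde{o}_{h+1}$ integral.
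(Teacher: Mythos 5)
Your proposal is correct and follows essentially the same route as the paper's proof: backward induction with the base case obtained by marginalizing $\tilde{o}_{H+1}$ to reduce $\cB^\theta_{H,a}$ to the emission kernel, and the inductive step hinging on the bridge identity $\int_\cO \cZ^\theta_{h+1}(s_{h+1},\tilde{o}_{h+1})\cdot p_\theta(\tilde{\bo}_{h+1}=\tilde{o}_{h+1}\,|\,\tilde{\bs}_h,\tilde{\ba}_h)\ud\tilde{o}_{h+1}=\cT^\theta_h(s_{h+1}\,|\,\tilde{s}_h,a)$, justified via Assumptions \ref{asmp1} and \ref{asmp2} exactly as in the paper's display \eqref{220507510}, followed by the Markov/tower-property collapse. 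Your remark about keeping the $\tilde{o}_h$ integration outermost because $\pi(\tau_h^\dagger)$ depends on $\tilde{o}_h$ is a correct and worthwhile bookkeeping point that the paper handles implicitly.
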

\begin{proof}
We prove the lemma by induction over $h\in[H]$. When $h=H$, by the definition of the value function in \eqref{vdef} and the definition of $\mB^{\theta,\pi}_H$ in \eqref{bdef}, we have
\#\label{911126-1}
V^{\theta,\pi}_{H}(\overtau_{H})
&=(\mB^{\theta,\pi}_{H}R)(\overtau_{H}) \\
&=\int_{\cO^2}
\Bigl( r\bigl( \to_H, \pi(\tau^\dagger_H) \bigr) +\sum_{i=1}^{H-1} r(o_h,a_h) \Bigr) \cdot \cB^{\theta}_{h, \pi(\tau^\dagger_H)}(o_H,\to_H,\to_{H+1}) \ud \to_H\ud \to_{H+1} \notag \\
&=\int_{\cO}
\Bigl( r\bigl( \to_H, \pi(\tau^\dagger_H) \bigr) +\sum_{i=1}^{H-1} r(o_h,a_h) \Bigr) \cdot \Bigl( \int_\cO \cB^{\theta}_{h, \pi(\tau^\dagger_H)}(o_H,\to_H,\to_{H+1})  \ud \to_{H+1}\Bigr)\ud\to_H. \notag
\#
Recall that $\tau^\dagger_H$ is the tail-mirrored observation history defined in \eqref{tail-mirrored-def}. Note that, by the definition of $\{\cB^\theta_{H,a}\}_{a\in\cA}$ in \eqref{b-func-def}, we have
\#\label{911126-2}
&\int_\cO \cB^\theta_{h, \pi(\tau^\dagger_H)}(o_H,\to_H,\to_{H+1})\ud \to_{H+1}\\
&\quad=\int_{\cS\times\cO} 
 p_{\theta}(\tilde{\bo}_H=\tilde{o}_H, \tilde{\bo}_{H+1}=\to_{H+1} \,|\, \tilde{\bs}_H=s_H, \tilde{\ba}_H=a)
\cdot \cZ^\theta_{H}(s_H, o_H) \ud s_H\ud \to_{H+1} \notag\\
&\quad=\int_{\cS} 
p_{\theta}(\bo_h=\tilde{o}_H \,|\, \bs_H=s_H, \ba_H=a)
\cdot \cZ^\theta_{H}(s_H, o_H) \ud s_H, \notag
\#
where we use the fact that $\tilde{\bs}_H$, $\tilde{\ba}_H$, $\tilde{\bo}_{H}$, and $\tilde{\bo}_{H+1}$ in \eqref{b-func-def} have the same distribution of $\bs_H$, $\ba_H$, $\bo_H$, and $\bo_{H+1}$. Combining \eqref{911126-1} and \eqref{911126-2}, we have that \eqref{911126} holds for $h=H$.

Assume that \eqref{911126} holds when $h= j+1$ for some fixed $j\le H-1$. Then, by the definition of the value function in \eqref{vdef}, we have
\$
V^{\theta,\pi}_{j}(\overtau_{j})
&=(\mB^{\theta,\pi}_jV^{\theta,\pi}_{j+1})(\overtau_{j}),
\quad
\text{for any $\overtau_{j}\in\overline{\Gamma}_j$.}
\$
Applying the induction assumption and definition of $\mB^{\theta,\pi}_j$ in \eqref{bdef}, we obtain
\#\label{220507509}
(\mB^{\theta,\pi}_jV^{\theta,\pi}_{j+1})(\overtau_{j}) &=
\int_{\cS\times\cO^2} \EE_{\theta,\pi}\Bigl[\sum_{i=1}^H \br_i\,\Big|\,
\bs_{j+1}=s_{j+1}, \overbtau_{j}=\overtau^\dagger_{j}, \ba_{j}=\pi(\tau^\dagger_j)
\Bigr]
\cdot \cZ^\theta_h(s_{j+1},\to_{j+1}) \notag \\
&\qquad\qquad\qquad\qquad
\cdot \cB^{\theta}_{j, \pi(\tau^\dagger_j)} (o_j, \to_j, \to_{j+1})
 \ud s_{j+1} \ud \to_j \ud \to_{j+1}. 
\#
Recall that $\tau^\dagger_j$ and $\overtau^\dagger_j$ are the tail-mirrored observation history and full history defined in \eqref{tail-mirrored-def}, respectively.
Note that, by the definition of $\{\cB^\theta_{j,a}\}_{a\in\cA}$ in \eqref{b-func-def}, we have
\#\label{220507550}
 &\cB^{\theta}_{j, \pi(\tau^\dagger_j)} (o_j, \to_j, \to_{j+1})\\
 &\quad=\int_\cS p_{\theta}\bigl(\bo_j=\tilde{o}_j, \bo_{j+1}=\to_{j+1} \,|\, \bs_j=s_j, \ba_j=\pi(\tau^\dagger_j)\bigr)
\cdot \cZ^\theta_{j}(s_j, o_j)\ud s_j \notag\\
&\quad=\int_\cS 
\cE^\theta_h(\to_j\,|\,s_j)\cdot
p_{\theta}\bigl(\bo_{j+1}=\to_{j+1} \,|\, \bs_j=s_j, \ba_j=\pi(\tau^\dagger_j)\bigr)
\cdot \cZ^\theta_{j}(s_j, o_j)\ud s_j \notag
\#
Following the same argument in \eqref{invstep1}-\eqref{invstep2}, we have
\$
&\int_{\cO}\cZ^\theta_h(s_{j+1},\to_{j+1}) \cdot p_{\theta}\bigl(\bo_{j+1}=\to_{j+1} \,|\, \bs_j=s_j, \ba_j=\pi(\tau^\dagger_j)\bigr)
\ud \to_{j+1} \\
&\quad=
p_{\theta}\bigl(\bs_{j+1}=s_{j+1} \,|\, \bs_j=s_j, \ba_j=\pi(\tau^\dagger_j)\bigr),
\$
combining which with \eqref{220507550}, we obtain
\# \label{220507510}
&\int_\cO 
\cZ^\theta_h(s_{j+1},\to_{j+1})\cdot
\cB^{\theta}_{j, \pi(\tau^\dagger_j)} (o_j, \to_j, \to_{j+1}) \ud \to_{j+1} \notag\\
&\quad=
\int_\cS \cE^\theta_h(\to_j\,|\,s_j)\cdot p_{\theta}\bigl( \bs_{j+1}=s_{j+1} \,|\, \bs_j=s_j, \ba_j=\pi(\tau^\dagger_j)\bigr)
\cdot \cZ^\theta_{j}(s_j, o_j) \ud s_j \notag\\
&\quad=
\int_\cS p_{\theta}\bigl(\bo_j=\to_j, \bs_{j+1}=s_{j+1} \,|\, \bs_j=s_j, \ba_j=\pi(\tau^\dagger_j)\bigr)
\cdot \cZ^\theta_{j}(s_j, o_j) \ud s_j.
\#
Plugging \eqref{220507510} into the right-hand side of \eqref{220507509}, we obtain
\#\label{220507527}
&(\mB^{\theta,\pi}_jV^{\theta,\pi}_{j+1})(\overtau_{j}) \\
&\quad=
\int_{\cS^2\times\cO} \EE_{\theta,\pi}\Bigl[\sum_{i=1}^H \br_i\,\Big|\,
\bs_{j+1}=s_{j+1}, \overbtau_{j}=\overtau^\dagger_{j}, \ba_{j}=\pi(\tau^\dagger_j)
\Bigr] \notag \notag\\
&\quad\qquad\qquad
\cdot p_{\theta}\bigl(\bo_j=\to_j, \bs_{j+1}=s_{j+1} \,|\, \bs_j=s_j, \ba_j=\pi(\tau^\dagger_j)\bigr)
\cdot \cZ^\theta_{j}(s_j, o_j)
 \ud s_j \ud s_{j+1} \ud \to_j. \notag
\#
Using the Markov property of the POMDP, we can simplify the right-hand side of \eqref{220507527} to obtain
\$
(\mB^{\theta,\pi}_jV^{\theta,\pi}_{j+1})(\overtau_{j}) 
=\int_\cS \EE_{\theta,\pi}\Bigl[\sum_{i=1}^H \br_i\,\Big|\,
\bs_j=s_j, \overbtau_{j-1}=\overtau_{j-1}, \ba_{j-1}=a_{j-1}
\Bigr]
\cdot \cZ^\theta_j(s_j,o_j) \ud s_j,
\$
which implies that \eqref{911126} holds when $h=j$. 

Therefore, we conclude the proof of Lemma \ref{911118} by induction.
\end{proof}

\begin{lemma}\label{coro2}
For any $(h,\theta,\overtau_{h},\pi)\in[H]\times\Theta\times \overline{\Gamma}_h\times\Pi$, we have
\$
|V^{\theta,\pi}_{h}(\overtau_{h})| \le \gamma H.
\$
Recall that $\gamma$ is defined in Assumption \ref{asmp2}.
\end{lemma}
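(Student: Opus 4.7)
The plan is to apply Lemma \ref{911118} directly and then control the resulting integral using the operator norm bound in Assumption \ref{asmp2}. Concretely, Lemma \ref{911118} gives
\$
V^{\theta,\pi}_{h}(\overtau_{h})=
\int_\cS \EE_{\theta,\pi}\Bigl[\sum_{i=1}^H \br_i\,\Big|\,
\bs_h=s_h, \overbtau_{h-1}=\overtau_{h-1}, \ba_{h-1}=a_{h-1}
\Bigr]
\cdot \cZ^\theta_h(s_h,o_h)\,\ud s_h,
\$
so taking absolute values and using $\br_i\in[0,1]$ (hence the conditional expectation is bounded by $H$ in magnitude) yields
\$
|V^{\theta,\pi}_{h}(\overtau_{h})|\le H\cdot\int_\cS |\cZ^\theta_h(s_h,o_h)|\,\ud s_h.
\$

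Next I would bound the remaining $L^1$ integral by $\gamma$. This is exactly the computation already performed in the proof of Lemma \ref{esteq-lemma-g} (see equation \eqref{22510404}): for $\delta_{o_h}$ the Dirac measure at $o_h$,
\$
\int_\cS |\cZ^\theta_h(s_h,o_h)|\,\ud s_h
=\|\mZ^\theta_h \delta_{o_h}\|_1
\le \|\mZ^\theta_h\|_{1\mapsto1}\cdot\|\delta_{o_h}\|_1 \le \gamma,
\$
where the final inequality uses Assumption \ref{asmp2}. (If one prefers to avoid invoking Dirac masses explicitly, the same bound follows by approximating $\delta_{o_h}$ by a sequence in $L^1(\cO)$ of unit mass and taking limits, using only the operator-norm bound $\|\mZ^\theta_h\|_{1\mapsto 1}\le\gamma$.) Combining the two displays gives the stated bound $|V^{\theta,\pi}_h(\overtau_h)|\le \gamma H$.

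There is essentially no obstacle here: the content of the lemma is already encoded in Lemma \ref{911118} together with the operator-norm part of Assumption \ref{asmp2}, and the only mild care needed is in justifying the pointwise (in $o_h$) $L^1$-bound on $\cZ^\theta_h(\cdot,o_h)$, which the proof of Lemma \ref{esteq-lemma-g} has already carried out. The total reward bound $H$ comes for free from $r:\cO\times\cA\to[0,1]$.
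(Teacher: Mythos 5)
Your proposal is correct and matches the paper's proof essentially line for line: both apply Lemma \ref{911118}, bound the conditional expectation of the total reward by $H$, and bound $\int_\cS |\cZ^\theta_h(s_h,o_h)|\,\ud s_h$ by $\gamma$ via the identity $\int_\cS |\cZ^\theta_h(s_h,o_h)|\,\ud s_h = \|\mZ^\theta_h\delta_{o_h}\|_1 \le \gamma$, exactly as in \eqref{22510404}. No gaps.
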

\begin{proof}
By Lemma \ref{911118} and Assumption \ref{asmp2}, we have
\$
|V^{\theta,\pi}_{h}(\overtau_{h})|&=
\Bigl|\int_\cS \EE_{\theta,\pi}\Bigl[\sum_{i=1}^H \br_i\,\Big|\,
\bs_h=s_h, \overbtau_{h-1}=\overtau_{h-1}, \ba_{h-1}=a_{h-1}
\Bigr]
\cdot \cZ^\theta_h(s_h,o_h) \ud s_h \Bigr| \\
&\le H 
\cdot 
\int_\cS |\cZ^\theta_h(s_h,o_h)| \ud s_h
=H \cdot \| \mZ^\theta_h \delta_{o_h} \|_1 \le \gamma H \cdot \|\delta_{o_h} \|_1 =\gamma H,
\$
for any $(h,\theta,\overtau_{h},\pi)\in[H]\times\Theta\times \overline{\Gamma}_h\times\Pi$. Here, $\delta_{o_h}$ is the Dirac delta function defined on $\cO$, whose value is zero everywhere except at $o_h$, and whose integral over $\cO$ is equal to one. Thus, we conclude the proof of Lemma \ref{coro2}.
\end{proof}

\subsection{Properties of the State-Dependent Error}
\begin{lemma}\label{regret-transform}
For any $(k,h)\in[K]\times\{2,\ldots,H\}$, we have
\$
\EE_{\theta^*,\overline{\pi}_k}[e^k_h(\bs_h)]
&\le \gamma^2H\cdot\sum_{a,a'\in\cA}
\bigl\| \mathbb{V}^{\theta_k}_{h,a'} \rho^{\overline{\pi}_k}_{h,a,a'} - \rho^{\overline{\pi}_k}_{h,a,a'}\bigr\|_1.
\$
\end{lemma}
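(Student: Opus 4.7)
By the tower property and Lemma \ref{blemma-g} applied with parameter $\theta^*$ (which yields $\EE_{\theta^*,\pi_k}[(\mB^{\theta^*,\pi_k}_h V)(\overbtau_h)\,|\,\bs_{h-1},\sigma_{h-1}]=\EE_{\theta^*,\pi_k}[V(\overbtau_{h+1})\,|\,\bs_{h-1},\sigma_{h-1}]$ for $V=V^{\theta_k,\pi_k}_{h+1}$), $e^k_h(s_{h-1})$ equals the absolute conditional expectation of $((\mB^{\theta_k,\pi_k}_h-\mB^{\theta^*,\pi_k}_h)V)(\overbtau_h)$. Expanding $\mB$ via \eqref{bdef} and using the Markov identity $p_{\theta^*}(\bo_h\,|\,\bs_{h-1},\sigma_{h-1})=p_{\theta^*}(\bo_h\,|\,\bs_{h-1},\ba_{h-1})$, the inner conditional expectation can be written as a sum over $a'\in\cA$ of integrals of $V$ against the signed kernel
\[
R_{a_{h-1},a'}(s_{h-1};\tilde o_h,\tilde o_{h+1}):=\int_\cO\bigl(\cB^{\theta_k}_{h,a'}-\cB^{\theta^*}_{h,a'}\bigr)(o_h,\tilde o_h,\tilde o_{h+1})\,p_{\theta^*}(\bo_h=o_h\,|\,\bs_{h-1}=s_{h-1},\ba_{h-1}=a_{h-1})\,do_h.
\]

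Applying $|V|\le \gamma H$ from Lemma \ref{coro2}, trivially bounding the indicator $\ind\{\pi_k(\tau_{h-1},\tilde o_h)=a'\}$ by one, and using $p_{\pi_k}(\ba_{h-1}\,|\,\bs_{h-1})\le 1$ when taking the outer expectation give
\[
\EE_{\theta^*,\overline{\pi}_k}[e^k_h(\bs_{h-1})]\le \gamma H\sum_{a,a'\in\cA}\int_\cS\mu^{\overline{\pi}_k}_{h-1}(s)\,\|R_{a,a'}(s;\cdot,\cdot)\|_{L^1(\cO^2)}\,ds,
\]
where $\mu^{\overline{\pi}_k}_{h-1}$ is the marginal of $\bs_{h-1}$ under $(\theta^*,\overline{\pi}_k)$. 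The goal reduces to showing that each summand is at most $\gamma\cdot\|\mathbb{V}^{\theta_k}_{h,a'}\rho^{\overline{\pi}_k}_{h,a,a'}-\rho^{\overline{\pi}_k}_{h,a,a'}\|_1$, which supplies the second factor of $\gamma$.

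The crux is the identity
\[
\mu^{\overline{\pi}_k}_{h-1}(s)\,p_{\theta^*}(\bo_h=o_h\,|\,\bs_{h-1}=s,\ba_{h-1}=a) = \int_\cO \cZ^{\theta^*}_{h-1}(s,o_{h-1})\,p_{\theta^*,\overline{\pi}_k}(\bo_{h-1}=o_{h-1},\bo_h=o_h\,|\,\ba_{h-1}=a)\,do_{h-1},
\]
which follows from applying $\mZ^{\theta^*}_{h-1}\mO^{\theta^*}_{h-1}=\id$ on ${\rm linspan}(\psi)$ (Assumption \ref{asmp2}) to the function $s'\mapsto p_{\theta^*,\overline{\pi}_k}(\bs_{h-1}=s',\bo_h=o_h\,|\,\ba_{h-1}=a)$. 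This function belongs to ${\rm linspan}(\psi)$ because it is proportional to $p_{\theta^*,\overline{\pi}_k}(\bs_{h-1}\,|\,\bo_h,\ba_{h-1}=a)\in\cF'_{\rm s}\subset{\rm conh}(\psi)$, a case of Assumption \ref{asmp1}. Multiplying $R_{a,a'}(s;\cdot,\cdot)$ by $\mu^{\overline{\pi}_k}_{h-1}(s)$, substituting the identity inside the $o_h$-integral, and invoking Lemma \ref{vop-lemma} to recognize the resulting $\cO^3$-kernel as $\mathbb{V}^{\theta_k}_{h,a'}\rho^{\overline{\pi}_k}_{h,a,a'}-\rho^{\overline{\pi}_k}_{h,a,a'}$, the triangle inequality together with $\int_\cS|\cZ^{\theta^*}_{h-1}(s,o_{h-1})|\,ds\le\gamma$ (equivalent to $\|\mZ^{\theta^*}_{h-1}\|_{1\to 1}\le\gamma$ from Assumption \ref{asmp2}) absorbs the $s$-integration and yields the desired per-summand bound.

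The main obstacle is establishing the key identity—specifically, verifying the membership $s\mapsto p_{\theta^*,\overline{\pi}_k}(\bs_{h-1}=s,\bo_h=o_h\,|\,\ba_{h-1}=a)\in{\rm linspan}(\psi)$, which is the exact point where Assumptions \ref{asmp1} and \ref{asmp2} combine to convert the state-level weighting $\mu^{\overline{\pi}_k}_{h-1}$ into an observation-level kernel against $\rho^{\overline{\pi}_k}_{h,a,a'}$, thereby yielding the second factor of $\gamma$ beyond the first obtained from $|V|\le\gamma H$. Once this identity is in place, the remainder is routine Fubini swaps and triangle-inequality bounds.
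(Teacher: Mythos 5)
Your proposal is correct and follows essentially the same route as the paper's proof: both obtain the first factor of $\gamma$ from $|V^{\theta_k,\pi_k}_{h+1}|\le\gamma H$ (Lemma \ref{coro2}) after replacing the policy's actions by a sum over all $(a,a')$, and both obtain the second factor by observing that the relevant function of $s_{h-1}$ lies in ${\rm linspan}(\psi)$ via $\cF'_{\rm s}$ (Assumption \ref{asmp1}), inserting $\mZ^{\theta^*}_{h-1}\mO^{\theta^*}_{h-1}=\id$ with $\|\mZ^{\theta^*}_{h-1}\|_{1\to1}\le\gamma$, and identifying the resulting observation-level kernel with $\mathbb{V}^{\theta_k}_{h,a'}\rho^{\overline{\pi}_k}_{h,a,a'}-\rho^{\overline{\pi}_k}_{h,a,a'}$ via Lemma \ref{vop-lemma}. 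The only cosmetic difference is that you apply the left-inverse identity to the joint density $p_{\theta^*,\overline{\pi}_k}(\bs_{h-1}=\cdot,\bo_h=o_h\,|\,\ba_{h-1}=a)$ before integrating against $\Delta\cB$, whereas the paper integrates first and applies it to the resulting $f$; by linearity these are the same step.
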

\begin{proof}
By the definition of $e^k_h$ in \eqref{edef}, we have
\#\label{220509231}
e^k_h(s_{h-1})=\bigl|  
\EE_{\theta^*,\pi_k}[
(\mB^{\theta_k,\pi_k}_{h}V^{\theta_k,\pi_k}_{h+1})(\overbtau_h) - (\mB^{\theta^*,\pi_k}_{h} V^{\theta_k,\pi_k}_{h+1})(\overbtau_{h})\,|\,\bs_{h-1}=s_{h-1}] \bigr|,
\#
for any $s_{h-1}\in\cS$. Note that, by the definition of $\mB^{\theta,\pi}_{h}$ in \eqref{bdef}, we have
\#\label{220509449}
&\EE_{\theta^*,\pi_k}[
(\mB^{\theta_k,\pi_k}_{h}V^{\theta_k,\pi_k}_{h+1})(\overbtau_h) - (\mB^{\theta^*,\pi_k}_{h} V^{\theta_k,\pi_k}_{h+1})(\overbtau_{h})
\,|\,\bs_{h-1}=s_{h-1}, \overbtau_{h-1}=\overtau_{h-1}] \\
&\quad=\int_{\cO^3} V^{\theta_k,\pi_k}_{h+1}
\bigl(\overtau^\dagger_{h}, \pi_k(\tau^\dagger_{h}), \to_{h+1}\bigr)
\cdot \Delta \cB^{\theta_k,\theta^*}_{h, \pi_k(\tau^\dagger_{h})}(o_h,\to_h,\to_{h+1}) \notag\\
&\quad\qquad\qquad \cdot
p_{\theta^*}\bigl(\bo_h=o_h \,\big|\, \bs_{h-1}=s_{h-1}, \ba_{h-1}=\pi_k(\tau_{h-1})\bigr)\ud o_h \ud \to_h \ud \to_{h+1}, \notag
\#
for any $(s_{h-1},\overtau_{h-1})\in\cS\times\overline{\Gamma}_{h-1}$,
where $\overtau_{h}^\dagger=(\overtau_{h-1}, a_{h-1}, \tilde{o}_h)$,
$\tau_{h}^\dagger=(\tau_{h-1}, \tilde{o}_h)$, and
\#\label{220509317}
\Delta \cB^{\theta_k,\theta^*}_{h, \pi_k(\tau^\dagger_{h})}(o_h,\to_h,\to_{h+1})
=\cB^{\theta_k}_{h, \pi_k(\tau^\dagger_{h})}(o_h,\to_h,\to_{h+1})
- \cB^{\theta^*}_{h, \pi_k(\tau^\dagger_{h})}(o_h,\to_h,\to_{h+1}).
\#
By replacing the actions $\pi_k(\tau_{h-1})$ and $\pi_k(\tau^\dagger_{h})$ on the right-hand side of \eqref{220509449} by all possible action combinations, we have the inequality
\#\label{220509506}
&\bigl| \EE_{\theta^*,\pi_k}[
(\mB^{\theta_k,\pi_k}_{h}V^{\theta_k,\pi_k}_{h+1})(\overbtau_h) - (\mB^{\theta^*,\pi_k}_{h} V^{\theta_k,\pi_k}_{h+1})(\overbtau_{h})
\,|\,\bs_{h-1}=s_{h-1}, \overbtau_{h-1}=\overtau_{h-1}] \bigr| \\
&\quad\le \sum_{a,a'\in\cA} \Bigl|\int_{\cO^3} V^{\theta_k,\pi_k}_{h+1}
(\overtau^\dagger_{h}, a', \to_{h+1})
\cdot \Delta \cB^{\theta_k,\theta^*}_{h, a'}(o_h,\to_h,\to_{h+1}) \notag\\
&\quad\qquad\qquad\qquad\quad \cdot
p_{\theta^*}\bigl(\bo_h=o_h \,\big|\, \bs_{h-1}=s_{h-1}, \ba_{h-1}=a\bigr)\ud o_h \ud \to_h \ud \to_{h+1} \Bigr|. \notag
\#
Invoking Lemma \ref{coro2}, we can further upper bound the left-hand side of \eqref{220509506} as
\#\label{220509506}
&\bigl| \EE_{\theta^*,\pi_k}[
(\mB^{\theta_k,\pi_k}_{h}V^{\theta_k,\pi_k}_{h+1})(\overbtau_h) - (\mB^{\theta^*,\pi_k}_{h} V^{\theta_k,\pi_k}_{h+1})(\overbtau_{h})
\,|\,\bs_{h-1}=s_{h-1}, \overbtau_{h-1}=\overtau_{h-1}] \bigr| \\
&\quad\le \sum_{a,a'\in\cA} 
\sup_{(\overtau^\dagger_{h}, \to_{h+1})\in\overline{\Gamma}_h\times \cO} |V^{\theta_k,\pi_k}_{h+1}
(\overtau^\dagger_{h}, a', \to_{h+1})|
\cdot
\int_{\cO^2}  \Bigl| \int_\cO\Delta \cB^{\theta_k,\theta^*}_{h, a'}(o_h,\to_h,\to_{h+1}) \notag\\
&\quad\qquad\qquad\qquad\qquad\qquad\quad \cdot
p_{\theta^*}(\bo_h=o_h \,|\, \bs_{h-1}=s_{h-1}, \ba_{h-1}=a)\ud o_h  \Bigr| \ud \to_h \ud \to_{h+1} \notag \\
&\quad\le \sum_{a,a'\in\cA} 
\gamma H
\cdot
\int_{\cO^2}  \Bigl| \int_\cO\Delta \cB^{\theta_k,\theta^*}_{h, a'}(o_h,\to_h,\to_{h+1}) \notag\\
&\quad\qquad\qquad\qquad\qquad\qquad\quad \cdot
p_{\theta^*}(\bo_h=o_h \,|\, \bs_{h-1}=s_{h-1}, \ba_{h-1}=a)\ud o_h  \Bigr| \ud \to_h \ud \to_{h+1} \notag.
\#
Combining \eqref{220509231} and \eqref{220509506}, and applying Jensen's inequality, we obtain
\#\label{220509247}
e^k_h(s_{h-1})
&\le  \sum_{a,a'\in\cA} 
\gamma H
\cdot
\int_{\cO^2}  \Bigl| \int_\cO\Delta \cB^{\theta_k,\theta^*}_{h, a'}(o_h,\to_h,\to_{h+1}) \\
&\quad\qquad\qquad\qquad\qquad\quad \cdot
p_{\theta^*}(\bo_h=o_h \,|\, \bs_{h-1}=s_{h-1}, \ba_{h-1}=a)\ud o_h  \Bigr| \ud \to_h \ud \to_{h+1}, \notag
\#
for any $s_{h-1}\in\cS$. 

In the sequel, we characterize the expectation of both sides of \eqref{220509247} with respect to the marginal distribution of $\bs_{h-1}$. We define the function $f:\cS\rightarrow\RR$ by
\#\label{2205091147}
f(s_{h-1})=\int_\cO\Delta \cB^{\theta_k,\theta^*}_{h, a'}(o_h,\to_h,\to_{h+1})
\cdot p_{\theta^*, \overline{\pi}_k}(\bo_h=o_h , \bs_{h-1}=s_{h-1} \,|\, \ba_{h-1}=a)\ud o_h,
\#
for any $s_{h-1}\in\cS$. With $f$ defined above and the inequality in \eqref{220509247}, we have
\#\label{220509356}
\EE_{\theta^*,\overline{\pi}_k}[e^k_H(\bs_{h-1})]
=\gamma H\cdot \sum_{a,a'\in\cA} \int_{\cO} \|f\|_1 \ud \to_h \to_{h+1},
\#
where we take the expectation of both sides of \eqref{220509247} with respect to the marginal distribution of $\bs_{h-1}$, following the policy $\overline{\pi}_k$. Note that we can write the probability on the right-hand side of \eqref{2205091147} as
\$
&p_{\theta^*, \overline{\pi}_k}(\bo_h=o_h , \bs_{h-1}=s_{h-1} \,\big|\, \ba_{h-1}=a)\\
&\quad=
p_{\theta^*, \overline{\pi}_k}(\bs_{h-1}=s_{h-1} \,|\, \bo_h=o_h,\ba_{h-1}=a)
\cdot p_{\theta^*, \overline{\pi}_k}(\bo_h=o_h \,|\, \ba_{h-1}=a),
\$
which implies $f\in\cF'_{\rm s}\subset{\rm linspan}(\{\psi_i\}_{i=1}^{d_{\rm s}})$ following Assumption \ref{asmp1}. Then, by further applying Assumption \ref{asmp2}, we obtain
\#\label{220509357}
\|f\|_1 = \| \mZ^\theta_{h-1} \mO^\theta_{h-1} f\|_1
\le \gamma \cdot \| \mO^\theta_{h-1} f\|_1.
\#
With $f$ defined in \eqref{2205091147} and $\mO^\theta_{h-1}$ defined in \eqref{emissionopdef}, we can write
\#\label{220509351}
&\|\mO^\theta_{h-1} f\|_1 \\
&\quad=\int_\cO\Bigl|\int_{\cO\times\cS}\Delta \cB^{\theta_k,\theta^*}_{h, a'}(o_h,\to_h,\to_{h+1})
\cdot p_{\theta^*, \overline{\pi}_k}(\bo_h=o_h, \bs_{h-1}=s_{h-1} \,|\, \ba_{h-1}=a)  \notag\\
&\quad\qquad\qquad\qquad\qquad
\cdot\cE^\theta_{h-1}(o_{h-1}\,|\,s_{h-1}) \ud o_h \ud s_{h-1} \Bigr|
\ud o_{h-1} \notag\\
&\quad=\int_\cO\Bigl|\int_{\cO}\Delta \cB^{\theta_k,\theta^*}_{h, a'}(o_h,\to_h,\to_{h+1})
\cdot p_{\theta^*, \overline{\pi}_k}(\bo_{h-1}=o_{h-1}, \bo_h=o_h \,|\, \ba_{h-1}=a)\ud o_h \Bigr| \ud o_{h-1}  \notag
\#
Here, we can rewrite the probability on the right-hand side as
\$
&p_{\theta^*, \overline{\pi}_k}(\bo_{h-1}=o_{h-1}, \bo_h=o_h \,|\, \ba_{h-1}=a)\\
&\quad=\int_{\cO}
p_{\theta^*, \overline{\pi}_k}(\bo_{h-1}=o_{h-1}, \bo_h=o_h,
\bo_{h+1}=o_{h+1}
 \,|\, \ba_{h-1}=a, \ba_h=a') \ud o_{h+1}\\
 &\quad=\int_\cO \rho^{\overline{\pi}_k}_{h,a,a'}(o_{h-1},o_h,o_{h+1})\ud o_{h+1}.
\$
Recall that $\Delta\cB^{\theta_k,\theta^*}_{h,a'}$ is defined in \eqref{220509317}. Then, by applying the definition of $\mathbb{V}^{\theta}_{h,a'}$ in \eqref{vopdef} for $\theta=\theta_k$ and $\theta=\theta^*$ to the right-hand side of \eqref{220509351} and integrating for $\to_h, \to_{h+1}$ over $\cO^2$ for both sides, we have
\#\label{220509358}
\int_{\cO^3} \|\mO^\theta_{h-1} f\|_1 \ud \to_h \ud \to_{h+1}
=\| \mathbb{V}^{\theta_k}_{h,a'} \rho^{\overline{\pi}_k}_{h,a,a'}
- \mathbb{V}^{\theta^*}_{h,a'} \rho^{\overline{\pi}_k}_{h,a,a'} \|_1
=\| \mathbb{V}^{\theta_k}_{h,a'} \rho^{\overline{\pi}_k}_{h,a,a'}
-  \rho^{\overline{\pi}_k}_{h,a,a'} \|_1,
\#
where the second equality is by Lemma \ref{vop-lemma}. Then, by combining \eqref{220509356}, \eqref{220509357} and \eqref{220509358}, we obtain
\$
\EE_{\theta^*, \overline{\pi}_k}[e^k_h(\bs_{h-1})]
&\le\gamma H \cdot\sum_{a,a'\in\cA} \int_{\cO^3} \| f\|_1 \ud \to_h \ud \to_{h+1} \\
&\le 
\gamma^2 H \cdot\sum_{a,a'\in\cA} \int_{\cO^3} \| \mO^\theta_{h-1}f\|_1 \ud \to_h \ud \to_{h+1}\\
&=
\gamma^2 H \cdot\sum_{a,a'\in\cA} \| \mathbb{V}^{\theta_k}_h \rho^{\overline{\pi}_k}_{h,a,a'}
-  \rho^{\overline{\pi}_k}_{h,a,a'} \|_1,
\$
which concludes the proof of Lemma \ref{regret-transform}.
\end{proof}

\begin{lemma}\label{eupperbound}
For any $(k,h)\in[K]\times\{2,\ldots,H\}$ and $s_{h-1}\in\cS$, we have
\$
e^k_h(s_{h-1}) \le 2\gamma H.
\$
\end{lemma}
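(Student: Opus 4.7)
The plan is to split $e^k_h(s_{h-1})$ by the triangle inequality into $|T_1| + |T_2|$, where
$T_i = \EE_{\theta^*, \pi_k}[(\mB^{\theta^{(i)}, \pi_k}_{h} V^{\theta_k, \pi_k}_{h+1})(\overbtau_h) \,|\, \bs_{h-1} = s_{h-1}]$
with $\theta^{(1)} = \theta_k$ and $\theta^{(2)} = \theta^*$, and to bound each of $|T_1|,|T_2|$ by $\gamma H$. For $T_2$, the operator parameter and the expectation parameter coincide, so I would use the tower property to first condition on $\sigma_{h-1}$ and invoke Lemma \ref{blemma-g} to collapse the inner conditional expectation to $\EE_{\theta^*,\pi_k}[V^{\theta_k,\pi_k}_{h+1}(\overbtau_{h+1}) \,|\, \bs_{h-1}=s_{h-1}, \sigma_{h-1}]$; Lemma \ref{coro2} then immediately gives $|T_2| \le \gamma H$.

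For $T_1$ the operator and expectation parameters disagree, and the naive estimate $\|\mB^{\theta_k,\pi_k}_h\|_{\infty\to\infty}\cdot \|V^{\theta_k,\pi_k}_{h+1}\|_\infty$ pays two factors of $\gamma$, yielding only $\gamma^2 H$. To save one factor of $\gamma$, I would exploit the factored representation from Lemma \ref{911118}, writing
$V^{\theta_k, \pi_k}_{h+1}(\overtau_h^\dagger, \pi_k(\tau_h^\dagger), \tilde{o}_{h+1}) = \int_\cS Q(s_{h+1}, \overtau_h^\dagger, \pi_k(\tau_h^\dagger)) \cdot \cZ^{\theta_k}_{h+1}(s_{h+1}, \tilde{o}_{h+1}) \,\ud s_{h+1}$
with $|Q| \le H$, and substituting into \eqref{bdef} together with \eqref{b-func-def}. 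The inner $\tilde{o}_{h+1}$-integral then simplifies to $\int_\cS \cZ^{\theta_k}_h(\tilde{s}_h, o_h)\cdot \cE^{\theta_k}_h(\tilde{o}_h\,|\,\tilde{s}_h)\cdot (\mZ^{\theta_k}_{h+1}\mO^{\theta_k}_{h+1} \cT^{\theta_k}_h(\cdot\,|\,\tilde{s}_h, a))(s_{h+1}) \,\ud \tilde{s}_h$ with $a = \pi_k(\tau_h^\dagger)$. Since $\cT^{\theta_k}_h(\cdot\,|\,\tilde{s}_h, a)\in \cF_{\rm s}\subset {\rm linspan}(\psi)$ by Assumption \ref{asmp1}, the first bullet of Assumption \ref{asmp2} gives $\mZ^{\theta_k}_{h+1}\mO^{\theta_k}_{h+1} \cT^{\theta_k}_h(\cdot\,|\,\tilde{s}_h, a) = \cT^{\theta_k}_h(\cdot\,|\,\tilde{s}_h, a)$, so the bracket collapses to the probability density $\cT^{\theta_k}_h(s_{h+1}\,|\,\tilde{s}_h, a)$.

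After this collapse, $(\mB^{\theta_k,\pi_k}_h V^{\theta_k,\pi_k}_{h+1})(\overtau_h)$ reduces to $\int_\cS \cZ^{\theta_k}_h(\tilde{s}_h, o_h)\cdot \hat{g}(\tilde{s}_h)\,\ud \tilde{s}_h$, where $\hat{g}$ absorbs the remaining integrals of $Q$ against the emission and transition densities (both probability measures) and therefore still satisfies $|\hat{g}| \le H$. The $L^1$ bound $\int_\cS |\cZ^{\theta_k}_h(\tilde{s}_h, o_h)| \,\ud\tilde{s}_h = \|\mZ^{\theta_k}_h \delta_{o_h}\|_1 \le \gamma$ (already recorded in \eqref{22510404}) then yields the pointwise estimate $\|\mB^{\theta_k,\pi_k}_h V^{\theta_k,\pi_k}_{h+1}\|_\infty \le \gamma H$, hence $|T_1|\le \gamma H$ and $e^k_h(s_{h-1}) \le 2\gamma H$. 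The main obstacle will be executing the $\mZ\mO$-collapse carefully: one has to recognize that only the step-$(h{+}1)$ observation can be integrated away via the invertibility of $\mO^{\theta_k}_{h+1}$ on ${\rm linspan}(\psi)$, while the remaining $\cZ^{\theta_k}_h(\tilde{s}_h, o_h)$ factor is genuinely needed and is what contributes the single $\gamma$; without the Lemma \ref{911118} factorization of $V$ this cancellation is invisible and one is stuck with $\gamma^2 H$.
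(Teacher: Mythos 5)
Your proof is correct and follows the same decomposition as the paper: split via the triangle inequality and bound each term by $\gamma H$, handling the $\theta^*$-operator term exactly as the paper does (Lemma \ref{blemma-g} plus Lemma \ref{coro2}). The only real difference is that for the $\theta_k$-operator term you re-derive the single-$\gamma$ bound by unrolling the factorization of Lemma \ref{911118} and performing the $\mZ^{\theta_k}_{h+1}\mO^{\theta_k}_{h+1}$-collapse by hand, whereas the paper simply notes that $\mB^{\theta_k,\pi_k}_{h}V^{\theta_k,\pi_k}_{h+1}=V^{\theta_k,\pi_k}_{h}$ is the definitional Bellman recursion \eqref{bellmanrecur}, so Lemma \ref{coro2} applies directly and the feared $\gamma^2 H$ estimate never enters.
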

\begin{proof}
For any $(k,h,s_{h-1},\overtau_{h-1},a_{h-1})\in[K]\times\{2,\ldots,H\}\times\cS\times\overline{\Gamma}_{h-1}\times\cA$, we have
\#\label{2205121136}
&\bigl|\EE_{\theta^*}[( \mB^{\theta_k,\pi_k}_{h}V^{\theta_k,\pi_k}_{h+1} )(\overbtau_{h})
\,|\, \bs_{h-1}=s_{h-1}, \overbtau_{h-1}=\overtau_{h-1},
\ba_{h-1}=a_{h-1} ] \bigr| \\
&\quad=\bigl| \EE_{\theta^*}[ V^{\theta_k,\pi_k}_{h} (\overbtau_{h})
\,|\, \bs_{h-1}=s_{h-1}, \overbtau_{h-1}=\overtau_{h-1},
\ba_{h-1}=a_{h-1} ]  \bigr| \le \gamma H, \notag
\#
where the equality uses the definition of the value function in \eqref{vdef} and the inequality is by Lemma \ref{coro2}. Similarly, by Lemma \ref{blemma-g}, we have
\#\label{2205121137}
&\bigl| \EE_{\theta^*}[ (\mB^{\theta^*,\pi_k}_{h} V^{\theta_k,\pi_k}_{h+1})(\overbtau_{h})
\,|\, \bs_{h-1}=s_{h-1}, \overbtau_{h-1}=\overtau_{h-1},
\ba_{h-1}=a_{h-1} ] \bigr| \\
&\quad=\bigl| \EE_{\theta^*, \pi_k}[ V^{\theta_k,\pi_k}_{h+1}(\overbtau_{h+1})
\,|\, \bs_{h-1}=s_{h-1}, \overbtau_{h-1}=\overtau_{h-1},
\ba_{h-1}=a_{h-1} ] \bigr|  \le \gamma H. \notag
\#
Combining \eqref{2205121136} and \eqref{2205121137}, and using the triangle inequality, we have
\$
\bigl| \EE_{\theta^*}[ (\mB^{\theta_k,\pi_k}_{h}V^{\theta_k,\pi_k}_{h+1} - \mB^{\theta^*,\pi_k}_{h} V^{\theta_k,\pi_k}_{h+1})(\overbtau_{h})
\,|\, \bs_{h-1}=s_{h-1}, \overbtau_{h-1}=\overtau_{h-1},
\ba_{h-1}=a_{h-1} ] \bigr| \le 2\gamma H,
\$
which, by Jensen's inequality, implies
\$
e^k_h(s_{h-1})= \bigl|  \EE_{\theta^*,\pi_k}[(\mB^{\theta_k,\pi_k}_{h}V^{\theta_k,\pi_k}_{h+1}
 - \mB^{\theta^*,\pi_k}_{h} V^{\theta_k,\pi_k}_{h+1})(\overbtau_{h})
\,|\, \bs_{h-1}=s_{h-1}]  \bigr| \le 2\gamma H.
\$
Therefore, we conclude the proof of Lemma \ref{eupperbound}.
\end{proof}

\subsection{Concentration Inequality}

\begin{lemma}\label{pinelis}
Suppose that $\{\cM_j\}_{j\ge1}$ is a martingale defined on a Hilbert space $\cH$. For any $c>0$, if we have
\#\label{912514}
\sum_{j=1}^\infty \|\cM_{j+1}-\cM_j\|^2_\cH \le c^2,
\#
then, for any $\varepsilon>0$, it holds that
\$
\cP\Bigl(\sup_{j\ge1}\|\cM_j\|_\cH \ge\varepsilon \Bigr)\le 2\exp\Bigl\{-\frac{\varepsilon^2}{2c^2}\Bigr\}.
\$
\end{lemma}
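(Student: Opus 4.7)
The statement is the Pinelis--Azuma concentration inequality for martingales valued in a Hilbert space, and my plan is the classical exponential-supermartingale approach, combining (i) construction of a nonnegative supermartingale, (ii) Doob's maximal inequality, and (iii) optimization of a scalar parameter.

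For each $\lambda>0$ I introduce the process
\[
Y_n \;=\; \cosh\bigl(\lambda\,\|\cM_n\|_{\cH}\bigr)\cdot\exp\!\Bigl(-\tfrac{\lambda^2}{2}V_n\Bigr),\qquad V_n\;=\;\sum_{j=1}^{n-1}\|\cM_{j+1}-\cM_j\|_{\cH}^2,
\]
adapted to the natural filtration $\{\cU_n\}$ of $\{\cM_n\}$, with the convention $\cM_0=0$ (after at most absorbing $\|\cM_1\|_{\cH}^2$ into the constant $c^2$). The heart of the argument is the claim that $\{Y_n\}_{n\ge 1}$ is a nonnegative supermartingale.

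Given this claim, Doob's maximal inequality for nonnegative supermartingales yields, for every $N\ge1$ and $t>0$,
\[
\P\Bigl(\sup_{1\le j\le N}Y_j\ge t\Bigr)\le \EE[Y_1]/t\le 1/t.
\]
Since \eqref{912514} forces $V_n\le c^2$ almost surely, on the event $\{\sup_{1\le j\le N}\|\cM_j\|_{\cH}\ge\varepsilon\}$ we have $\sup_j Y_j\ge\cosh(\lambda\varepsilon)\,e^{-\lambda^2c^2/2}\ge\tfrac12 e^{\lambda\varepsilon-\lambda^2c^2/2}$. Choosing $\lambda=\varepsilon/c^2$ and passing $N\to\infty$ by monotone convergence then produces the stated bound $2\exp(-\varepsilon^2/(2c^2))$.

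The technical core---and the main obstacle---is the supermartingale property. Writing $D_n:=\cM_{n+1}-\cM_n$ and using $V_{n+1}=V_n+\|D_n\|_{\cH}^2$, it reduces to the conditional estimate
\[
\EE\bigl[\cosh\bigl(\lambda\|\cM_n+D_n\|_{\cH}\bigr)\,\big|\,\cU_n\bigr]\;\le\;\cosh\bigl(\lambda\|\cM_n\|_{\cH}\bigr)\,\exp\!\Bigl(\tfrac{\lambda^2}{2}\|D_n\|_{\cH}^2\Bigr).
\]
I plan to first establish the pointwise comparison lemma $\EE[\cosh(\lambda\|a+X\|_{\cH})]\le\cosh(\lambda\|a\|_{\cH})\cdot\EE[\cosh(\lambda\|X\|_{\cH})]$ for any deterministic $a\in\cH$ and any centered Hilbert-valued $X$; its proof uses the inner-product expansion $\|a+X\|^2=\|a\|^2+2\langle a,X\rangle+\|X\|^2$ combined with a symmetrization against an independent copy of $-X$ and the convexity of $\cosh$, both of which rely on the Hilbert structure so that the cross term $2\langle a,X\rangle$ vanishes in expectation. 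Chaining this with the scalar Hoeffding bound $\cosh(\lambda\|X\|_{\cH})\le\exp(\lambda^2\|X\|_{\cH}^2/2)$, applied conditionally with $a=\cM_n$ and $X=D_n\mid\cU_n$, yields the displayed conditional estimate and thereby the supermartingale property. The subtle point is precisely this Hilbert-space comparison lemma---in a general Banach space only a weaker form (with an extra constant) is available.
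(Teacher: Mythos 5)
The paper does not prove this lemma at all: it simply cites it as a special case of Theorem 3.5 in Pinelis (1994). Your attempt at a self-contained proof follows the right architecture --- the $\cosh$-based exponential supermartingale, Doob's maximal inequality, the bound $\cosh(\lambda\varepsilon)\ge\tfrac12 e^{\lambda\varepsilon}$, and the choice $\lambda=\varepsilon/c^2$ --- and this is indeed the skeleton of Pinelis's argument. However, the technical core you isolate is broken. The comparison lemma $\EE[\cosh(\lambda\|a+X\|_{\cH})]\le\cosh(\lambda\|a\|_{\cH})\cdot\EE[\cosh(\lambda\|X\|_{\cH})]$ is \emph{false} for merely centered $X$, already in $\cH=\RR$: take $a=1$ and $X=2$ with probability $1/3$, $X=-1$ with probability $2/3$ (so $\EE X=0$). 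At $\lambda=1$ the left side is $\tfrac13\cosh(3)+\tfrac23\approx 4.02$ while the right side is $\cosh(1)\bigl[\tfrac13\cosh(2)+\tfrac23\cosh(1)\bigr]\approx 3.52$. The identity $\cosh(\lambda(a+x))=\cosh(\lambda a)\cosh(\lambda x)+\sinh(\lambda a)\sinh(\lambda x)$ shows why: the cross term $\sinh(\lambda a)\,\EE[\sinh(\lambda X)]$ need not be $\le 0$ when $X$ is centered but asymmetric. The lemma \emph{is} true for \emph{symmetric} $X$ (average over $\pm X$, use $\|a+v\|^2+\|a-v\|^2=2\|a\|^2+2\|v\|^2$ and the convexity of $u\mapsto\cosh(\lambda\sqrt{u})$), but the symmetrization $X\mapsto X-X'$ you propose doubles the sup-norm of the increment, so after chaining with $\cosh(u)\le e^{u^2/2}$ you only reach $2\exp(-\varepsilon^2/(8c^2))$, not the stated $2\exp(-\varepsilon^2/(2c^2))$. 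Pinelis's actual key step is additive rather than multiplicative: a second-order Taylor expansion of $t\mapsto\EE\cosh(\lambda\|x+td\|_{\cH})$ exploiting the $2$-smoothness of the Hilbert norm, giving $\EE[\cosh(\lambda\|x+d\|_{\cH})]\le\cosh(\lambda\|x\|_{\cH})+\tfrac{\lambda^2}{2}\,(\text{second-moment term})$, which is where the sharp constant comes from.

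Two smaller issues. First, your ``reduction'' of the supermartingale property to the displayed conditional estimate is not well-formed: $\|D_n\|_{\cH}^2$ is not $\cU_n$-measurable, so it cannot sit outside the conditional expectation on the right-hand side. You must replace it by a predictable (e.g.\ essential-supremum) bound $b_n^2$ on $\|D_n\|_{\cH}^2$ and define $V_n$ with the $b_j^2$ rather than the realized increments; this is exactly why Pinelis's hypothesis is $\sum_j\|d_j\|_\infty^2\le c^2$ rather than a pathwise bound on the realized quadratic variation. (The paper's application satisfies the stronger hypothesis, since each increment there is deterministically bounded by $2/k$.) Second, your handling of $\EE[Y_1]\le 1$ requires starting the martingale at $\cM_0=0$ and including $\|\cM_1\|_{\cH}^2$ in the quadratic-variation budget, which you flag correctly but should make explicit, since the lemma's sum starts at $j=1$.
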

\begin{proof}
The lemma is a special case of Theorem 3.5 in \cite{pinelis1994optimum} (see also, Theorem 3 in \cite{pinelis1992approach}), which is a more general result for martingales in Banach spaces. 
\end{proof}

\end{document}